\theoremstyle{plain}
\newtheorem{theorem}{Theorem}[section]
\newtheorem{proposition}[theorem]{Proposition}
\theoremstyle{definition}
\theoremstyle{remark}
\icmltitlerunning{Residual-Conditioned Optimal Transport}
\begin{document}
	\twocolumn[
	\icmltitle{Residual-Conditioned Optimal Transport: Towards\\ Structure-Preserving Unpaired and Paired Image Restoration}
	
	
	\icmlsetsymbol{equal}{*}
	\begin{icmlauthorlist}
		\icmlauthor{Xiaole Tang}{xjtu}
		\icmlauthor{Xin Hu}{xjtu}
		\icmlauthor{Xiang Gu}{xjtu}
		\icmlauthor{Jian Sun}{xjtu}
	\end{icmlauthorlist}
	
	\icmlaffiliation{xjtu}{School of Mathematics and Statistics, Xi’an Jiaotong University, Xi’an, China}
	
	\icmlcorrespondingauthor{Jian Sun}{jiansun@xjtu.edu.cn}

	\icmlkeywords{Machine Learning, ICML}
	
	\vskip 0.3in
	]
	
	
	
	\printAffiliationsAndNotice{} 

\begin{abstract}
		Deep learning-based image restoration methods generally struggle with faithfully preserving the structures of the original image.  In this work, we propose a novel Residual-Conditioned Optimal Transport (RCOT) approach, which models image restoration as an optimal transport (OT) problem for both unpaired and paired settings,  introducing the transport residual as a unique degradation-specific cue for both the transport cost and the transport map. Specifically, we first formalize a Fourier residual-guided OT objective by incorporating the degradation-specific information of the residual into the transport cost. We further design the transport map as a two-pass RCOT map that comprises a base model and a refinement process, in which the transport residual is computed by the base model in the first pass and then encoded as a degradation-specific embedding to condition the second-pass restoration. By duality, the RCOT problem is transformed into a minimax optimization problem, which can be solved by adversarially training neural networks. Extensive experiments on multiple restoration tasks show that RCOT achieves competitive performance in terms of both distortion measures and perceptual quality, restoring images with more faithful structures as compared with state-of-the-art methods.
\end{abstract}
\section{Introduction}
\begin{figure}[!t]
	\centering
	\includegraphics[width=1\linewidth]{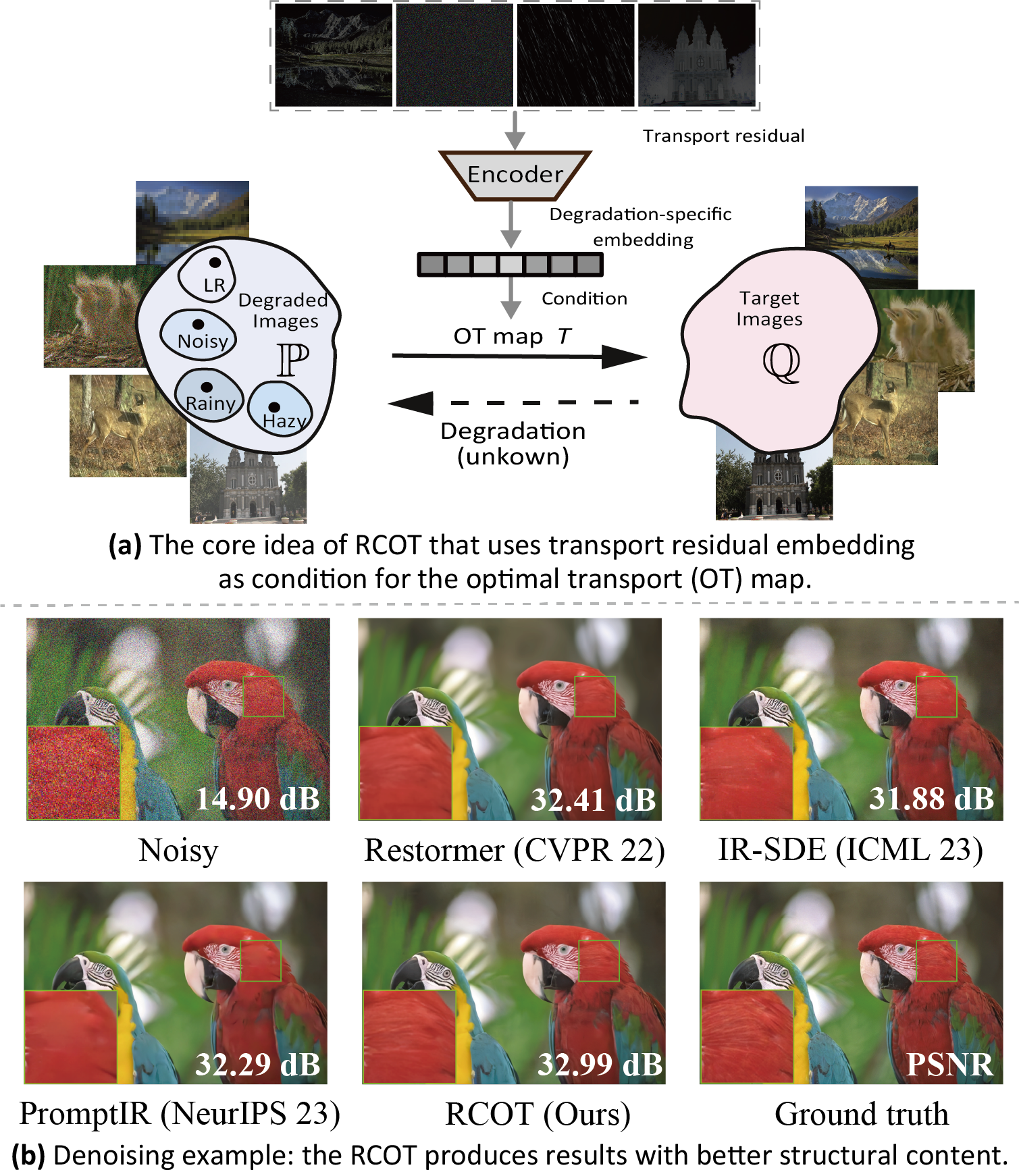}
	\vspace{-0.5cm}
	\caption{(a) The core idea of the RCOT framework that uses the transport residual embedding  (representation of some degradation-specific knowledge) as a condition for the OT map.  (b) A denoising demo under noise level $\sigma=50$. RCOT produces a noise-free image with better structures.}
	\label{demo}
	\vspace{-0.3cm}
\end{figure}
Image restoration is a fundamental low-level task of removing the degradation (e.g., noise, down-scaling, rain, haze, blur, etc.) from a degraded image. Traditional methods \cite{he2010single, sun2008image, zheng2023unsupervised, tang2022uncertainty, tang2023uncertainty} focus on designing an optimization problem that exploits suitable priors of the clear image. 
Recently, significant advancements in image restoration have been witnessed. They are primarily driven by the sophisticated network architectures  \cite{liang2021swinir, Zamir2021MPRNet, Zamir2021Restormer, wang2022uformer, zhou2023fourmer, wang2023promptrestorer, chen2023learning, ren2016single, cui2023irnext} and efficient generative models, especially Generative Adversarial Networks (GANs) \cite{li2018single, ledig2017photo, zhao2020large, pan2020physics, pan2021exploiting} and diffusion probabilistic models (DPMs) \cite{kawar2022denoising, luo2023image, zhu2023denoising, murata2023gibbsddrm,saharia2022image,saharia2022palette,choi2023restoration}. The mainstream of the generation-based methods focuses on how to effectively condition the generator on degraded images for high-fidelity restoration. 

Classic restoration methods \cite{Zamir2021MPRNet, Zamir2021Restormer,potlapalli2023promptir} utilize sophisticated networks to fit regression models that yield deterministic results, by minimizing distortion measures (e.g., MSE, SSIM) w.r.t. the ground-truth. However, due to the ill-posedness of image restoration, they often capture the ``mean'' of the posterior distribution of the high-quality data given the degraded data, sampling results with excessive smoothness and compromised structural details, which may deviate from human perception (e.g., Restormer \cite{Zamir2021Restormer}, PromptIR \cite{potlapalli2023promptir} in Figure \ref{demo} (b)). Differently, generation-based methods \cite{saharia2022image,gao2023implicit,luo2023image} regard the restoration task as a conditional distribution modeling problem, which generally results in visually appealing results. However, they often use the degraded image as a conditional input without including specific information about the degradation, which can result in outputs with remaining distortions and inaccurate structural details (e.g., IR-SDE \cite{luo2023image} in Figure \ref{demo} (b)). Therefore, how to faithfully preserve the structures while minimizing the distortion remains challenging.

  To tackle the challenge, we propose to model image restoration as an optimal transport (OT) problem, in which we introduce a novel transport residual as a degradation-specific cue for both the transport cost and transport map. Specifically, we present a two-pass Residual-Conditioned Optimal Transport (RCOT) approach to realize a degradation-aware and structure-preserving OT map, which applies to both unpaired and paired data settings. The key idea is to incorporate the degradation-specific knowledge (from the residual or its embedding) into the transport cost, and more importantly, into the transport map via a two-pass process, in which the transport residual is computed by the base model in the first pass and then encoded as a degradation-specific embedding to condition the second-pass restoration.  This conditioning mechanism enables the transport map to adjust its behaviors for multiple restoration tasks and restore images with better structural content (Figure \ref{demo} (b)). 

In summary, our contributions mainly include:
\begin{itemize}
	\vspace{-0.2cm}
    \item  We model image restoration as an OT problem, in which we introduce a Fourier residual-guided OT objective, allowing us to incorporate degradation-specific knowledge into the transport cost.  We further deduce a minimax dual formulation for the OT model.
 
		\vspace{-0.2cm}
   \item We propose a two-pass RCOT approach, which conditions the transport map on the residual embedding. This conditioning mechanism dynamically injects degradation-specific from the residual embedding into the restoration operator, i.e., the RCOT map, enhancing its capability to preserve the image structure.
   

  \item Extensive experiments on multiple tasks, e.g., image denoising, super-resolution,  deraining,  and dehazing on both synthetic and real-world datasets show the effectiveness of our method in terms of both distortion measures and perceptual quality. In particular, our method restores images with more faithful structural details, compared with existing approaches.
\end{itemize}
\begin{figure*}[!t]
	\centering
	\includegraphics[scale=1.9]{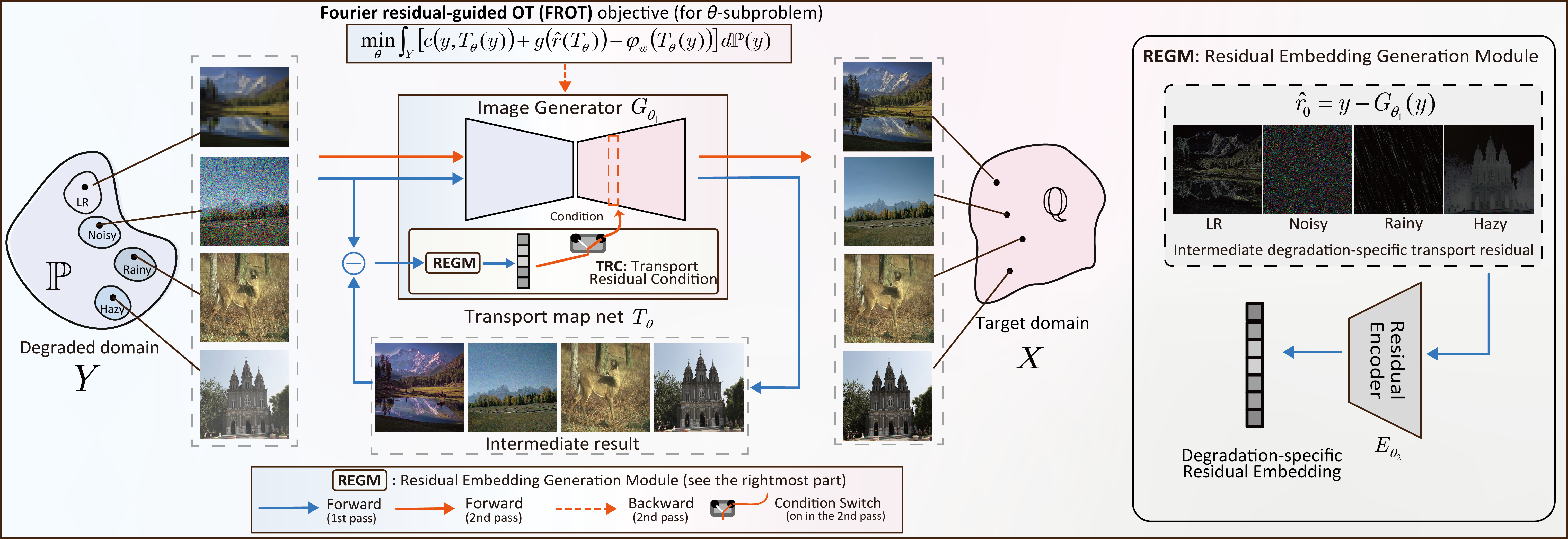}
	\caption{\textbf{Overview of the two-pass RCOT framework for structure-preserving restoration.}   RCOT integrates the transport residual into the transport cost, leading to a FROT objective; and more crucially, into the transport map via a two-pass conditioning process. The first pass unconditionally generates an intermediate result along with the estimated residual $\hat r_0$. The second pass restores the refined result conditioned on the residual embedding $E_{\theta_2}(\hat r_0)$. }
	\label{model}
	\vspace{-0.5cm}
\end{figure*}
\section{Background}
\textbf{Image restoration.}  Classic state-of-the-art restoration methods focus on minimizing distortion (e.g., MSE). They are primarily driven by efficient network architectures \cite{liang2021swinir, Zamir2021MPRNet, Zamir2021Restormer, wang2022uformer, zhou2023fourmer} and model the restoration as regression models that produce deterministic results. However, they rely on large amounts of paired training data and are prone to ``mean'' results with compromised perceptual quality.

Differently, deep generative models have made much progress that can generate perceptually realistic images. The early mainstream methods are built on the conditional generative adversarial network (CGAN) \cite{mirza2014conditional}, in which the restoration problem is treated as a conditional generation problem. RoCGAN \cite{chrysos2020rocgan} utilizes high-quality data for supervised training of restoration from degraded inputs. AmbientGAN \cite{bora2018ambientgan} generates clean images from noisy input, assuming the degradation satisfies certain conditions. Recently, a promising avenue is the adoption of diffusion models. \citet{saharia2022image} utilize the degraded images as conditions to train diffusion models. \citet{kawar2021snips,kawar2022denoising} and \citet{chung2023diffusion} operate under the assumption that the degradation and its parameters are known at test time. \citet{gao2023implicit} propose a scale-adaptive condition on the LR image for high-fidelity super-resolution. \citet{luo2023image} propose a maximum likelihood-based loss function to train a mean-reverting score-based model. However, these methods more or less omit the degradation-specific knowledge, which limits their ability to produce faithful structures while minimizing the distortion.


Intuitively, the generation-based methods are seeking an efficient map $T$ to transform the distribution of the low-quality image distribution $\mathbb P$  into the high-quality image distribution $\mathbb Q$. The key issue lies in the ambiguity that there may exist infinite maps satisfying this constraint. Prior knowledge is required to determine which is the optimal map. In this paper, we model image restoration from an OT perspective, seeking the most efficient map that transports $\mathbb P$ to $\mathbb Q$ with a minimal transport cost.

\textbf{Optimal transport and its applications in restoration.}
OT problem seeks the optimal \textit{transport map} (Monge Problem \cite{monge1781memoire}, a.k.a., MP) or \textit{transport plan} (Kantorovich problem \cite{kantorovich1942translocation}, a.k.a., KP) to transform a distribution $\mathbb P\in \mathcal P(Y)$ to another distribution $\mathbb Q\in \mathcal P(X)$  with the minimal transport cost. $\mathcal P(X)$  and $\mathcal P(Y)$ respectively represent the sets of probability distributions on the Polish spaces $X$ and $Y$. Formally, the Monge and Kantorovich problems can be stated as follows
\begin{align}
	\label{monge}
	C_{mp}(\mathbb{P},\mathbb{Q})\triangleq\inf_{T_\#\mathbb{P}= \mathbb{Q}}\int_{Y}c\big(y,T(y)\big)d\mathbb{P}(y),\\
	C_{kp}(\mathbb{P},\mathbb{Q})\triangleq\inf_{\pi\in\Pi(\mathbb P,\mathbb Q)}\int_{Y\times X}c(y,x)d\pi(y,x),
	\label{Kon}
\end{align}
where $c: Y\times X\rightarrow\mathbb R_+$ measures the transport cost between two samples. In the MP (\ref{monge}), the map $T^*$ attaining the infimum is called the \textit{optimal transport map}, which is taken over all the transport maps $T: Y\rightarrow X$. The constraint $T_\#\mathbb P=\mathbb Q$ means that $T$ pushes forward the probabilistic mass of $\mathbb P$ to $\mathbb Q$, where $T_\#$ is the push-forward operator. In the KP (\ref{Kon}), the coupling $\pi^*$ attaining the minimum is called the \textit{optimal transport plan}, which is taken over all the transport plans $\pi$ on $X\times Y$ whose marginals are $\mathbb P$ and $\mathbb Q$.

Recently, many attempts have been made to construct the translation/restoration as an OT problem. In this context, $\mathbb P, \mathbb Q$ represent the degraded distribution and target distribution respectively.  \citet{Gu2023optimal} propose to use optimal transport to guide the training of the conditional score-based diffusion model for super-resolution and translation. \citet{korotin2023neural, korotin2023kernel}  compute optimal transport maps and plans using neural networks under the duality framework and apply their method to unpaired image translation. They use $\ell_2$ regularizer for the transport cost $c(x,y)$. \citet{wang2022optimal} relax the transport constraint in Monge formulation with a Wasserstein-1 discrepancy penalty $W_1(\mathbb Q, T_\# \mathbb P)$ between the target distribution $\mathbb Q$ and the push-forward distribution $T_\# \mathbb P$. Likewise, they empirically use $\ell_2$ regularizer for $c(x,y)$ for the denoising task. These methods pioneer the way of modeling image translation/restoration problems as an OT problem, but their performances in multiple restoration problems are limited without proper prior knowledge about the correspondence between $\mathbb P$ and $\mathbb Q$. 

Different from the aforementioned studies, our RCOT customizes image restoration as an OT problem, crafting the transport cost and map through the integration of transport residual. This innovation leads to a degradation-aware and structure-preserving transport map, i.e., the RCOT map.
\vspace{-0.3cm}
\section{Method}

The key idea of RCOT is to introduce the transport residual as a degradation-specific cue for both the transport cost and transport map. We first model image restoration as an OT problem, exploiting the frequency knowledge of the residual, yielding the Fourier residual-guided OT (FROT) objective (section \ref{fs}). Secondly, and most crucially, we integrate the degradation-specific knowledge from the residual embedding in the transport map via a two-pass process (section 3.2), in which the transport residual is computed by the base model in the first pass and then encoded as a residual embedding to condition the second-pass restoration (see Figure \ref{model}). In section 3.3, we present the learning algorithm for the two-pass RCOT map by adversarially training two neural networks to solve the minimax optimization problem in both unpaired and paired settings.
\vspace{-0.2cm}
\subsection{Residual-guided OT Formulation for Restoration}
\begin{figure}[!t]
	\centering
	\includegraphics[scale=0.5]{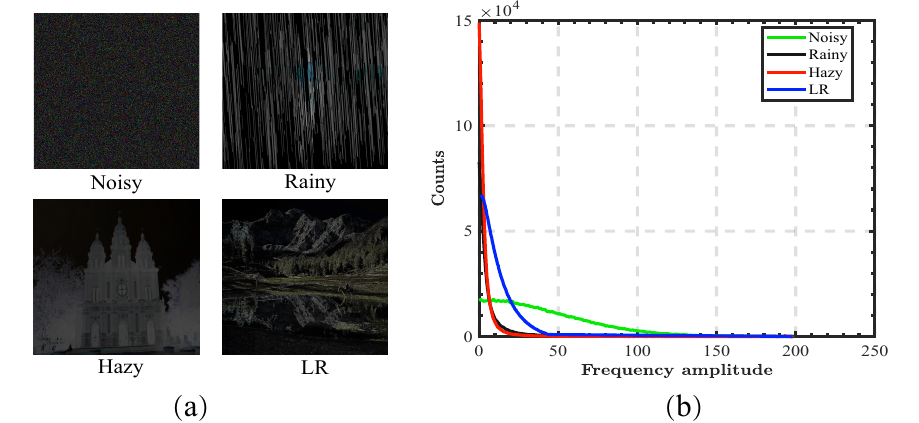}
	\caption{(a) Visual examples of the transport residual $r$. (b) Counts of the frequency amplitude of residuals for four types of degradation. For the draining, dehazing, and super-resolution tasks, the residuals are generally sparse in the frequency domain. The curves are averaged with 40 images. }
	\label{anl}
	\vspace{-0.3cm}
\end{figure}
\label{fs}
We first discuss formulizing image restoration as an OT problem, which can be applied to both unpaired and paired data. For the convenience of understanding, we elaborate our method for the unpaired data setting in sections 3.1 and 3.2.
We will specify and extend to the paired case in section 3.3.  We represent the domains of degraded images and target images by $Y$ and $X$, whose distributions are $\mathbb P$ and $\mathbb Q$, respectively. Then the Kantorovich \cite{kantorovich1942translocation} form of  OT cost can be defined by (\ref{Kon}).  However, this formulation does not consider the prior knowledge of the degradation, limiting its applicability to multiple restoration tasks. To this end, we suggest the FROT objective by introducing a penalty term $g(\cdot)$ on the degradation domain gap (i.e., the transport residual $r=y-x$) in the transport cost, leading to $\tilde{c}(y,x)=c(x,y)+g(r)$. The FROT objective is then  defined as 
\begin{align}
	\label{FROT}
	\text{FROT}(\mathbb P, \mathbb Q)  \triangleq \inf_{\pi\in {\rm \Pi}(\mathbb P,\mathbb Q)}\int_{X\times Y}\tilde{c}(y,x)d\pi(y,x).
\end{align}
Figure \ref{anl} presents the images of transport residuals for four types of degradation (noise, rain, haze, and low-resolution) along with their Fourier signal histograms. These histograms indicate that for degradations like rain, haze, and low-resolution, the residuals tend to be sparse in the frequency domain. In the case of noise, the histograms exhibit a smoother profile, resembling a Gaussian distribution. Based on the observation, we formalize the $\ell_1$ regularizer on the Fourier residuals for the deraining, dehazing, and super-resolution tasks, i.e., $g(\cdot)=||\mathcal F(\cdot)||_1$. For the denoising task, we specify the $\ell_2$ regularizer on the Fourier residual. Our objective is to find the corresponding OT map $T^*$ that attains the infimum of Monge's formulation (\ref{monge}) under cost $\tilde{c}$. The duality of (\ref{FROT}) will lead to a more manageable approach.  According to~\cite{villani2009optimal}, (\ref{FROT}) takes the following dual form:
\begin{align}
	\nonumber\text{FROT}(\mathbb P, \mathbb Q)=\sup_{\varphi}\int_Y\varphi^{\tilde{c}}(y)d\mathbb P(y)+\int_X\varphi(x)d\mathbb Q(x), 
\end{align}
where $\displaystyle\varphi^{\tilde{c}}(y)=\inf_{x\in X}\left[c(x,y)+g(r)-\varphi(x)\right]$ is the $c$-transform of $\varphi$. Replacing the optimization of the first term over target $x\in X$ with an equivalent optimization (Rockafellar interchange theorem \cite{rockafellar1976integral}, Theorem 3A) over the map of interest $T: X\rightarrow Y$,  we obtain the minimax reformulation of dual form:
\begin{align}
		\label{minimax}
\nonumber&\text{FROT}(\mathbb P, \mathbb Q)=\sup_{\varphi}\inf_{T}\bigg\{\mathcal L(T,\varphi)\triangleq\int_X\varphi(x)d\mathbb Q(x)\\
&+\int_{Y}\left[c(T(y),y)+g(\hat r(T))-\varphi(T(y))\right]d\mathbb P(y)\bigg\},
\end{align}
where $\hat r(T)=y-T(y)$ represents the transport degradation domain gap (termed as transport residual). Now we show that tackling this minimax problem provides the OT map. 
\begin{proposition}
	\label{pro}	\text{(Saddle points of FROT provide OT maps).} For any optimal potential function $\varphi^*\in\arg\sup_\varphi\mathcal L(T,\varphi)$, it holds for the Monge OT map $T^*$ that
	\begin{align} T^*\in \mathop{\arg\min}_T\mathcal L(T,\varphi^*).\end{align}
\end{proposition}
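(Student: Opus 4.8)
The plan is to use the elementary observation that at an \emph{admissible} map --- one with $T_\#\mathbb{P}=\mathbb{Q}$ --- the potential terms of $\mathcal{L}$ cancel, so that $\mathcal{L}(T,\varphi)$ collapses to the pushforward cost $\int_Y\tilde c(y,T(y))\,d\mathbb{P}(y)$ and becomes independent of $\varphi$; evaluating this at the Monge optimal map $T^*$ and sandwiching with strong duality then forces $T^*$ to minimize $\mathcal{L}(\cdot,\varphi^*)$. Here I read $\varphi^*$ as an optimal dual potential, i.e.\ $\varphi^*\in\arg\sup_\varphi\inf_T\mathcal{L}(T,\varphi)$.

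First I would line up the two facts the preceding derivation already supplies. Kantorovich duality for the cost $\tilde c(y,x)=c(x,y)+g(y-x)$ together with the Rockafellar interchange gives $\sup_\varphi\inf_T\mathcal{L}(T,\varphi)=\text{FROT}(\mathbb{P},\mathbb{Q})$, hence the optimal $\varphi^*$ obeys $\inf_T\mathcal{L}(T,\varphi^*)=\text{FROT}(\mathbb{P},\mathbb{Q})$. Under the standing regularity assumptions that make the Monge problem solvable (e.g.\ $\mathbb{P}$ atomless and $\tilde c$ suitably regular), the Monge and Kantorovich optimal values agree, so $\int_Y\tilde c(y,T^*(y))\,d\mathbb{P}(y)=\text{FROT}(\mathbb{P},\mathbb{Q})$ with $T^*_\#\mathbb{P}=\mathbb{Q}$.

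Next I would compute $\mathcal{L}(T^*,\varphi^*)$ directly: rewriting the bracket as $c(T(y),y)+g(\hat r(T))-\varphi(T(y))=\tilde c(y,T(y))-\varphi(T(y))$ and using $T^*_\#\mathbb{P}=\mathbb{Q}$ to get $\int_Y\varphi^*(T^*(y))\,d\mathbb{P}(y)=\int_X\varphi^*(x)\,d\mathbb{Q}(x)$, the two $\varphi^*$-integrals cancel and $\mathcal{L}(T^*,\varphi^*)=\int_Y\tilde c(y,T^*(y))\,d\mathbb{P}(y)=\text{FROT}(\mathbb{P},\mathbb{Q})$. Combining with the previous paragraph, $\text{FROT}(\mathbb{P},\mathbb{Q})=\inf_T\mathcal{L}(T,\varphi^*)\le\mathcal{L}(T^*,\varphi^*)=\text{FROT}(\mathbb{P},\mathbb{Q})$, so equality holds throughout and $T^*\in\arg\min_T\mathcal{L}(T,\varphi^*)$. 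The same cancellation actually shows $\mathcal{L}(T^*,\varphi)=\text{FROT}(\mathbb{P},\mathbb{Q})$ for every $\varphi$, which is exactly what keeps the sandwich tight.

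The algebra is routine; the real work is justifying the two imported ingredients --- strong duality for $\tilde c$ and the coincidence of the Monge and Kantorovich values --- which rests on lower semicontinuity and mild growth of $c$ and $g$ and on a Brenier/Gangbo--McCann-type existence result for $T^*$ under the paper's implicit assumptions on $\mathbb{P},\mathbb{Q}$ and $\tilde c$. I expect that to be the main obstacle to a fully rigorous statement; granting it, the proposition follows from the sandwich above.
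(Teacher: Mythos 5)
Your proof takes essentially the same route as the paper: both evaluate $\mathcal{L}(T^*,\varphi^*)$ directly and use the change of variables $T^*(y)=x$ under the pushforward constraint $T^*_\#\mathbb{P}=\mathbb{Q}$ to cancel the potential terms, collapsing $\mathcal{L}(T^*,\varphi^*)$ to the Monge--FROT value $\int_Y\tilde c\,(y,T^*(y))\,d\mathbb{P}(y)$. You are in fact a bit more explicit than the paper in closing the loop: the paper's appendix stops once it reaches $\mathcal{L}(T^*,\varphi^*)=\mathbb{M}\text{-FROT}(\mathbb{P},\mathbb{Q})$, whereas you spell out the sandwich $\text{FROT}=\inf_T\mathcal{L}(T,\varphi^*)\le\mathcal{L}(T^*,\varphi^*)=\text{FROT}$ --- invoking strong duality for $\tilde c$ and the Monge--Kantorovich coincidence --- which is precisely what makes the conclusion $T^*\in\arg\min_T\mathcal{L}(T,\varphi^*)$ follow.
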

The proof is given in Appendix \ref{proof}. Proposition \ref{pro} affirms the feasibility of solving the minimax problem (\ref{minimax}) to acquire an optimal pair, constituting an OT map from $\mathbb P$ to $\mathbb Q$.  For general $\mathbb P$ and 
$\mathbb Q$, given some optimal potential $\varphi^*$, the set $\arg\inf_T$   may encompass not only the OT map $T^*$ but also other saddle points, which are capable of delivering decent performance as in experiments (section \ref{exp}). To tackle the minimax problem (\ref{minimax}), we can approximate the map $T$ and potential $\varphi$ with neural networks $T_\theta$ and $\varphi_\omega$. 

\subsection{Two-pass Residual-Conditioned OT Map}
\label{RCOT}
After parameterizing the transport map with a neural network $T_\theta$, to better preserve image structures for restoration under different degradations, we are motivated to incorporate degradation-specific and structural knowledge into the map $T_\theta$. Intuitively, the transport residual may contain degradation-specific knowledge (e.g., degradation type and level) and structural information (as empirically shown in Appendix \ref{res}). We then suggest utilizing the transport residual as an additional condition for the transport map $T_\theta$. To achieve this, we develop a two-pass RCOT map (Figure \ref{model}) based on a transport residual condition (TRC) module,  in which the transport residual is computed by the base model in the first pass and then encoded as a degradation-specific embedding to condition the second-pass restoration.  Correspondingly, we introduce an image generator $G_{\theta_1}$ to generate the restored image, and a residual encoder $E_{\theta_2}$ to control the restoration with the residual embedding as a condition. $G_{\theta_1}$ and $E_{\theta_2}$ constitute the map $T_\theta$.

\textbf{Two-pass transport map.} Given a degraded image $y$, the first pass unconditionally generates an intermediate restored result $G_{\theta_1}(y)$ via the image generator and calculates its corresponding intermediate transport residual $\hat r_0 = y-G_{\theta_1}(y)$. The second pass extracts the residual embedding $E_{\theta_2}(\hat r_0)$ via the residual encoder and uses it as a condition for the image generator, which then restores a refined result $G_{\theta_1}(y|E_{\theta_2}(\hat r_0))$. This transport process is summarized as
\begin{align}
	 \hat r_0=y-G_{\theta_1}(y),~~~
	T_\theta(y)=G_{\theta_1}\left(y|E_{\theta_2}(\hat r_0)\right).\label{2p}
\end{align}
\textbf{Transport residual condition (TRC).} The TRC module is intended to compensate for the degradation-specific information that can be lost through the degradation.
Specifically, the TRC module consists of two key components: a residual embedding generation module (REGM) and a condition integration module. In  REGM, the estimated transport residual in the first pass is projected by the residual encoder to a residual embedding $E_{\theta_2}(\hat r_0)$, which is then utilized as a degradation-specific condition for the second-pass restoration. In the condition integration module, we employ the cross-stage feature fusion \cite{Zamir2021MPRNet} to integrate the features from the image generator $G_{\theta_1}$ with the degradation-specific embedding $E_{\theta_2}(\hat r_0)$ for structure-preserving restoration.

\subsection{Overall Training}
With the parameterization of $T_\theta$ and $\varphi_\omega$, the optimization objective function of \eqref{minimax} can be written as 
\begin{align}\label{eq:loss_unpair}
    &\mathcal{L}_{\rm FROT}(\omega,\theta)=\mathbb{E}_{x\sim \mathbb{Q}}\left[\varphi_\omega(x)\right] \nonumber+ \\\mathbb{E}_{y\sim \mathbb{P}}&\left[ c(T_\theta(y),y)+g(\hat r(T_\theta))-\varphi_\omega(T_\theta(y)) \right].
\end{align}
For unpaired setting,  we train the networks $T_\theta$ and $\varphi_\omega$ by respectively minimizing and maximizing $\mathcal{L}_{\rm FROT}(\omega,\theta)$, i.e., $\max_{\omega}\min_{\theta} \mathcal{L}_{\rm FROT}(\omega,\theta)$. This can be achieved by adversarially training $T_\theta$ and $\varphi_\omega$, in which we estimate the expectation using mini-batch data in each training step.

\textbf{(Partially) Paired Setting.}  In practice, most datasets may include a fraction of available paired samples. For this partially paired setting, we can leverage the paired samples to enforce $T_\theta(y)$ to approximate the target $x$ for any pair $(y,x)\in P$ with a squared $\ell_2$ loss (where $P$ denotes the paired subsets of $X\times Y$):
\begin{equation}
    \mathcal L_{\text{paired}}(\theta) = \frac{\gamma}{|P|}\sum_{(y,x)\in P}\|T_\theta(y)-x\|^2.
\end{equation}
Consequently, the training objective for the partially paired setting is $\max_{\omega}\min_{\theta} \{ \mathcal{L}_{\rm FROT}(\omega,\theta) + \mathcal L_{\text{paired}}(\theta)\}$. The overall algorithm is detailed in Algorithm \ref{algo} in the Appendix \ref{algorithm}.


\section{Experiments}
\label{exp}
We evaluate the proposed RCOT on benchmark datasets on four representative image restoration tasks: image denoising, deraining,  dehazing, and super-resolution (SR). In tables, the best and second-best quality metrics (PSNR/SSIM for measuring pixel/structure similarity, and LPIPS \cite{zhang2018unreasonable}/FID \cite{heusel2017gans} for perceptual deviation measuring) are \textbf{highlighted} and \underline{underlined}. The \textbf{implementation details and selected compared methods} are introduced in Appendix \ref{details}. Extra evaluations (e.g., parameter quantity, cross-dataset and preliminary multiple-in-one comparisons) are included in Appendix \ref{aexp}.

\begin{table*}[!t]
	\centering
	\caption{Denoising results (PSNR/SSIM/LPIPS/FID) on Kodak24 \cite{franzen1999kodak} and CBSD68~\cite{martin2001database} datasets. RCOT achieves competitive qualitative performance. (*) indicates the method in an unpaired setting. }
	\label{denoising}
	\setlength{\tabcolsep}{10pt}
	\renewcommand{\arraystretch}{1.2}
	\resizebox{\textwidth}{!}{
		\begin{tabular}{@{}ccccc@{}}
			\toprule
			\multirow{2}{*}{Method} &\multicolumn{2}{c}{Kodak24~\cite{franzen1999kodak}} & \multicolumn{2}{c}{CBSD68~\cite{martin2001database}}\\ \cmidrule(lr){2-3} \cmidrule(lr){4-5}
			&   $\sigma = 25$ & $\sigma = 50$ &  $\sigma = 25$ & $\sigma = 50$ \\
			\midrule
			NOT$^*$ \cite{korotin2023neural}&  29.13/0.786/0.131/70.17 & 27.12/0.725/0.227/104.5 & 29.76/0.802/0.119/73.68 & 26.82/0.723/0.212/113.3 \\
			OTUR$^*$  \cite{wang2022optimal}& 31.05/0.848/0.104/55.74 & 28.03/0.744/0.178/83.27&  30.27/0.833/0.095/67.74 & 27.36/0.733/0.169/93.17 \\
			MPRNet \cite{Zamir2021MPRNet}&31.96/0.868/0.112/43.98&28.36/0.785/0.185/73.26&30.89/0.880/0.103/59.23&27.56/0.779/0.163/86.42\\
	Restormer\cite{Zamir2021Restormer}&32.13/0.880/0.097/40.22&\underline{29.25/0.799}/0.156/64.26&31.20/0.887/0.090/55.28&27.90/0.794/0.149/66.12\\
	IR-SDE \cite{luo2023image}&31.40/0.842/\underline{0.080}/45.56&28.03/0.721/\underline{0.134}/83.66&30.46/0.856/\underline{0.075}/57.30&26.98/0.737/\underline{0.138}/96.40\\
    RCD\cite{Zhang_2023_CVPR}&32.18/0.880/0.089/39.85&29.22/0.795/0.147/63.26&31.28/0.886/0.089/50.96&28.01/0.796/0.149/69.78\\
	PromptIR \cite{potlapalli2023promptir}& \underline{32.25/0.883}/0.091/\underline{30.91} & 29.19/\underline{0.799}/0.154/\underline{60.42}& \underline{31.31/0.888}/0.085/\underline{45.45}&\underline{28.03/0.797}/0.143/\underline{63.98}\\
			\midrule
			
			RCOT$^*$ & 31.84/0.860/0.085/35.28 & 28.64/0.792/0.152/63.27& 30.77/0.849/0.079/47.21&27.69/0.775/0.145/73.43 \\ 
			
			RCOT&  \textbf{32.64/0.885/0.070/19.33} & \textbf{29.53/0.828/0.121/33.13} & \textbf{31.52/0.889/0.062/34.24} & \textbf{28.25/0.799/0.118/56.60} \\ 
			\bottomrule
	\end{tabular}}
\end{table*}
\begin{figure*}[!t]
	\setlength\tabcolsep{1pt}
	\renewcommand{\arraystretch}{0.5} 
	\centering
	\begin{tabular}{cccccccc}
		\includegraphics[width=0.120\linewidth]{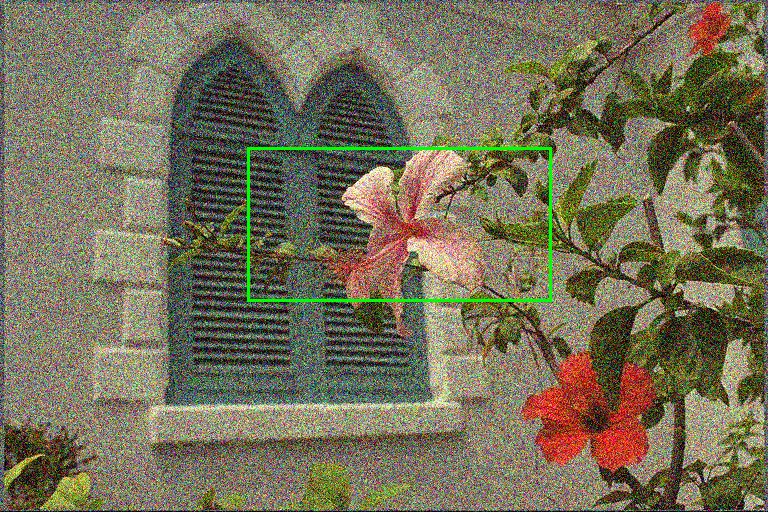}&
		\includegraphics[width=0.120\linewidth]{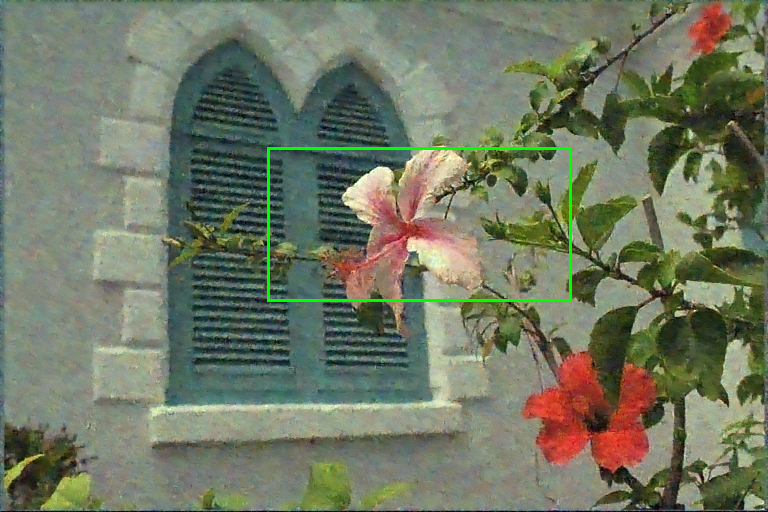}&
		\includegraphics[width=0.120\linewidth]{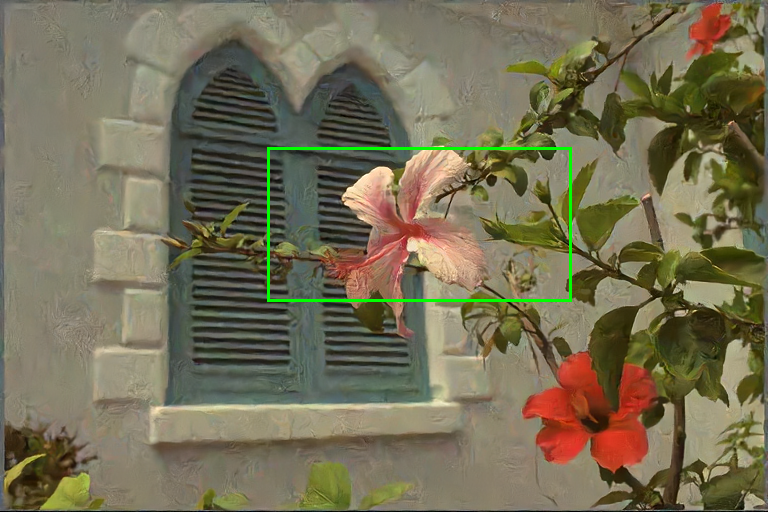}&
		\includegraphics[width=0.120\linewidth]{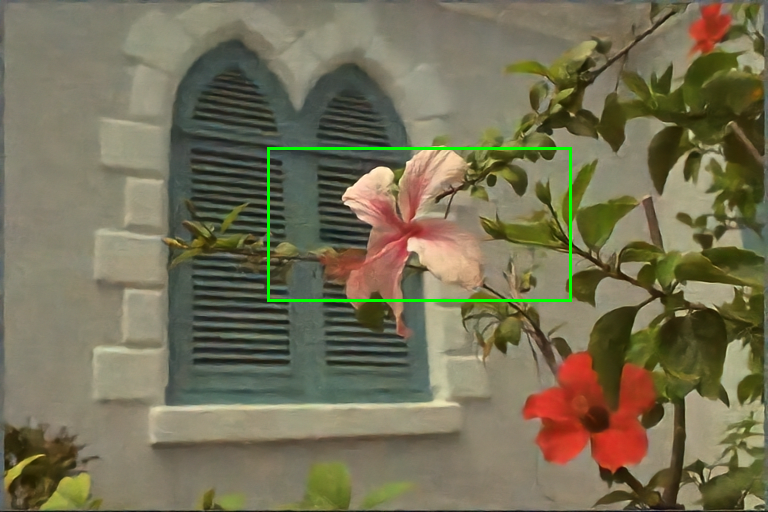}&
		\includegraphics[width=0.120\linewidth]{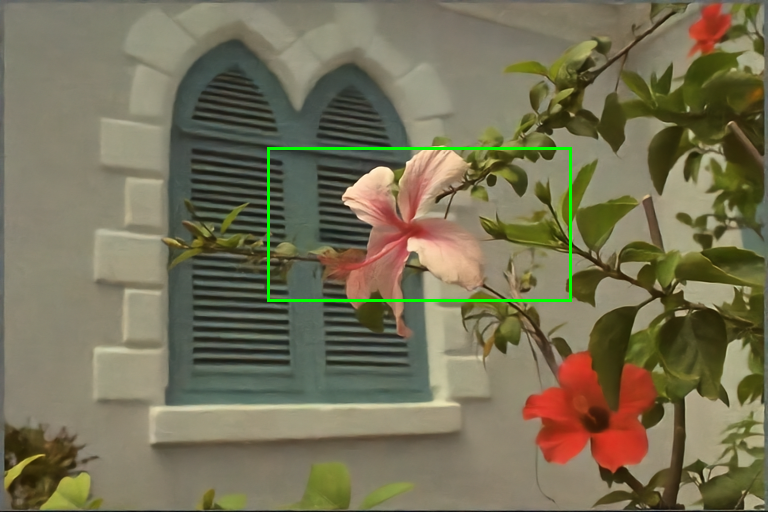}&
		\includegraphics[width=0.120\linewidth]{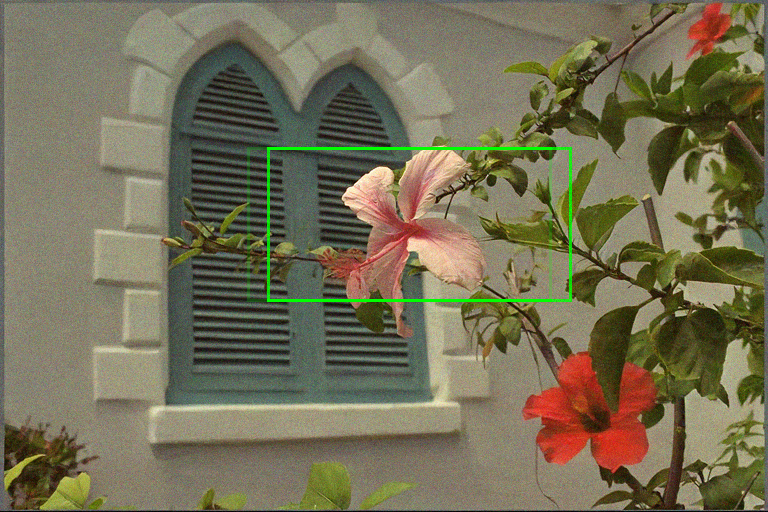}&
		\includegraphics[width=0.120\linewidth]{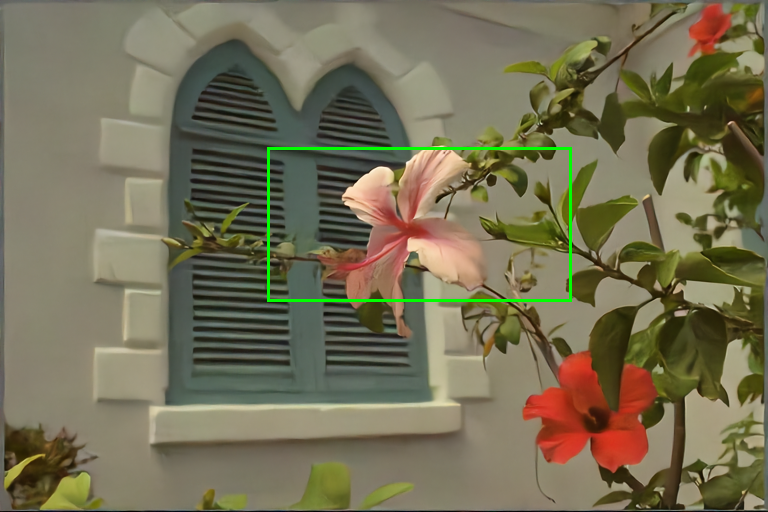}&
		\includegraphics[width=0.120\linewidth]{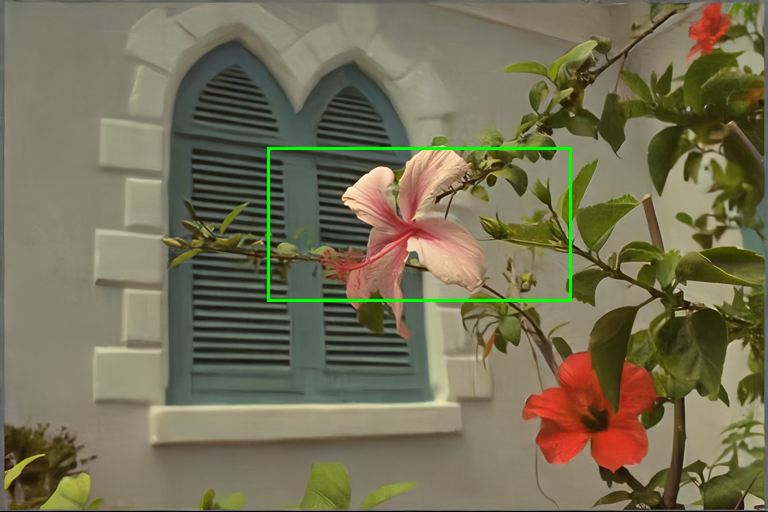}\\
		\includegraphics[width=0.120\linewidth]{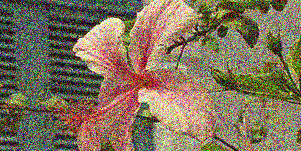}&
		\includegraphics[width=0.120\linewidth]{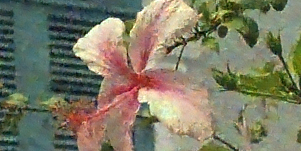}&
		\includegraphics[width=0.120\linewidth]{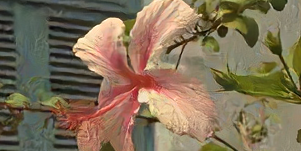}&
		\includegraphics[width=0.120\linewidth]{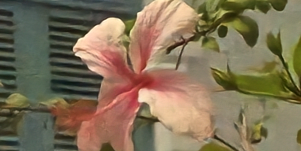}&
		\includegraphics[width=0.120\linewidth]{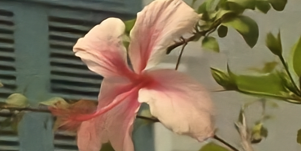}&
		\includegraphics[width=0.120\linewidth]{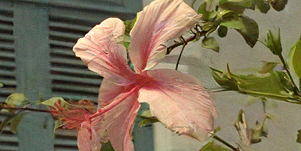}&
		\includegraphics[width=0.120\linewidth]{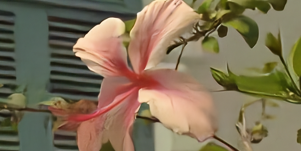}&
		\includegraphics[width=0.120\linewidth]{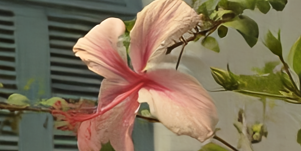}\\
		14.60/0.256&26.60/0.822&29.29/0.921&29.77/0.940&\textbf{31.32}/0.955&29.52/0.921&30.98/0.951&31.26/\textbf{0.961}
		\\ 
		Noisy&NOT &OTUR&MPRNet&Restormer&IR-SDE&PromptIR&RCOT
	\end{tabular}
	\caption{Visual comparison of denoising on Kodak24 \cite{franzen1999kodak} with $\sigma=50$. Our RCOT produces a noise-free image with clear textural details.}
	\label{noise}

\end{figure*}
\subsection{Results}
\paragraph{Gaussian Image Denoising.}
For Gaussian image denoising, we train the model on a combination of BSD400 \cite{arbelaez2010contour} and WED \cite{ma2016waterloo} datasets. The BSD400 dataset comprises 400 training images, while the WED dataset consists of 4,744 images. Gaussian noise with level $\sigma\in\{25,50\}$ is separately added to generate noisy images for training. We evaluate RCOT on the Kodak24 \cite{franzen1999kodak} and CBSD68 \cite{martin2001database} datasets under noise levels $\sigma\in\{25,50\}$. Table \ref{denoising} reports the PSNR/SSIM/LPIPS/FID scores of the compared methods. The RCOT achieves the best performance under all metrics. Particularly, the LPIPS and FID values of RCOT's results are remarkably better than those of other methods. Figure \ref{noise} and Figure \ref{anoisy} in the Appendix \ref{more} display visual examples of $\sigma=50$. MPRNet, Restormer, and PromptIR \cite{Zamir2021MPRNet, Zamir2021Restormer,potlapalli2023promptir}, directly minimizing the $\ell_1$ loss, can perform well in distortion measures (PSNR and SSIM), but they produce results with smoothed structures.  The generation-based methods IR-SDE \cite{luo2023image} and OTUR \cite{wang2022optimal} seem to produce realistic structures but are limited in removing the heavy noise completely. As a comparison, our RCOT reproduces a clean and sharp image with faithful textures. 
\begin{table*}[!t]
	\centering
	\caption{Deraining results on synetheic dataset Rain100L~\cite{fan2019general} and real-world dataset SPANet \cite{Wang_2019_CVPR}.  ($*$) indicates the method in an unpaired setting.}
	\label{derain}
	\setlength{\tabcolsep}{15pt}
	\renewcommand{\arraystretch}{1.2}
	\resizebox{\textwidth}{!}{
		\begin{tabular}{@{}ccccccccc@{}}
			\toprule
			\multirow{2}{*}{Method}&\multicolumn{4}{c}{Synthetic Rain100L~\cite{fan2019general}} & \multicolumn{4}{c}{Real-world SPANet~\cite{Wang_2019_CVPR}}\\ \cmidrule(lr){2-5}  \cmidrule(lr){6-9} 
			& PSNR ($\uparrow$) & SSIM ($\uparrow$) & LPIPS ($\downarrow$) & FID ($\downarrow$) & PSNR ($\uparrow$) & SSIM ($\uparrow$) & LPIPS ($\downarrow$) & FID ($\downarrow$) \\
			\midrule
			NOT$^*$ \cite{korotin2023neural}& 29.29 & 0.911 & 0.030 & 54.12 & 32.55 & 0.901& 0.028 &53.28 \\
			OTUR$^*$  \cite{wang2022optimal}& 33.71 & 0.954 & 0.027 & 36.64 & 39.23& 0.961& 0.017 &29.64 \\
			MPRNet \cite{Zamir2021MPRNet}& 34.95 & 0.964 & 0.039 & 21.61 & 39.52 & 0.967& 0.021 &28.13\\
			Restormer \cite{Zamir2021Restormer}& 36.74 & 0.978 & 0.026 & 13.29 & 41.39& 0.981& 0.013 &19.67\\
               SFNet \cite{cui2023selective}& 36.56 & 0.974 & 0.023 & 13.12 & 41.02 & 0.980& 0.015 &21.52\\
			IR-SDE \cite{luo2023image}& 36.94 & 0.978& \underline{0.014} & \underline{9.52} & \underline{42.56} & \underline{0.987}& \underline{0.009} &\underline{16.25}\\
			PromptIR \cite{potlapalli2023promptir}& \underline{37.09} & \underline{0.979} & 0.025 &  10.21 & 39.17 & 0.965 & 0.015 &  28.93\\
			\midrule
			RCOT$^*$ & 36.22 & 0.972 & 0.019 & 12.59 & 41.05 & 0.977& 0.014 &18.93\\
			
			RCOT& \textbf{37.27} & \textbf{0.980} & \textbf{0.015} & \textbf{7.97} & \textbf{43.77} & \textbf{0.993}& \textbf{0.008} &\textbf{9.52} \\ 
			\bottomrule
	\end{tabular}}
\vspace{-0.3cm}
\end{table*}
\begin{figure*}[!h]
	\setlength\tabcolsep{1pt}
	\renewcommand{\arraystretch}{0.5} 
	\centering
	\begin{tabular}{cccccccc}
 	\includegraphics[width=0.120\linewidth]{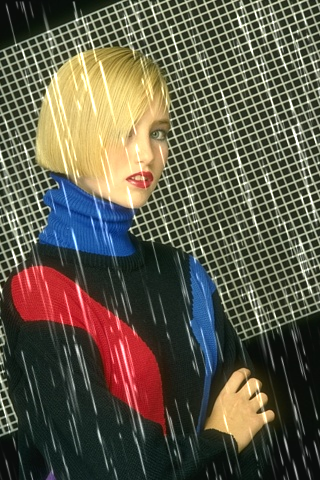}&
		\includegraphics[width=0.120\linewidth]{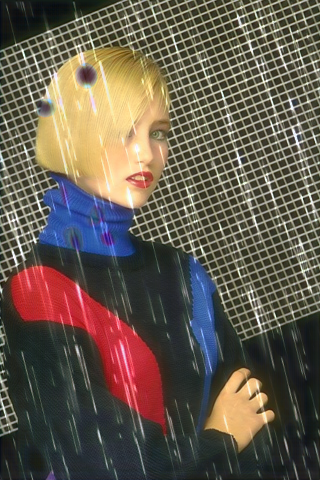}&
		\includegraphics[width=0.120\linewidth]{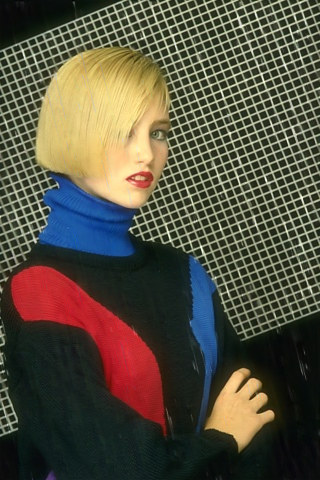}&
		\includegraphics[width=0.120\linewidth]{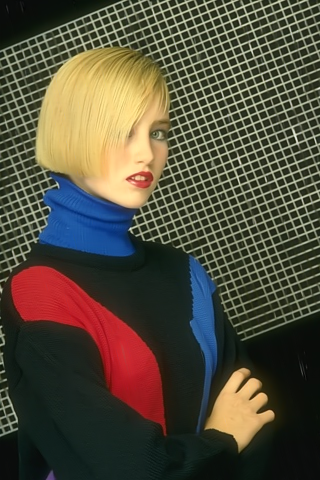}&
		\includegraphics[width=0.120\linewidth]{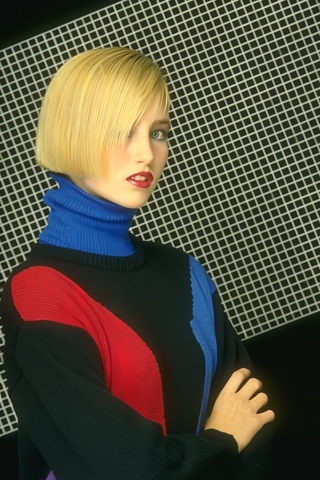}&
		\includegraphics[width=0.120\linewidth]{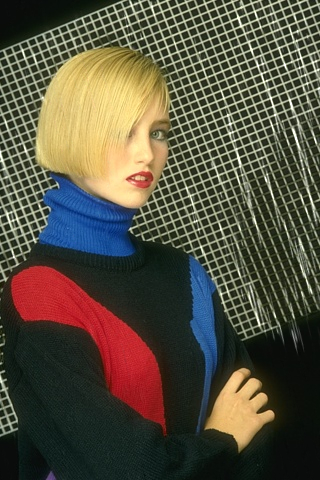}&
		\includegraphics[width=0.120\linewidth]{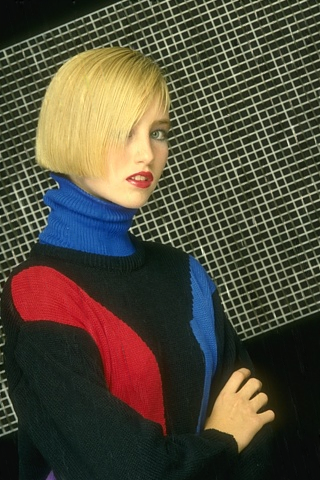}&
		\includegraphics[width=0.120\linewidth]{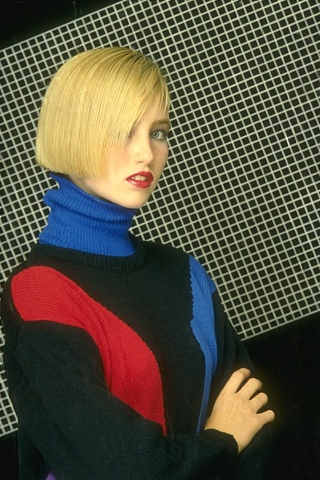}\\
19.97/0.829&24.87/0.876&29.23/0.966&27.66/0.952&35.15/0.981&27.03/0.941&30.69/0.937&\textbf{35.40/0.989}\\

		\includegraphics[width=0.120\linewidth]{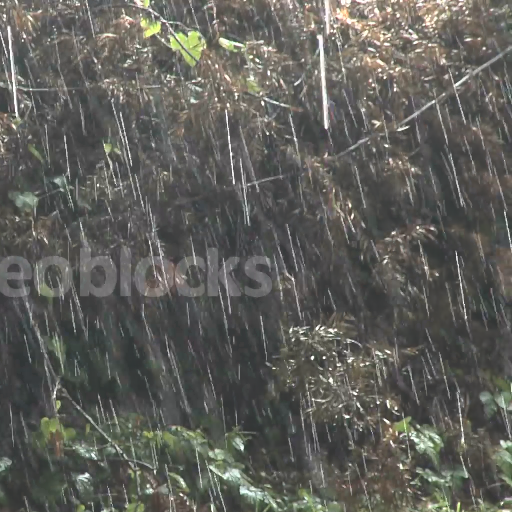}&
		\includegraphics[width=0.120\linewidth]{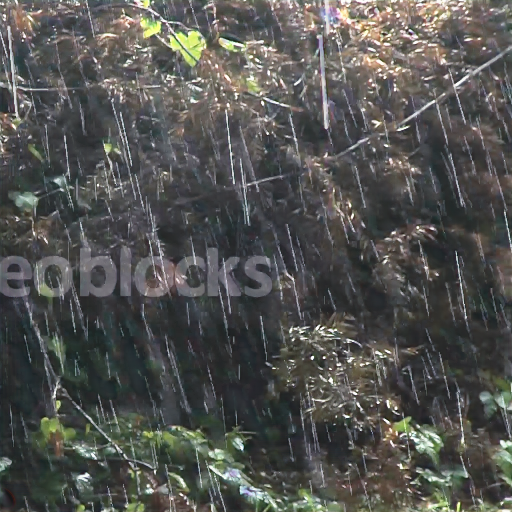}&
		\includegraphics[width=0.120\linewidth]{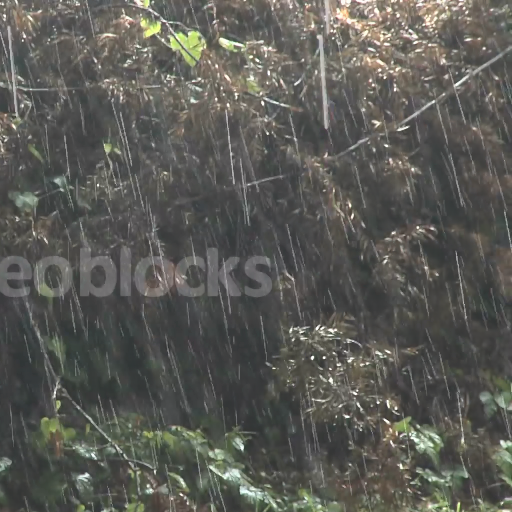}&
		\includegraphics[width=0.120\linewidth]{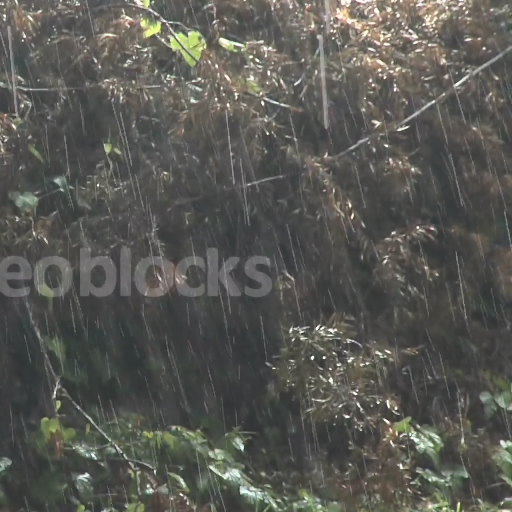}&
		\includegraphics[width=0.120\linewidth]{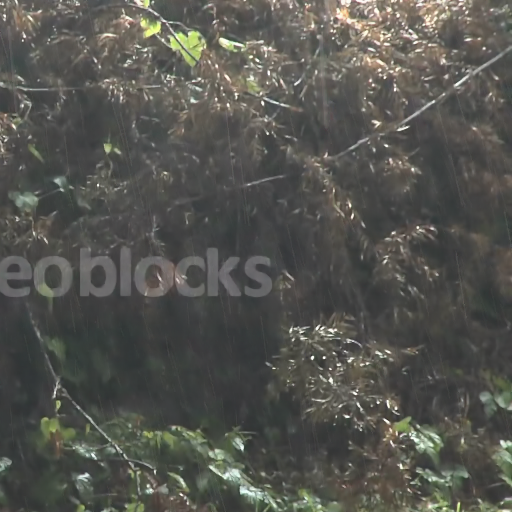}&
		\includegraphics[width=0.120\linewidth]{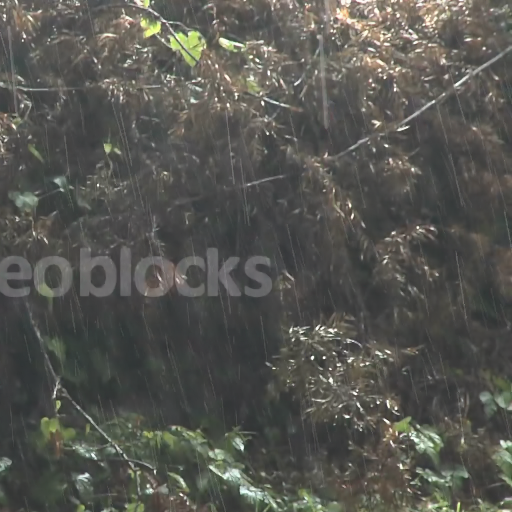}&
		\includegraphics[width=0.120\linewidth]{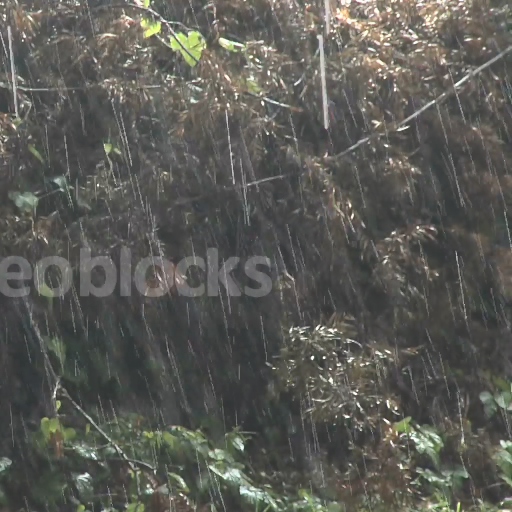}&
            \includegraphics[width=0.120\linewidth]{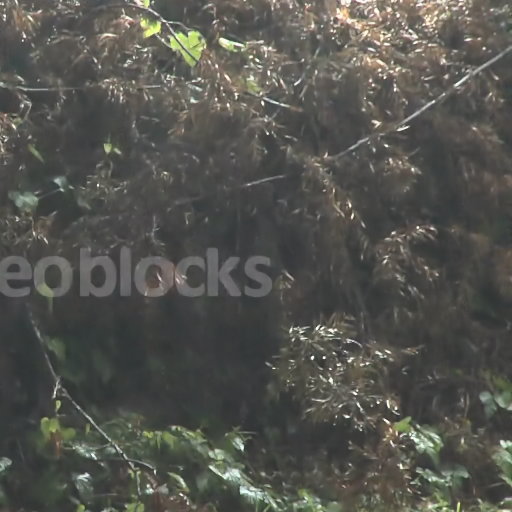}\\
21.96/0.729& 24.19/0.758&26.04/0.811&28.83/0.859& 32.14/0.928&32.52/0.931&28.24/0.873& \textbf{33.16/0.939}\\        
		Rainy&NOT&OTUR&MPRNet&Restormer&IR-SDE&PromptIR&RCOT
		
	\end{tabular}
	\caption{Visual comparison of deraining results on synthetic Rain100L \cite{fan2019general} (row 1) and  real-world SPANet \cite{Wang_2019_CVPR} (row 2). The RCOT reproduces rain-free images with realistic structural details.}
	\label{vrain}
\end{figure*} 

\begin{table*}[!h]
	\centering
	\caption{Dehazing results on synetheic dataset SOTS~\cite{li2018benchmarking} and real-world dataset O-HAZE  \cite{ancuti2018haze}.  ($*$) indicates the method in an unpaired setting.}
	\label{haze}
	\setlength{\tabcolsep}{15pt}
	\renewcommand{\arraystretch}{1.2}
	\resizebox{\textwidth}{!}{
		\begin{tabular}{@{}ccccccccc@{}}
			\toprule
			\multirow{2}{*}{Method}&\multicolumn{4}{c}{Synthetic SOTS (outside)~\cite{fan2019general}} & \multicolumn{4}{c}{Real-world O-HAZE~\cite{ancuti2018haze}}\\ \cmidrule(lr){2-5}  \cmidrule(lr){6-9} 
			& PSNR ($\uparrow$) & SSIM ($\uparrow$) & LPIPS ($\downarrow$) & FID ($\downarrow$) & PSNR ($\uparrow$) & SSIM ($\uparrow$) & LPIPS ($\downarrow$) & FID ($\downarrow$) \\
			\midrule
			NOT$^*$ \cite{korotin2023neural}& 24.21 & 0.900& 0.046 & 21.35 & 15.13 & 0.673& 0.271 &258.68 \\
			OTUR$^*$  \cite{wang2022optimal}& 26.36 & 0.953 & 0.024 & 18.96 & 16.43& 0.719& 0.209 &248.74 \\
			MPRNet \cite{Zamir2021MPRNet}& 28.31 & 0.954 & 0.029 &17.79 & 21.55 & 0.778& 0.256 &223.86\\
			Restormer \cite{Zamir2021Restormer}& 30.87 & 0.969 & 0.026 & 13.29 & 25.20& 0.804& 0.221 &198.07\\
               Dehazeformer \cite{song2023vision}& \underline{31.45} & \textbf{0.978} & 0.021 & 15.54 & \underline{25.56} & 0.812& 0.209 &199.35\\
			IR-SDE \cite{luo2023image}& 30.55& 0.968& \underline{0.018} & \underline{12.76} & 22.13 & 0.776& \underline{0.160} &\underline{179.23}\\
			PromptIR \cite{potlapalli2023promptir}& 31.31 & 0.973 & 0.021 &  16.28 & 25.27 & \underline{0.813} & 0.216 &  217.66\\
			\midrule
			RCOT$^*$ & 30.34 & 0.965 & 0.020 & 12.97 & 21.01 & 0.773& 0.186 &203.72\\
			
			RCOT& \textbf{31.66} & \underline{0.976} & \textbf{0.015} & \textbf{10.21} & \textbf{27.16} & \textbf{0.839}& \textbf{0.145} &\textbf{169.38} \\ 
			\bottomrule
	\end{tabular}}
 \vspace{-0.2cm}
\end{table*}
\begin{figure*}[!h]
	\setlength\tabcolsep{1pt}
	\renewcommand{\arraystretch}{0.5} 
	\centering
	\begin{tabular}{cccccccc}
		\includegraphics[width=0.120\linewidth]{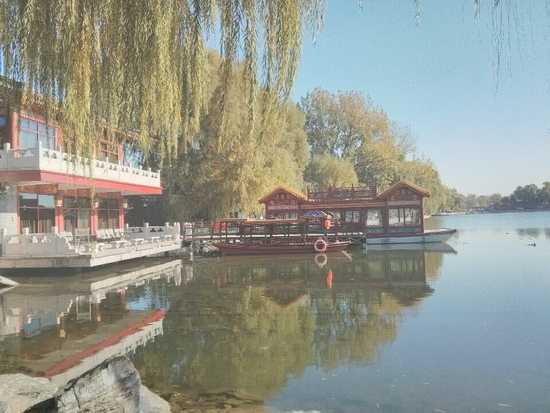}&
		\includegraphics[width=0.120\linewidth]{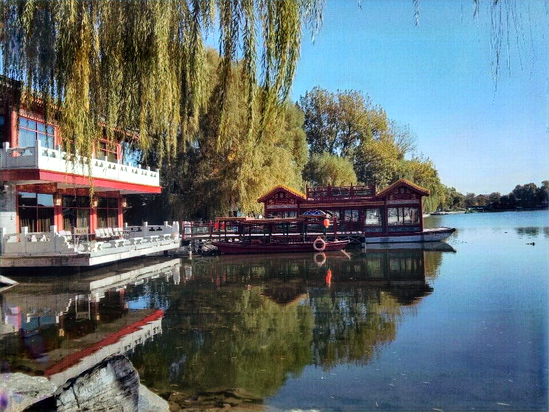}&
		\includegraphics[width=0.120\linewidth]{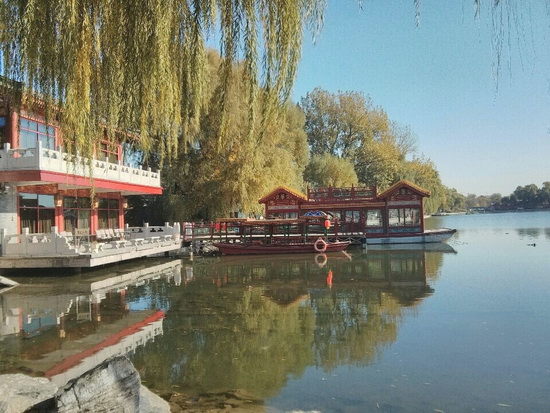}&
		\includegraphics[width=0.120\linewidth]{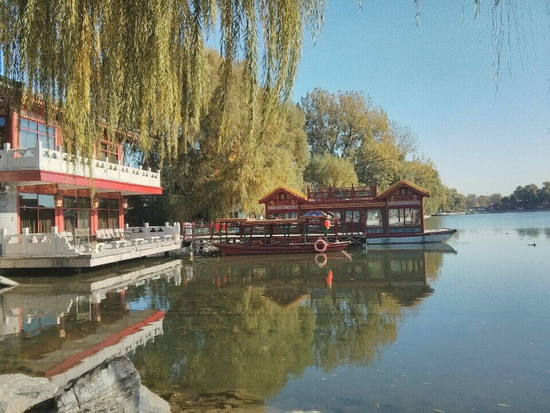}&
		\includegraphics[width=0.120\linewidth]{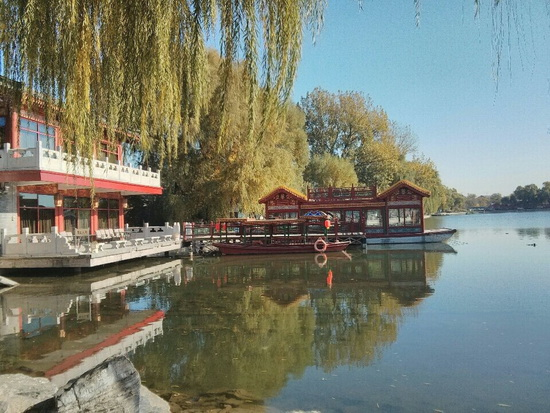}&
		\includegraphics[width=0.120\linewidth]{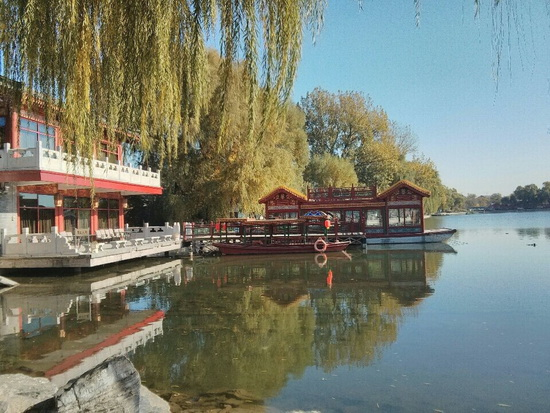}&
		\includegraphics[width=0.120\linewidth]{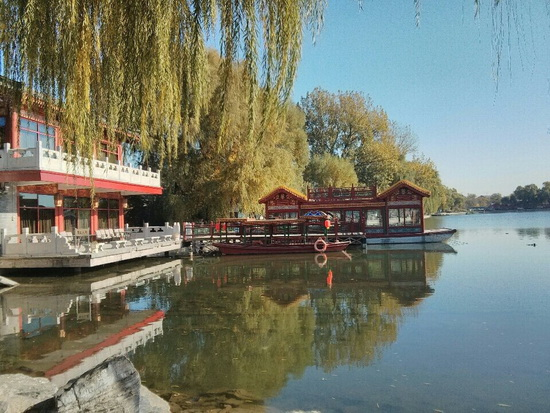}&
		\includegraphics[width=0.120\linewidth]{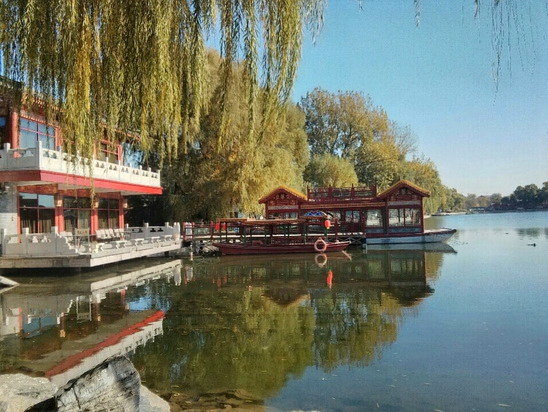}\\
		17.28/0.897&26.65/0.949&27.70/0.985&28.52/0.986&31.25/0.989&32.76/0.992&33.01/0.991&\textbf{33.21/0.993}\\

  	\includegraphics[width=0.120\linewidth]{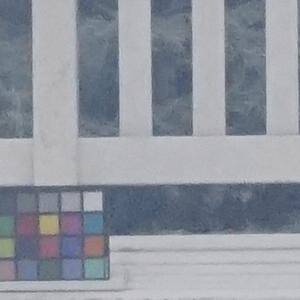}&
		\includegraphics[width=0.120\linewidth]{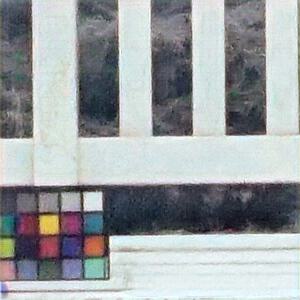}&
		\includegraphics[width=0.120\linewidth]{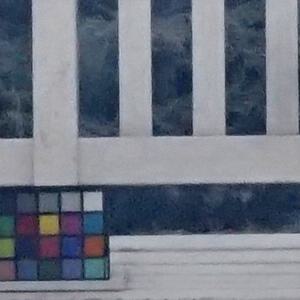}&
		\includegraphics[width=0.120\linewidth]{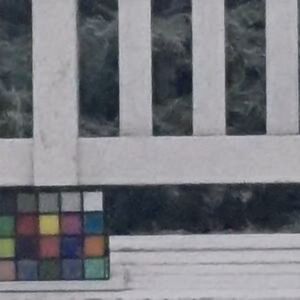}&
		\includegraphics[width=0.120\linewidth]{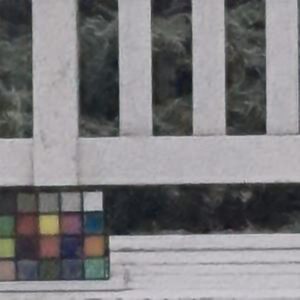}&
		\includegraphics[width=0.120\linewidth]{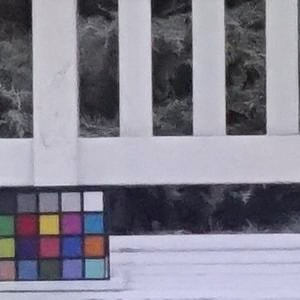}&
		\includegraphics[width=0.120\linewidth]{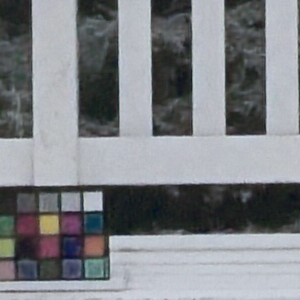}&
		\includegraphics[width=0.120\linewidth]{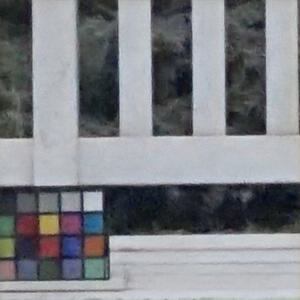}\\
        16.78/0.704&18.11/0.724&20.31/0.743&23.61/0.810&25.51/0.781&23.14/0.775&27.80/0.827&\textbf{28.50/0.841}\\
		Hazy&NOT&OTUR&MPRNet&Restormer&IR-SDE&PromptIR&RCOT
	\end{tabular}
	\caption{Visual comparison of dehazing results on synethtic SOTS~\cite{li2018benchmarking} (row 1) and real-world O-HAZE \cite{ancuti2018haze} (row 2).  RCOT produces haze-free images with faithful color.}
	\label{vhaze}
 \vspace{-0.2cm}
\end{figure*}
\paragraph{Image Deraining.}
We evaluate RCOT on both \textbf{synthetic} dataset Rain100L \cite{yang2017deep} and \textbf{real-world} dataset SPANet \cite{Wang_2019_CVPR}. For Rain100L \cite{yang2017deep}, we train the model with  13,712 paired clean-rain images collected from multiple datasets \cite{fu2017removing, li2016rain, yang2017deep, zhang2018density, zhang2019image}.
For real-world SPANet \cite{Wang_2019_CVPR}, it contains 27.5K paired rainy and rain-free images for training, and 1, 000 paired images for testing. 

Table \ref{derain} reports the performance of the evaluated methods. The proposed RCOT achieves the best performance over both distortion measures and perceptual quality measures. The underlying reason should be that the RCOT benefits from the FROT objective and residual embedding that exploits the information of rain streaks.  From the results shown in Figures \ref{vrain} and \ref{arain}, we can observe that results of NOT \cite{korotin2023neural} are discolored. Restormer \cite{Zamir2021Restormer} and PromptIR \cite{potlapalli2023promptir} effectively remove the rain streaks, but the structural details are oversmoothed. IR-SDE \cite{luo2023image} produces a restored result that still exhibits a slight presence of rain streaks. RCOT restores rain-free images with better structural content.

\paragraph{Image Dehazing.}
We evaluate RCOT on the \textbf{synthetic} SOTS \cite{li2018benchmarking} dataset, which contains 72,135 images for training and 500 images for testing, and \textbf{real hazy} O-HAZE \cite{ancuti2018haze} dataset, which contains 45 paired hazy and haze-free images, which are collected at the same scene under the same illumination conditions. 40 images are used for training and the other 5 images are used for testing. 

Table \ref{haze} reports the qualitative results. The proposed RCOT achieves the best performance overall. Notably, the unpaired RCOT also achieves a promising performance, especially in terms of the LPIPS and FID values. From the results shown in Figure \ref{vhaze} and \ref{vrealhaze}, we can observe that NOT \cite{korotin2023neural}
and IR-SDE \cite{luo2023image} produce sharp results but are not faithful in color and removing haze. Restormer, MPRNet, and PromptIR can remove the haze but produce results with distorted color. As a comparison, the RCOT can remove the haze while faithfully preserving the color.
\vspace{-0.2cm}
\paragraph{Image Super-resolution.}
\begin{table*}[!h]
	\centering
	\caption{SR results on DIV2K \cite{agustsson2017ntire}. (*) indicates the method in an unpaired setting.}
	\label{sr}
	\setlength{\tabcolsep}{15pt}
	\renewcommand{\arraystretch}{1.2}
	\resizebox{\textwidth}{!}{
		\begin{tabular}{@{}c|ccccccc|cc@{}}
			\toprule
			\multirow{2}{*}{Method}& \multirow{2}{*}{Bicubic}&NOT$^*$ & OTUR$^*$  & Restomer &IR-SDE&IDM&LINF& \multirow{2}{*}{RCOT$^*$}&\multirow{2}{*}{RCOT}\\
			&&\cite{korotin2023neural}&\cite{wang2022optimal}&\cite{Zamir2021Restormer}&\cite{luo2023image}&\cite{gao2023implicit}&\cite{yao2023local}&&\\
			\midrule
			PSNR$\uparrow$&26.70&25.73&24.88&\underline{27.90}&26.89&27.59&27.33&26.78&\textbf{28.41}\\
			SSIM$\uparrow$&0.771&0.718&0.679&\underline{0.796}&0.775&0.785&0.769&0.758&\textbf{0.804}\\
        	LPIPS$\downarrow$&0.186&0.136&0.128&0.136&0.118&0.121&\underline{0.116}&0.125&\textbf{0.114}\\
            FID$\downarrow$&15.88&19.27&24.65&6.14&8.56&5.73&\underline{3.26}&4.51&\textbf{1.22}\\
			\bottomrule	
	\end{tabular}}
\end{table*}
\begin{figure*}[!ht]
	\setlength\tabcolsep{1pt}
	\renewcommand{\arraystretch}{0.5} 
	\centering
	\begin{tabular}{ccccccc}
		\includegraphics[width=0.130\linewidth]{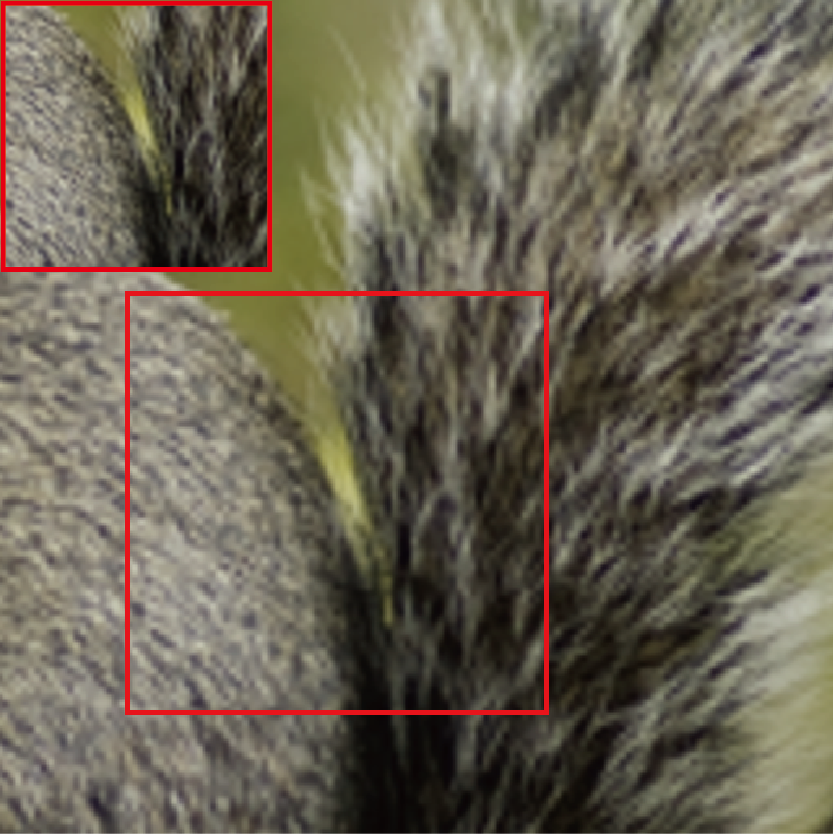}&
		\includegraphics[width=0.130\linewidth]{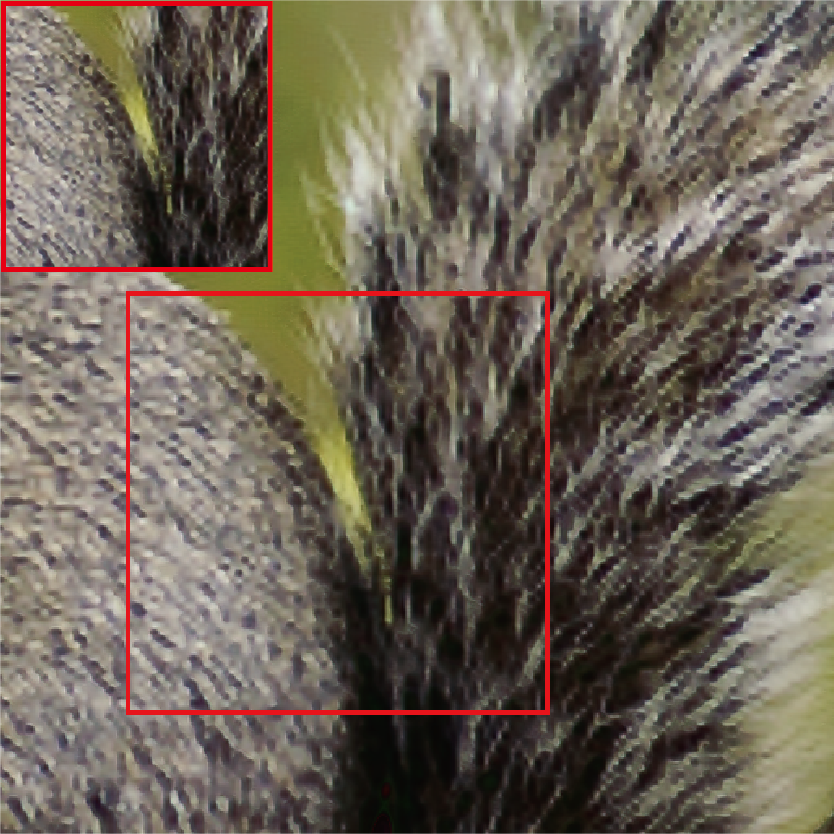}&
		\includegraphics[width=0.130\linewidth]{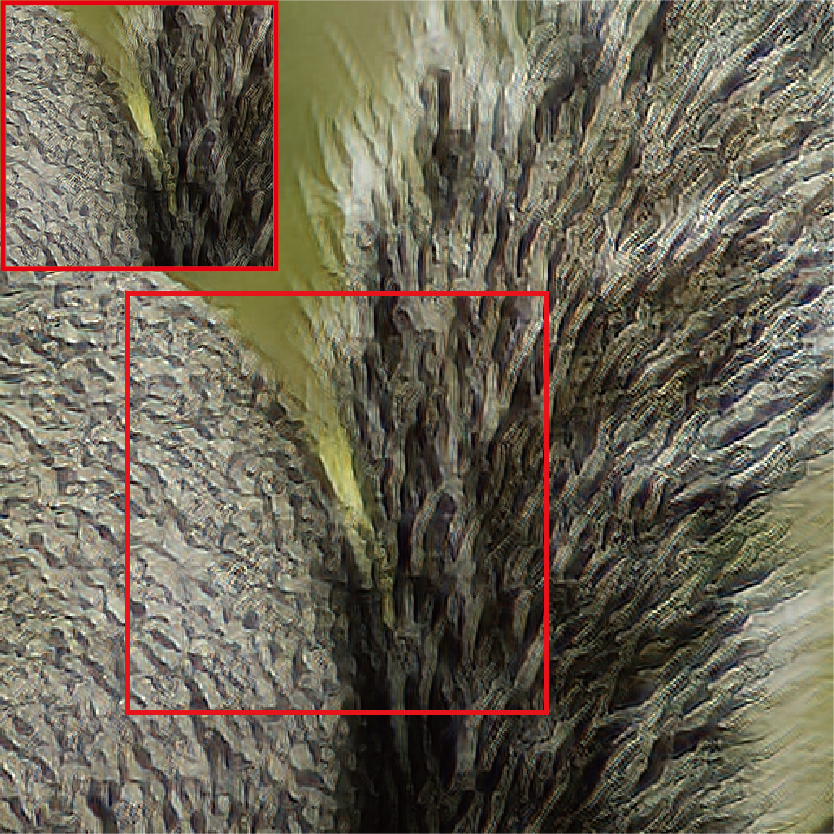}&
		\includegraphics[width=0.130\linewidth]{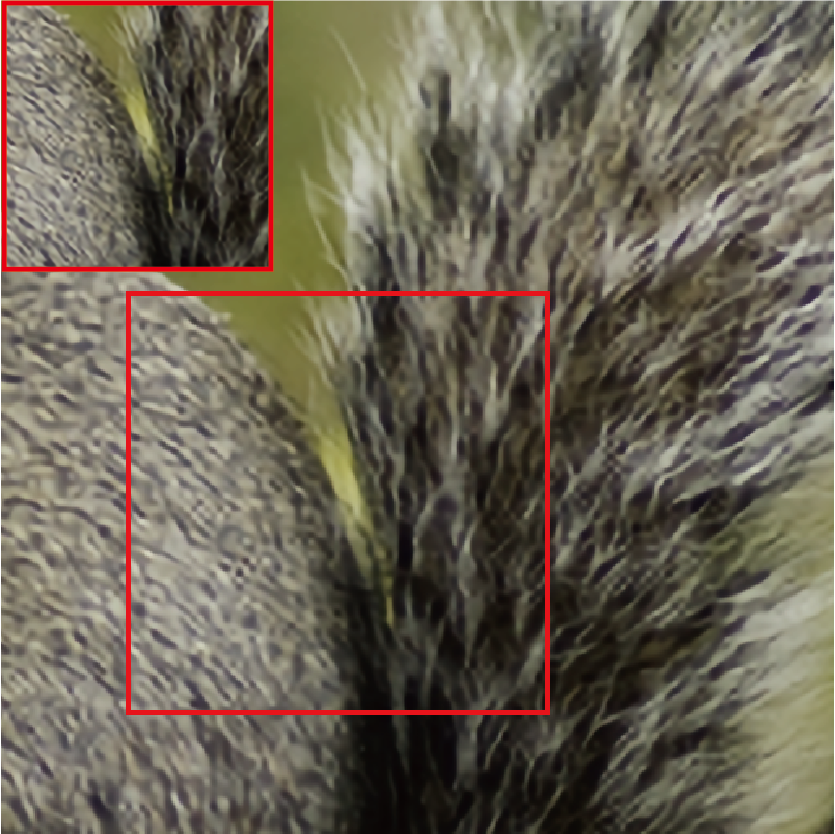}&
		\includegraphics[width=0.130\linewidth]{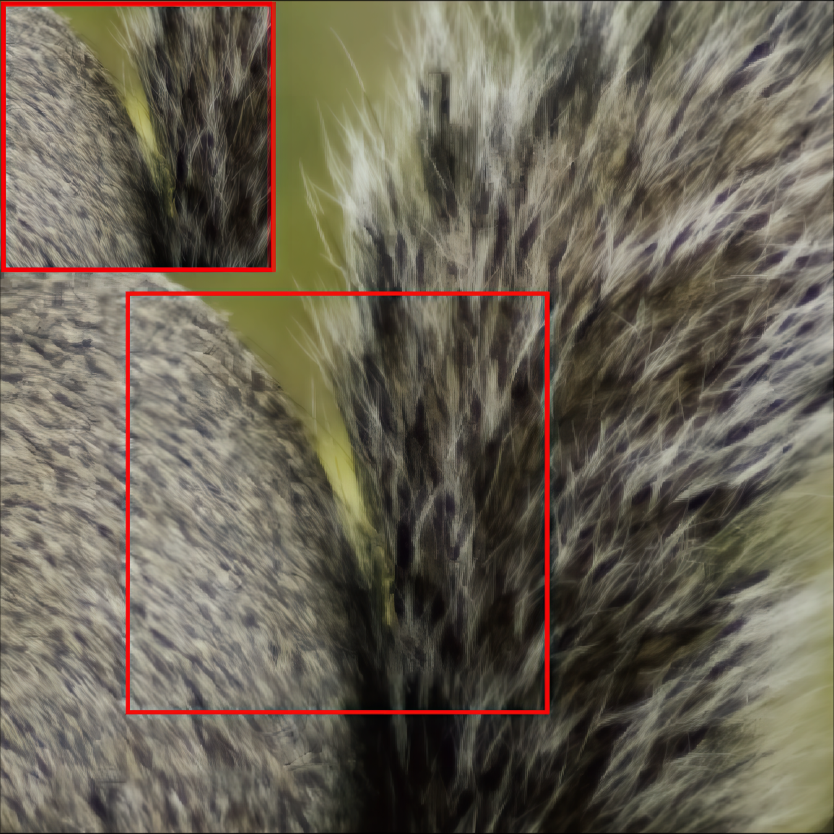}&
		\includegraphics[width=0.130\linewidth]{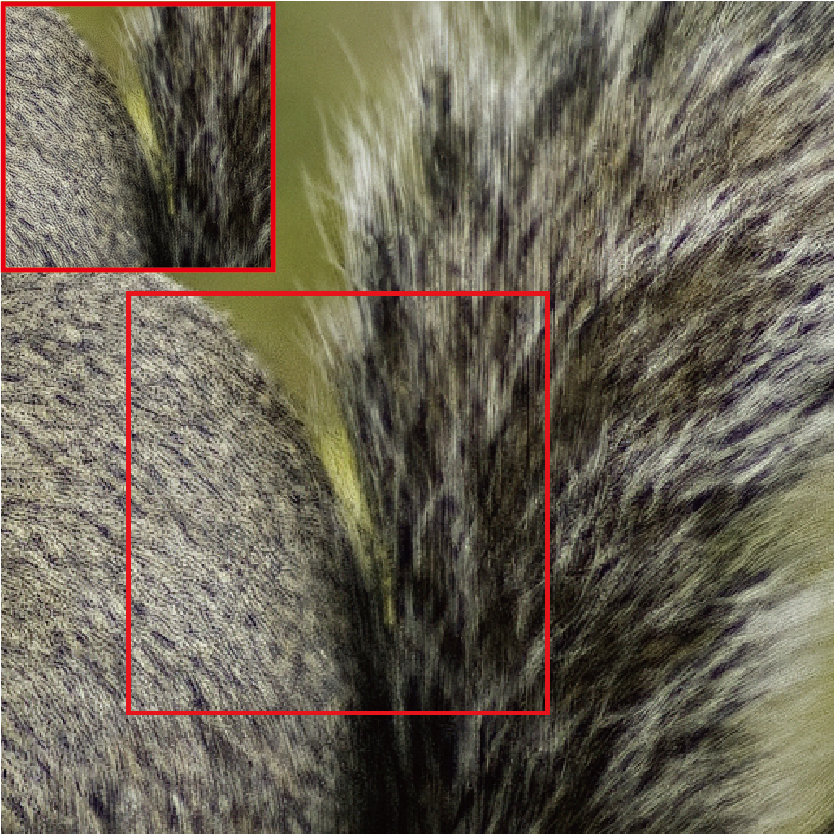}&
		\includegraphics[width=0.130\linewidth]{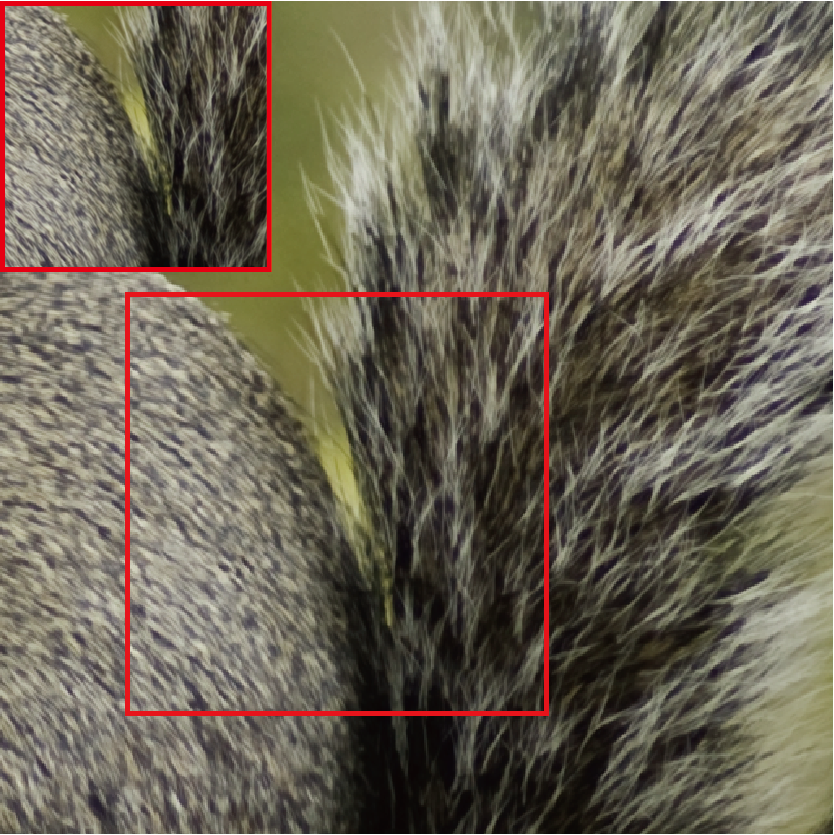}\\
		25.22/0.782&24.76/0.768&24.34/0.744&25.54/0.772&25.78/0.799&25.36/0.785&\textbf{26.12/0.814}\\
  
		Bicubic& NOT&OTUR&Restormer&IDM&IR-SDE&RCOT
	\end{tabular}
	\caption{Visual comparison of 4x SR on the DIV2K \cite{agustsson2017ntire} dataset. Please zoom in for better visualization. RCOT produces the sharpest HR image with more realistic structures.}
	\label{vsr}
\end{figure*}	

\begin{table*}[!h]
	\centering
	\caption{Results w/ and w/o TRC. TRC produces a significant boost of  PSNR/SSIM/FID values. }
	\label{treg}
	\setlength{\tabcolsep}{15pt}
	\resizebox{\textwidth}{!}{
		\begin{tabular}{c|ccc|c}
			\toprule
			Method  & SR on DIV2K & Deraining on Rain100L &  {Denoising on Kodak24}&
			Average \\
			\midrule
			w/o TRC & 27.24/0.785/6.72  & 34.20/0.955/28.63 & 28.45/0.790/64.20 & 29.97/0.843/32.47 \\
			\midrule
			w/ TRC& \textbf{28.41}/\textbf{0.804}/\textbf{1.22} & \textbf{37.27}/\textbf{0.978}/\textbf{8.67} & \textbf{29.53}/\textbf{0.828}/\textbf{33.13} & \textbf{31.72}/\textbf{0.870}/\textbf{14.34} \\
			\bottomrule
	\end{tabular}}
\end{table*}

\begin{table}[!htbp]
	\vspace{-0.3cm}
	\centering
	\caption{Effect of the loss components. The model is trained under different losses for denoising and tested on CBSD68 \cite{martin2001database} with noise level $\sigma=50$. FROT shows decent performance, working alone and with the $\ell_2$ loss. }
	\label{loss}
	\begin{tabular}{c|ccc}
		\toprule
		Loss&PSNR $\uparrow$&SSIM $\uparrow$&FID $\downarrow$\\
		\midrule
		$\mathcal L_{\rm FROT}$ &27.60&0.772&\underline{69.43}\\
		supervised $\mathcal \ell_2$ &\underline{27.69}&\underline{0.779}&78.69\\
		\midrule
		$\mathcal L_{\rm FROT}$+ $\ell_2$&\textbf{28.25}&\textbf{0.799}&\textbf{56.60}\\
		\bottomrule
	\end{tabular}
\end{table}
For image super-resolution, we evaluate the performance on the challenging DIV2K \cite{agustsson2017ntire} dataset, which consists of 800 (4x) LR and HR image pairs for training and 100 pairs for testing. 

The quantitative results in Table \ref{sr} show that the proposed RCOT outperforms the most recent generative methods. The qualitative gains of both distortion measures and perceptual quality are significant. Notably, even in unpaired setting the RCOT achieves the third-best performance in terms of the FID score. The visual examples in Figure \ref{vsr} and \ref{asr} show that the DPMs-based methods \cite{luo2023image, gao2023implicit} produce unnatural structures. The OT-based methods \cite{wang2022optimal, korotin2023neural} exhibit severe distortion. As a comparison, our RCOT restores a sharp image with more realistic details. 
\subsection{Ablation Study}
We primarily showcase the ablation studies on the TRC module and training loss components in the main body. \textbf{Four additional ablation experierments} are reported in Appendix \ref{ablation}.

\textbf{Importance of the TRC module.}
We validate the importance of the transport residual condition module on three tasks (SR on DIV2K \cite{agustsson2017ntire}, Deraining on Rain100L \cite{fan2019general}, and Denoising on Kodak24 \cite{franzen1999kodak}). We report the quantitative results with and without TRC in Table \ref{treg} and display visual comparisons in Figure \ref{freg}. The proposed TRC module yields an average gain of 1.75 dB of PSNR value over the basic model. With the TRC, the model restores images with better structures.
\begin{figure}[!h]
\vspace{-0.3cm}
	\setlength\tabcolsep{1pt}
	\renewcommand{\arraystretch}{0.5} 
	\centering
	\begin{tabular}{cccc}
		\includegraphics[width=0.225\linewidth]{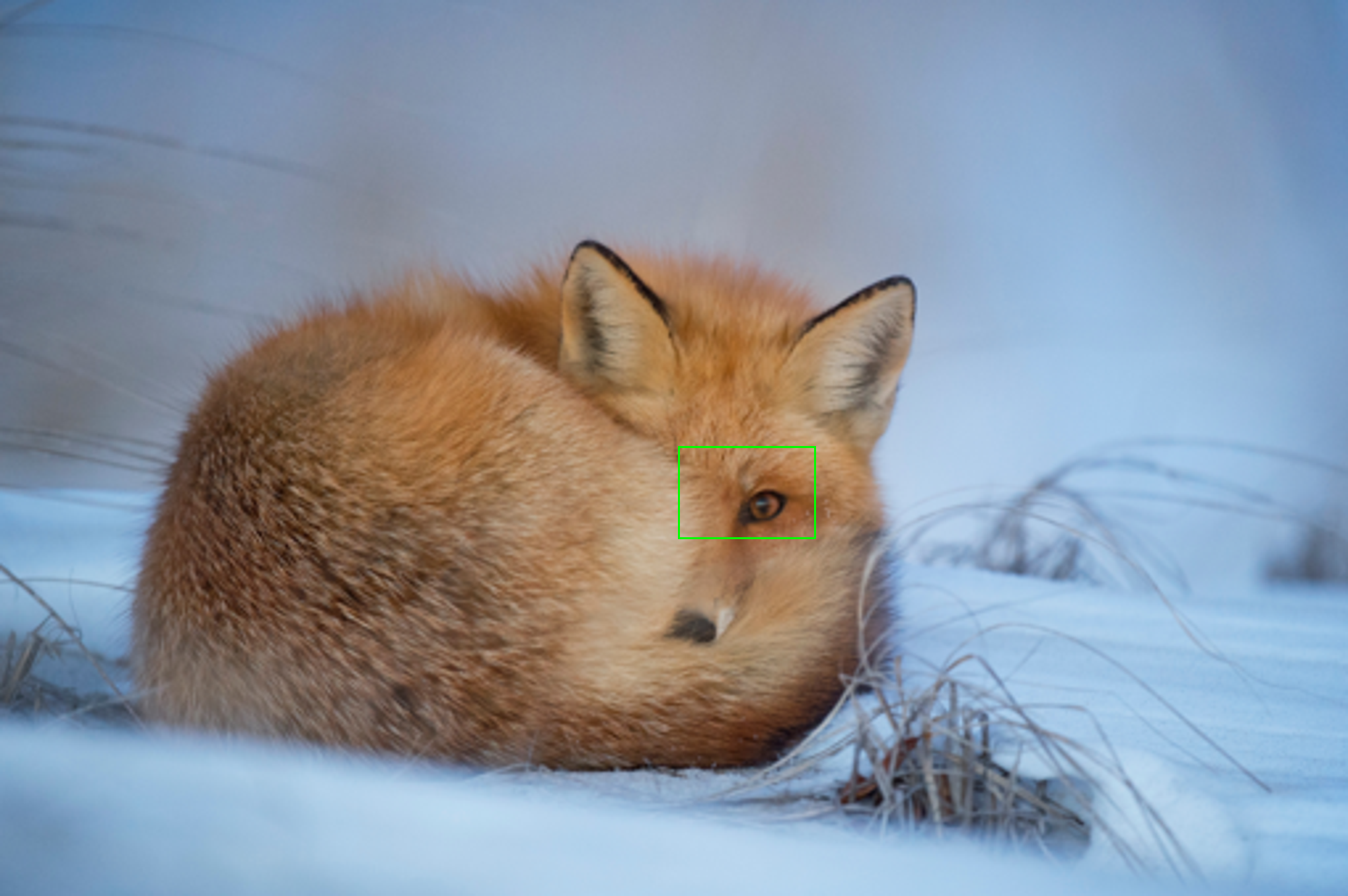}&
		\includegraphics[width=0.225\linewidth]{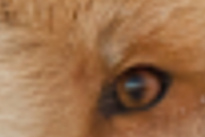}&
		\includegraphics[width=0.225\linewidth]{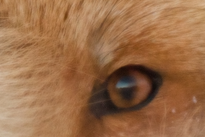}&
		\includegraphics[width=0.225\linewidth]{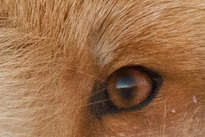}\\
		\includegraphics[width=0.225\linewidth]{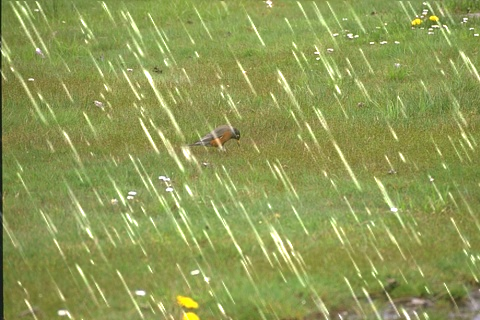}&
		\includegraphics[width=0.225\linewidth]{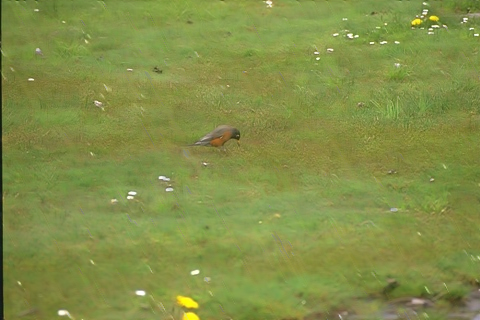}&
		\includegraphics[width=0.225\linewidth]{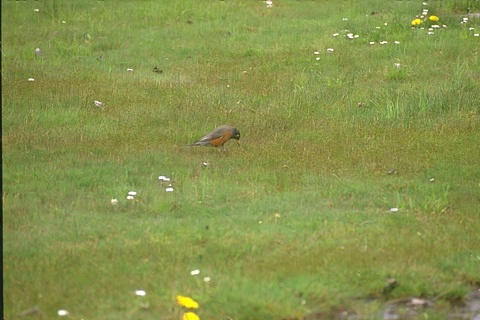}&
		\includegraphics[width=0.225\linewidth]{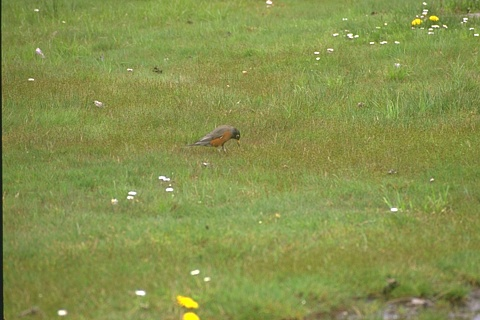}\\
		\includegraphics[width=0.225\linewidth]{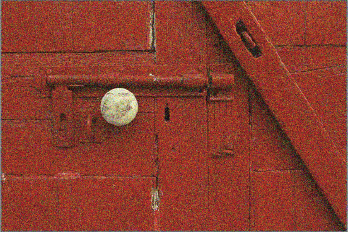}&
		\includegraphics[width=0.225\linewidth]{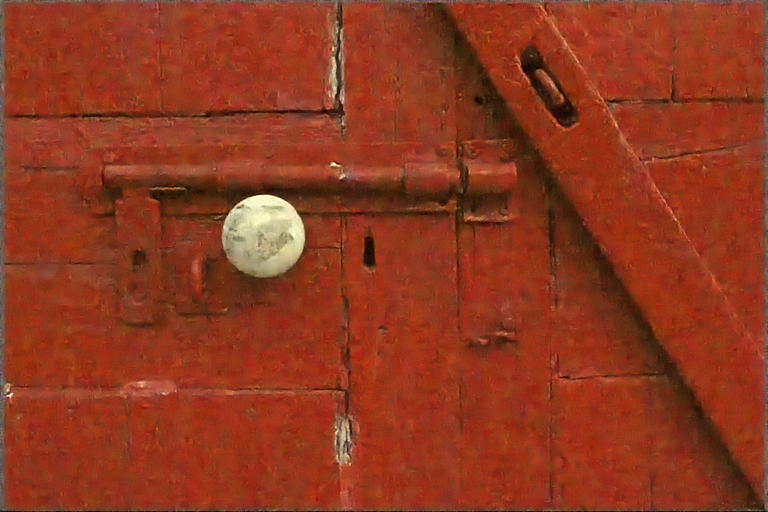}&
		\includegraphics[width=0.225\linewidth]{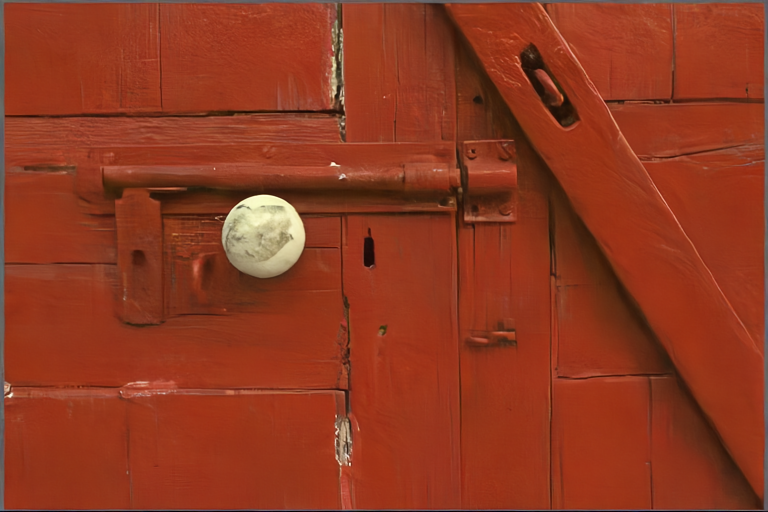}&
		\includegraphics[width=0.225\linewidth]{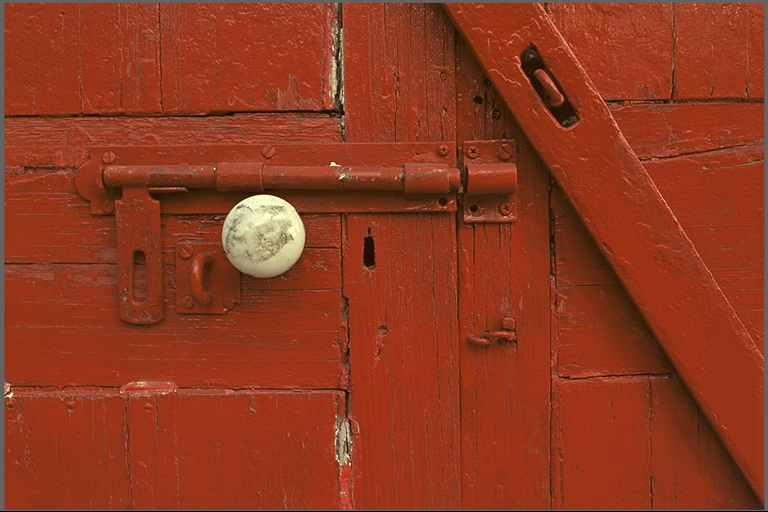}\\
		Degraded&w/o TRC&w/ TRC & Ground truth
	\end{tabular}
	\caption{Visual
		results to show the importance of TRC. The restored images with TRC in three tasks (SR, Deraining, Denoising)  contain better structures.}
  \vspace{-0.5cm}
	\label{freg}
\end{figure}

\textbf{Effect of the loss components}. To investigate the effect of different loss components (FROT and supervised $\ell_2$ loss), we train the transport map $T_\theta$ under different loss functions and evaluate its denoising performance. Table \ref{loss} reports the results on CBSD68 \cite{martin2001database} with noise level $\sigma=50$. When training under $\mathcal L_{\rm FROT}$, the performance is close to that under supervised $\ell_2$ loss. As a comparison, the training objective for the paired case, which integrates $\mathcal L_{\rm FROT}$ and supervised $\mathcal \ell_2 $, provides a significant gain to our model. These results validate the effectiveness of the proposed FROT under unpaired setting and show the importance of integrating supervised $\ell_2$ loss with $\mathcal L_{\rm FROT}$when the targets are available.
\vspace{-0.2cm}
\section{Conclusion}
\vspace{-0.2cm}
This paper proposed a novel Residual-Conditioned Optimal Transport (RCOT) approach to preserve perceptual structures while effectively removing the distortion, which treated image restoration as an OT problem and introduced the unique transport residual as a degradation-specific cue for both the transport cost and transport map. We first customized the Fourier residual-guided OT objective that exploits Fourier statistics of the degradation domain gap (represented by the residual). Based on FROT, we deduced a minimax problem that can be tackled to provide OT maps. Then we developed a two-pass RCOT map, in which the second pass generates refined results conditioned on the intermediate transport residual computed by the first pass for structure preservation. Extensive experiments demonstrated the effectiveness of RCOT for achieving competitive restoration performance, especially in terms of the ability to preserve structural content.  In the future, we are also interested in developing an all-in-one RCOT framework for restoration by properly integrating domain knowledge.

\section*{Limitation and Future Research} In this work, based on the empirical observation of the degradation domain gap in the frequency domain, we utilize handcrafted priors in the frequency domain to characterize the Fourier residual, which has been shown to be effective in the experiments. However, it may not be optimal for different degradations. Automatic, adaptive, and optimal learning for it will be explored in our future work.  We hope the method can be applied to all-in-one image restoration.
\section*{Impact Statement}
This work aims to advance the field of machine learning with applications in image restoration. It may be valuable to the research of optimal transport designed as a deep generative model working with low-quality data and has no ethical concerns as far as we know.
\section*{Acknowledgement}
This work was supported by National Key R\&D Program 2021YFA1003002, and NSFC (12125104, U20B2075, 623B2084, 12326615).

\bibliography{example_paper}
\bibliographystyle{icml2024}

\appendix
\onecolumn
\textbf{{\Large Appendix}}\medskip
\section{Proof}
\label{proof}
\begin{proposition}
	\label{OR}
	\text{(Saddle points provide OT maps).} For any optimal potential function $\varphi^*\in\arg\sup_\varphi\mathcal L(T,\varphi)$, it holds for the OT map $T^*$ (i.e., the transport map attaining the infimum of Monge's formulation \eqref{monge} under $\tilde{c}$) that
	\begin{align} T^*\in \mathop{\arg\min}_T\mathcal L(T,\varphi^*)\end{align}
\end{proposition}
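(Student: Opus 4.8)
The plan is to establish the single equality $\mathcal L(T^*,\varphi^*) = \inf_T \mathcal L(T,\varphi^*)$; since $\mathcal L(T^*,\varphi^*) \ge \inf_T \mathcal L(T,\varphi^*)$ holds trivially, this equality is exactly the assertion $T^* \in \arg\min_T \mathcal L(T,\varphi^*)$. The argument rests on two facts already available in the excerpt: strong duality, $\text{FROT}(\mathbb P,\mathbb Q) = \sup_\varphi \inf_T \mathcal L(T,\varphi)$ (obtained there from Villani's Kantorovich duality together with the Rockafellar interchange theorem), and the coincidence of the Monge and Kantorovich values under the augmented cost $\tilde c(y,x) = c(x,y) + g(y-x)$, so that the map $T^*$ defined in the statement satisfies $\int_Y \tilde c(y,T^*(y))\,d\mathbb P(y) = \text{FROT}(\mathbb P,\mathbb Q)$.

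First I would compute $\mathcal L(T^*,\varphi^*)$ directly. Because $T^*$ pushes $\mathbb P$ forward to $\mathbb Q$, the change-of-variables identity gives $\int_Y \varphi^*(T^*(y))\,d\mathbb P(y) = \int_X \varphi^*(x)\,d\mathbb Q(x)$, so the potential terms in $\mathcal L(T^*,\varphi^*)$ cancel. What is left is $\int_Y \big[c(T^*(y),y) + g(\hat r(T^*))\big]\,d\mathbb P(y) = \int_Y \tilde c(y,T^*(y))\,d\mathbb P(y)$, which by optimality of $T^*$ for the Monge problem equals $\text{FROT}(\mathbb P,\mathbb Q)$. Hence $\mathcal L(T^*,\varphi^*) = \text{FROT}(\mathbb P,\mathbb Q)$.

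Next I would use optimality of $\varphi^*$ in the outer problem: by definition $\inf_T \mathcal L(T,\varphi^*) = \sup_\varphi \inf_T \mathcal L(T,\varphi)$, which by strong duality equals $\text{FROT}(\mathbb P,\mathbb Q)$. Comparing with the previous paragraph, $\mathcal L(T^*,\varphi^*) = \text{FROT}(\mathbb P,\mathbb Q) = \inf_T \mathcal L(T,\varphi^*)$, and since the inner infimum is over all measurable maps $Y \to X$ (the constraint $T_\#\mathbb P = \mathbb Q$ having been removed by the interchange step, so $T^*$ is an admissible competitor), this is precisely $T^* \in \arg\min_T \mathcal L(T,\varphi^*)$.

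The computation above is essentially a one-line cancellation; the real content, and the main obstacle, is hidden in the two inherited facts. One must pin down hypotheses under which (a) a Monge-optimal map $T^*$ for $\tilde c$ exists and attains the Kantorovich value --- typically $\mathbb P$ atomless/absolutely continuous, $\tilde c$ lower semicontinuous and bounded below with mild growth, for which $g(\cdot) = \|\mathcal F(\cdot)\|_1$ or $\|\mathcal F(\cdot)\|_2^2$ added to a continuous $c$ is fine --- and (b) the Rockafellar interchange is applicable, which needs the family of admissible maps to be decomposable and $\varphi^*$ to be $\mathbb P$-integrable so that every integral written above is finite and the manipulations are legitimate. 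I would also note the harmless abuse of notation in the statement: $\arg\sup_\varphi \mathcal L(T,\varphi)$ should be read as $\arg\sup_\varphi \inf_T \mathcal L(T,\varphi)$, i.e.\ $\varphi^*$ is a maximizer of the dual functional $\varphi \mapsto \inf_T \mathcal L(T,\varphi)$.
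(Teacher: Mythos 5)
Your proposal is correct and rests on the same core computation as the paper's proof: use $T^*_\#\mathbb P=\mathbb Q$ and change of variables to cancel the potential terms, reducing $\mathcal L(T^*,\varphi^*)$ to the Monge-FROT value $\int_Y \tilde c(y,T^*(y))\,d\mathbb P(y)$. Where you go further is in making the second half of the argument explicit: the paper's proof stops at $\mathcal L(T^*,\varphi^*)=\mathbb M\text{-FROT}(\mathbb P,\mathbb Q)$ and does not state why this implies $T^*\in\arg\min_T\mathcal L(T,\varphi^*)$. Closing that gap requires exactly what you supply --- that optimality of $\varphi^*$ together with strong duality gives $\inf_T\mathcal L(T,\varphi^*)=\sup_\varphi\inf_T\mathcal L(T,\varphi)=\text{FROT}(\mathbb P,\mathbb Q)=\mathbb M\text{-FROT}(\mathbb P,\mathbb Q)$, so the common value shows $T^*$ attains the unconstrained infimum. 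You are also right to note the notational slip in the statement ($\arg\sup_\varphi\mathcal L(T,\varphi)$ has a dangling $T$ and must be read as $\arg\sup_\varphi\inf_T\mathcal L(T,\varphi)$), and to flag the regularity hypotheses (existence of a Monge minimizer for $\tilde c$, Monge--Kantorovich coincidence, decomposability for the Rockafellar interchange, integrability of $\varphi^*$) that the paper invokes implicitly. In short: same route, but your version is complete where the paper's is elliptical.
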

\begin{proof}
	First, we give Monge's primal form of the FROT objective
	\begin{align}
		\mathbb M\text{-FROT}(\mathbb P,\mathbb Q)=\inf_{T\#\mathbb P=\mathbb Q}\int_Y\left[c(T(y),y)+g(\hat r(T))\right]d\mathbb P(y).
	\end{align}
	For the OT map  $T^*$, it holds $\hat r(T^*)=y-T^*(y)=r$, then we find
	\begin{align}
		\label{or}
		\mathcal L(T^*,\varphi^*)=	\int_X\varphi^*(x)d\mathbb Q(x)+\int_{Y}\big[c(T^*(y),y)+g( r)-\varphi^*(T^*(y))\big]d\mathbb P(y).
	\end{align}
	Using $T^*_{\#}\mathbb P=\mathbb Q$ and the change of variables $T^*(y)=x$, we can derive $$\int_Y\varphi^*(T^*(y))d\mathbb P(y)=\int_X\varphi^*(x)d\mathbb Q(x).$$
	Substituting this equality into (\ref{or}), we obtain
	\begin{align}\mathcal L(T^*,\varphi^*)=\int_{Y}\left[c(T^*(y),y)+g(r)\right]d\mathbb P(y)\nonumber=\inf_{T\#\mathbb P=\mathbb Q}\int_Y\left[c(T(y),y)+g(\hat r(T))\right]d\mathbb P(y)\nonumber=\mathbb M\text{-FROT}(\mathbb P,\mathbb Q).\nonumber \end{align}
\end{proof}
\section{Algorithm}
\vspace{-0.3cm}
\label{algorithm}
\begin{algorithm}[!h]
	\caption{RCOT Solver to compute the OT map.}  
	\textbf{Input}:  degraded  dataset $Y\sim \mathbb P$; high-quality   dataset $X\sim \mathbb Q$; transport network: $T_\theta$;\\ potential network: $\varphi_w$; the number of iterations of $\theta$ per iteration of $\omega$: $n_T$;
	
	\begin{algorithmic}[1]
		\label{algo}
		\WHILE{$\theta$ has not converged}
		\STATE Sample batches $\mathcal Y$ from $Y$, $\mathcal X$ from $X$;
		\STATE $\mathcal L(\omega)\leftarrow \frac{1}{|\mathcal Y|}\sum_{y\in \mathcal Y}\varphi_w(T_\theta(y))-\frac{1}{|\mathcal X|}\sum_{x\in \mathcal X}\varphi_w(x)$;
		\STATE	Update $w$ by using $\frac{\partial \mathcal L_\varphi}{\partial w}$;
		\FOR{$t=0,\cdots,n_T$}
		\STATE Compute $T_\theta(y)$ via (\ref{2p});
		\IF{there exist paired samples}
		\STATE $\mathcal L(\theta)\leftarrow\frac{1}{|\mathcal Y|}\sum_{y\in \mathcal Y}\left[c(y,T_\theta(y))+g(\hat r(T_\theta))-\varphi_w(T_\theta(y))\right]+\frac{\gamma}{|P|}\sum_{(y,x)\in P}\|T_\theta(y)-x\|^2$
		\ELSE
		\STATE $\mathcal L(\theta)\leftarrow\frac{1}{|\mathcal Y|}\sum_{y\in \mathcal Y}\left[c(y,T_\theta(y))+g(\hat r(T_\theta))-\varphi_w(T_\theta(y))\right]$;
		\ENDIF
		\STATE Update $\theta$ by using $\frac{\partial \mathcal L(\theta)}{\partial \theta}$;
		\ENDFOR
		\ENDWHILE
	\end{algorithmic} 
\end{algorithm} 
\section{Implementation Details and Baselines}
\label{details}
This section introduces the compared methods and detailed settings in our experiments. 

\textbf{Implementation details.} We train separate models for different tasks using the RMSProp optimizer with a learning rate of $1\times10^{-4}$ for the transport network $T_\theta$ and $0.5\times10^{-4}$ for the potential network $\varphi_w$. The inner iteration number $n_T$ is set to be 1. The learning rate is decayed by a factor of 10 after 100 epochs. In all experiments, the transport network $T_\theta$ uses the backbone in MPRNet \cite{Zamir2021MPRNet}.  The residual encoder consists of two CNN
down-sampling layers with residual channel attention block
(RCAB) \cite{zhang2018image}. In the FROT objective, $c(x,y)$ is suggested as $\ell_2$-norm. During training, we crop patches of size 256x256 as input and use a batch size of 4. All the experiments are conducted on the Pytorch framework with an NVIDIA 4090 GPU. For super-resolution, there is an extra preprocessing step. The LR images undergo bicubic rescaling to match the dimensions of their respective high-resolution counterparts. The source code will be released after the possible publication of our work. 

For the unpaired setting, although datasets that contain paired labels are utilized for training, we randomly shuffle the target $x$ and degraded input $y$ to ensure the loss are isolated from paired information, which is a common strategy \cite{wang2022optimal, korotin2023kernel, korotin2023neural} of unpaired training for restoration problems.

\textbf{Representative Compared methods.} For image denoising, deraining, and dehazing, we choose four most recent representative methods with state-of-the-art performance and two recent OT-based generative methods as competitors. They include MPRNet \cite{Zamir2021MPRNet}, Restormer \cite{Zamir2021Restormer}, IR-SDE \cite{luo2023image}, OTUR \cite{wang2022optimal}, NOT \cite{korotin2023neural}, and PromptIR \cite{potlapalli2023promptir}. 

MPRNet \cite{Zamir2021MPRNet} and Restormer \cite{Zamir2021Restormer} respectively specialize two backbones for restoration. Notably, Restormer \cite{Zamir2021Restormer} designs an efficient Transformer model by making several key designs in the building blocks (multi-head attention and feed-forward network) such that it can capture long-range pixel interactions, while still remaining applicable to large images. It achieves state-of-the-art results on several image restoration tasks.

PromptIR \cite{potlapalli2023promptir} utilizes learnable task prompts to incorporate the degradation-specific knowledge in the Transformer model to achieve task-aware restoration. We train the PromptIR model \textbf{in single task setting}.

For image super-resolution, we choose the most recent state-of-the-art generative diffusion-based  \cite{luo2023image, gao2023implicit}, sophisticated transformer-based method Restormer \cite{Zamir2021Restormer}, and OT-based methods \cite{korotin2023neural, wang2022optimal} as competitors for comparison.

\section{Additional Ablation Studies}
\label{ablation}
\subsection{Versatility and generalizability beyond specific network designs} 

The RCOT can be easily applied to different network architectures or frameworks. The two-pass TRC module is a plug-in module (Figure \ref{model}), allowing us to use any architecture as a base model to generate the restored result, and then use this result to calculate the residual. We have now included a comparison on Rain100L dataset \cite{fan2019general} between the MPRNet \cite{Zamir2021MPRNet}, NAFNet \cite{chu2022nafssr}, Restormer \cite{Zamir2021Restormer} methods and the corresponding methods with the proposed TRC module.

\begin{table*}[!h]
	\centering
	\caption{The influence of the transport residual condition (TRC) module being integrated into different network backbones. The metrics are presented as PSNR ($\uparrow$)/SSIM ($\uparrow$)/LPIPS ($\downarrow$)/FID ($\downarrow$) values.}
	\label{trc}
	\setlength{\tabcolsep}{15pt}
	\resizebox{\textwidth}{!}{
		\begin{tabular}{c|ccc}
			\toprule
			Method  & MPRNet \cite{Zamir2021MPRNet} & NAFNet \cite{chu2022nafssr} & Restormer \cite{Zamir2021Restormer} \\
			\midrule
			w/o TRC & 34.95/0.964/0.0387/21.61  & 35.58/0.969/0.0355/18.35 & 36.74/0.978/0.0226/13.29  \\
			\midrule
			w/ TRC& \textbf{36.78/0.972/0.0145/12.58}&\textbf{37.10/0.976/0.0134/11.86}&\textbf{38.22/0.984/0.0102/7.01} \\
			\bottomrule
	\end{tabular}}
\end{table*}

The results in Table \ref{trc} show that the TRC module brings a meaningful boost to three SOTA network architectures, which validates its versatility and generalizability beyond specific network designs.
\subsection{Effect of different conditions}
In this paper, we treat the transport residual as a degradation-specific condition by encoding its embedding to adaptively enhance the representation of the restoration. To better demonstrate the benefits, we also try conditioning directly on the output restored image $\hat x_0$ of the base model. The average qualitative results on (SR, Deraining, and Denoising) are reported in Table \ref{condition} which sustains our claim.

 \begin{table*}[!h]
	\centering
	\caption{The qualitative results of the model with different conditions for the second pass restoration.}
	\label{condition}
		\begin{tabular}{c|ccc}
			\toprule
			Method  & w/o condition &conditioned on $\hat x_0$&conditioned on residual $\hat r_0$\\
			\midrule
			PSNR&29.97&30.39&\textbf{31.72}\\
                SSIM&0.843&0.848&\textbf{0.870}\\
                FID&32.47&26.52&\textbf{14.34}\\
			\bottomrule
	\end{tabular}
\end{table*}
\subsection{Effect of the Fourier residual penalty in FROT objective} We investigate the effect of the proposed FROT cost on four restoration tasks (Denoising on Kodak24 \cite{franzen1999kodak} with noise level $\sigma=50$, SR on DIV2K \cite{agustsson2017ntire}, Deraining on Rain100L \cite{fan2019general}, and Dehazing on SOTS \cite{li2018benchmarking}). In Table \ref{AFROT}, we compare the performance of the models trained under the unpaired FROT cost (w/ $g(\hat r)$ ) and regular OT cost (w/o $g(\hat r)$). The results show that the Fourier residual penalty, integrating degradation-specific knowledge into the transport cost,   brings meaningful gains.
\begin{table}[!htbp]
	\centering
	\caption{Results of the OT cost w/ $g(\hat r)$ (FROT) and w/o $g(\hat r)$. The models are trained by only using unpaired FROT, in which $\ell_2$-norm is chosen as $c(x,y)$ and $g(\hat r)=\|\mathcal F(\hat r)\|_1$ for SR, draining, and dehazing while utilizing $g(\hat r)=\|\mathcal F(\hat r)\|_2$ for denoising.}
	\label{AFROT}
 \vspace{0.3cm}
	\setlength{\tabcolsep}{3pt}
	\begin{tabular}{c|cccc}
		\toprule
		Task&Transport cost&~PSNR $\uparrow$& SSIM $\downarrow$&FID $\downarrow$\\\midrule
		\multirow{2}{*}{Denoising}&w/o $g(\hat r)$&28.37&0.780&69.12\\
		&w/ $g(\hat r)$&\textbf{28.64}&\textbf{0.792}&\textbf{63.27}\\\midrule
		\multirow{2}{*}{SR}&w/o $g(\hat r)$&25.96&0.723&11.41\\
		&w/ $g(\hat r)$&\textbf{26.78}&\textbf{0.758}&\textbf{4.51}\\\midrule
		\multirow{2}{*}{Deraining}&w/o $g(\hat r)$&35.69&0.945&20.13\\
		&w/ $g(\hat r)$&\textbf{36.22}&\textbf{0.972}&\textbf{12.59}\\\midrule
		\multirow{2}{*}{Dehazing}&w/o $g(\hat r)$&29.72&0.953&15.56\\
		&w/ $g(\hat r)$&\textbf{30.34}&\textbf{0.965}&\textbf{12.57}\\\bottomrule
	\end{tabular}
	\vspace{-0.3cm}
\end{table}
\subsection{Effect of different regularizers for the Fourier residual}
We conduct an extra ablation study on the regularizer for the Fourier residual. The results are reported in Table \ref{regularizer}.

\begin{table}[H]
\centering
\caption{Comparison of different regularizers for the Fourier residual.}
\label{regularizer}
\begin{tabular}{c|ccc}
\toprule
Regularizer & $L_{0.5}$ & $\ell_2$ & $L_{1}$ \\ \midrule
SR          & \textbf{26.82}/0.756 & 26.49/0.747 & 26.78/\textbf{0.758} \\
Denoising   & 28.52/0.782 & \textbf{28.79/0.795} & 28.64/0.792 \\
Deraining   & 36.01/0.968 & 35.80/0.948 & \textbf{36.22/0.972} \\
Dehazing    & 30.18/0.958 & 29.80/0.954 & \textbf{30.34/0.965} \\ \bottomrule
\end{tabular}
\end{table}
For the denoising task, the $\ell_2$ regularizer is more suitable for the Fourier residual since the Gaussian noise is basically from a Gaussian distribution, which is equivalent to the  $\ell_2$ regularization. The $\ell_1$ sparsity regularizer applies to other degradations.

\section{Sensitivity to the Supervised Trade-off Parameter $\gamma$}
We test the sensitivity of RCOT to the trade-off parameter $\gamma$ on SR tasks. Figure \ref{sen} shows that when $\gamma=1\times 10^4$, RCOT's performance might approach its limit. Interestingly, the distortion measures, i.e., PSNR and SSIM, go steady when $\gamma$ increases after $1\times10^3$. However, the FID score seems to turn bad if $\gamma$ keeps increasing. Based on this result, we fix $\gamma=10^4$ in our experiments.
\begin{figure}[!h]
	\centering
	\includegraphics[width=0.8\linewidth]{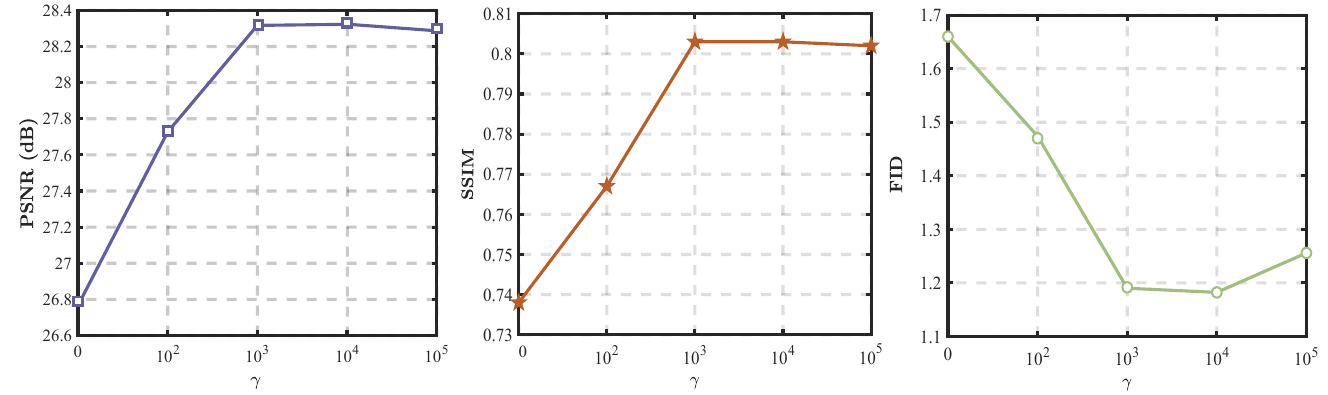}
	\caption{Test of the sensitivity of RCOT to the parameter $\gamma$ on the DIV2K \cite{agustsson2017ntire} dataset for SR task.}
	\label{sen}
\end{figure}

\section{Training Cost Curves for Three tasks}
In Figure \ref{curve}, we display the cost curves over three tasks (i.e., SR, Deraining, and Denoising) of $T_\theta$ and $\varphi_w$ in the training process. The $T_\theta$ cost curve is normalized to $[0,1]$. $\varphi_w$ cost is scaled to $[0,1]$ and then take the negative.
\begin{figure}[!h]
	\centering
	\includegraphics[width=0.8\linewidth]{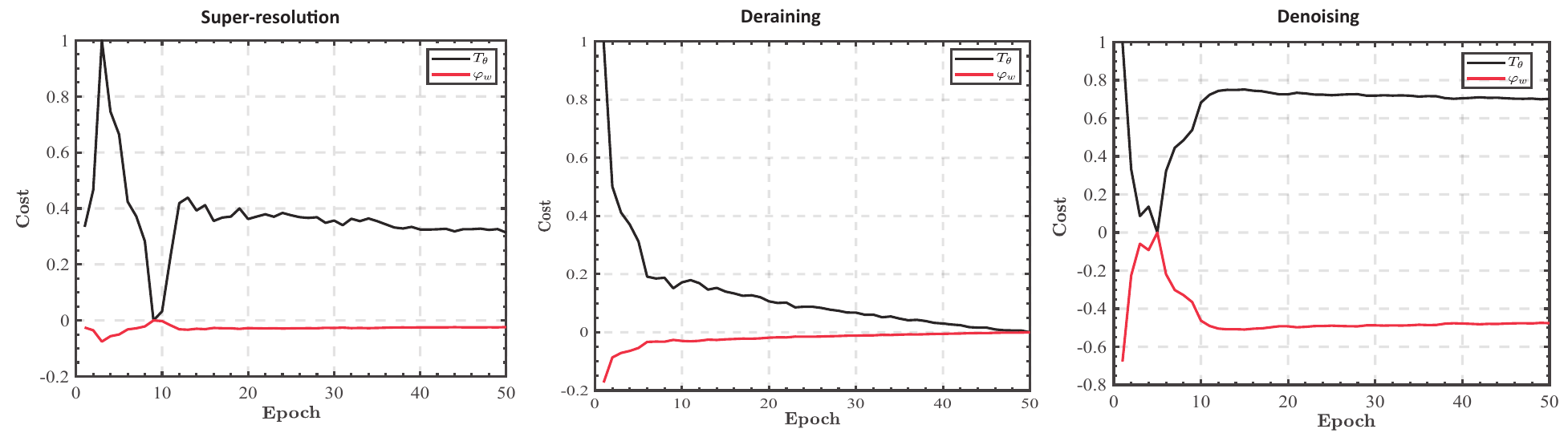}
	\caption{The cost curves for three tasks. The cost of $T_\theta$ is scaled to $[0,1]$. The cost of $\varphi_w$ is scaled to $[0,1]$ and then takes the negative.}
	\label{curve}
 \vspace{-0.3cm}
\end{figure}
\section{Additional Evaluations and Comparisons}
\label{aexp}

\textbf{Parameter quantity and computational complexity.}
We report the parameter quantity and computational complexity in Table \ref{para}. We compare with the two most recent SOTA methods IRSDE and PromptIR. The results show that our approach is based on a relatively lightweight model, which sustains its effectiveness.
\begin{SCtable*}[][!h]
	\centering
	\caption{Comparison of the number of parameters, model computational efficiency, and inference time. The metrics computed on Rain100L \cite{fan2019general} with a single NVIDIA 4090 GPU.}
	\label{para}
		\begin{tabular}{c|ccc}
			\toprule
			Method  & IR-SDE  & PromptIR & RCOT \\
			\midrule
			\#Param&36.2M&33M&14.2M\\
                Flops&117G&158G&142G\\
                Inference time&9.09s&3.25s&1.06s\\
			\bottomrule
	\end{tabular}
	\vspace{-0.3cm}
\end{SCtable*}
\begin{table*}[!h]
	\centering
	\caption{Cross-domain comparisons. The models are trained on synthetic datasets and tested on real-world datasets (real rainy SPANet \cite{Wang_2019_CVPR} and real hazy O-HAZE \cite{ancuti2018haze}).  ($*$) indicates an unpaired setting.}
	\label{cross}
	\setlength{\tabcolsep}{15pt}
	\renewcommand{\arraystretch}{1.2}
	\resizebox{\textwidth}{!}{
		\begin{tabular}{@{}ccccccccc@{}}
			\toprule
			\multirow{2}{*}{Method}&\multicolumn{4}{c}{Real rainy SPANet~\cite{Wang_2019_CVPR}} & \multicolumn{4}{c}{Real hazy O-HAZE~\cite{ancuti2018haze}}\\ \cmidrule(lr){2-5}  \cmidrule(lr){6-9} 
			& PSNR ($\uparrow$) & SSIM ($\uparrow$) & LPIPS ($\downarrow$) & FID ($\downarrow$) & PSNR ($\uparrow$) & SSIM ($\uparrow$) & LPIPS ($\downarrow$) & FID ($\downarrow$) \\
			\midrule 
			NOT$^*$ \cite{korotin2023neural}& 31.22 &0.882&0.040&50.23& 17.28 & 0.698& 0.215 &198.56 \\
			OTUR$^*$  \cite{wang2022optimal}& 36.52& 0.932 & 0.025 &33.21 & 17.45& 0.712& 0.202 &199.78\\
			MPRNet \cite{Zamir2021MPRNet}& 37.38 & 0.943 & 0.031 & 36.37 & 21.28 & 0.765& 0.244 &228.67\\
			Restormer \cite{Zamir2021Restormer}& 37.86 & 0.951 & 0.022 & 33.65 & \underline{24.56}& 0.788& 0.233 &196.33\\
			IR-SDE \cite{luo2023image}& \underline{38.29} & \underline{0.966} & \underline{0.013} & \underline{19.52} & 24.53 & \underline{0.796}& \underline{0.169} &\underline{186.44}\\
			PromptIR \cite{potlapalli2023promptir}& 37.23 & 0.947 & 0.019 &  34.29 & 24.32& 0.776 & 0.240&  205.66\\
			\midrule
			RCOT$^*$ & 37.37 & 0.948 & 0.015 & 21.20 & 22.35 & 0.775& 0.178 &189.72\\
			
			RCOT& \textbf{40.02} & \textbf{0.972} & \textbf{0.009} & \textbf{15.66} & \textbf{26.59} & \textbf{0.827}& \textbf{0.148} &\textbf{165.12} \\ 
			\bottomrule
	\end{tabular}}
 \vspace{-0.6cm}
\end{table*}
\textbf{Cross-domain evaluations.}
We conduct two representative cross-domain evaluations to compare the generalizability of RCOT and other methods. Specifically, we train the deraining models on the synthetic Rain13K dataset \cite{fu2017removing, li2016rain, yang2017deep, zhang2018density, zhang2019image} and then test their performances on the real rainy dataset SPANet \cite{Wang_2019_CVPR}. Then we train the dehazing models on the synthetic SOTS dataset and then test their performances on the real hazy dataset O-HAZE \cite{ancuti2018haze}. Table \ref{cross} reports the quantitative results. The results show that the compared methods generally exhibit a declined performance in the presence of a domain gap. However, RCOT still achieves notable performance under all metrics, which sustains the generalizability of RCOT over real-world datasets.

\textbf{Evaluation of a single model for multiple degradations.}
We are interested in extending RCOT to realize all-in-one restoration in the future, which is to train a single model for multiple degradations. Since the Fourier residual regularizer in RCOT is task-specific, there is still a gap towards the all-in-one target. Nevertheless, the regularizers for deraining and dehazing are set to be $\ell_1$ sparsity. Hence we train our model with a combination of rainy data (Rain100L training set) and noisy data (BSD400 and WED with $\sigma=25$). Then we test on Kodak24 with noise level $\sigma=25$ and Rain100L test set. The quantitative results are reported in Table \ref{all}, in which  PromptIR \cite{potlapalli2023promptir}, the most recent representative all-in-one method is chosen as a competitor. 
The results show that our RCOT exhibits a better capability of handling multiple degradations. The reason behind the reason should be our residual embedding contains richer degradation-specific knowledge (e.g., degradation type and level) as compared to the learnable visual prompt in PromptIR \cite{potlapalli2023promptir}.
\begin{table}[!h]
\vspace{-0.7cm}
\centering
\caption{Evaluation of a single model for both denoising and deraining.}
\label{all}
\begin{tabular}{c|cc}
\toprule
Method & Rain100L & Kodak24\\ \midrule
PromptIR \cite{potlapalli2023promptir} &36.79/0.974&31.25/0.872\\
RCOT (ours)   & \textbf{38.02/0.984}&\textbf{31.82/0.879} \\
\bottomrule
\end{tabular}
\vspace{-0.7cm}
\end{table}
\subsection{Additional comparison with GAN-based methods}
\vspace{-0.2cm}
Given that the potential network $\varphi$ can be considered as one discriminator, a comparison with GAN-based approaches could provide valuable insights into the effectiveness and uniqueness of the proposed method in terms of image restoration. We compare RCOT with the most recent GAN-based restoration methods for deraining (on synthetic Rain100L and real SPANet) and SR (on DIV2K). 
\begin{table*}[!h]
\centering
\caption{Comparison with GAN-based methods.}
\label{all}
\begin{tabular}{c|cc}
\toprule
Method & Rain100L & Kodak24\\ \midrule
CycleGAN \cite{zhu2017unpaired} &32.03/0.889/0.052/37.26&28.79/0.923/0.036/35.25\\
DeCyGAN \cite{wei2021deraincyclegan}  & 32.75/0.893/0.047/32.15&34.78/0.929/0.042/32.67\\
RCOT &\textbf{37.27/0.980/0.012/7.97}&\textbf{43.77/0.993/0.008/9.52}\\
\bottomrule
\end{tabular}
\quad\\
\quad\\
\begin{tabular}{c|c}
\toprule
Method & DIV2K\\ \midrule
ESRGAN \cite{wang2018esrgan} &26.22/0.752/0.112/6.59\\
RankSRGAN \cite{zhang2019ranksrgan}  & 26.55/0.75/0.117/9.52\\
RCOT &\textbf{28.41/0.804/0.114/1.22}\\
\bottomrule
\end{tabular}
\end{table*}

 \section{Additional Visual Results}
\subsection{Additional Comparison on Real Rainy Dataset}
\label{com}
We evaluate the performance of the compared methods on the real rainy dataset SPANet \cite{Wang_2019_CVPR}. This dataset contains 27.5K paired rainy and rain-free images for training, and 1, 000 paired images for testing.
\begin{figure*}[!h]
	\setlength\tabcolsep{1pt}
	\renewcommand{\arraystretch}{0.5} 
	\centering
	\begin{tabular}{cccccccc}
		\includegraphics[width=0.120\linewidth]{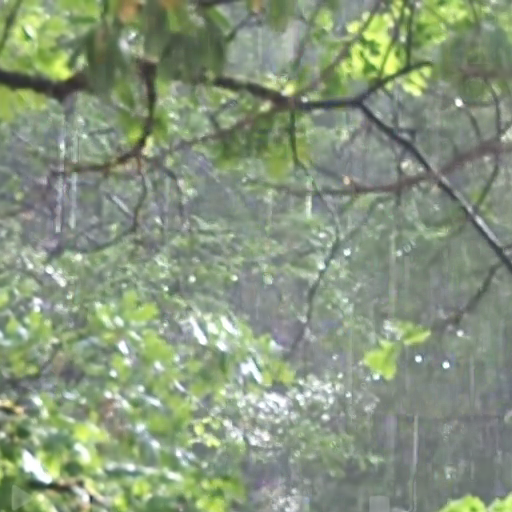}&
		\includegraphics[width=0.120\linewidth]{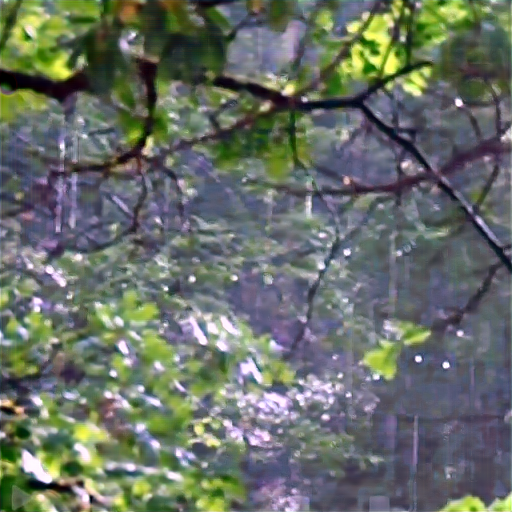}&
		\includegraphics[width=0.120\linewidth]{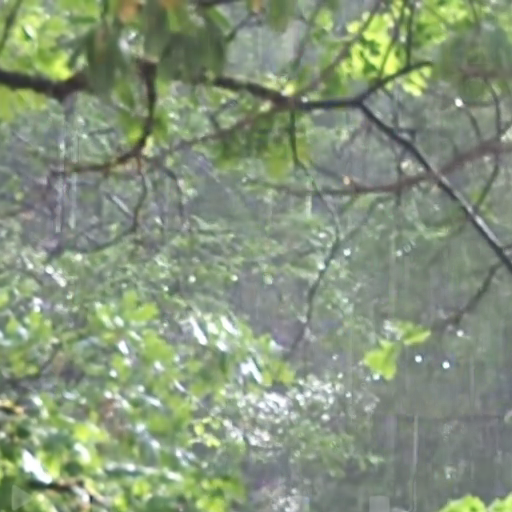}&
		\includegraphics[width=0.120\linewidth]{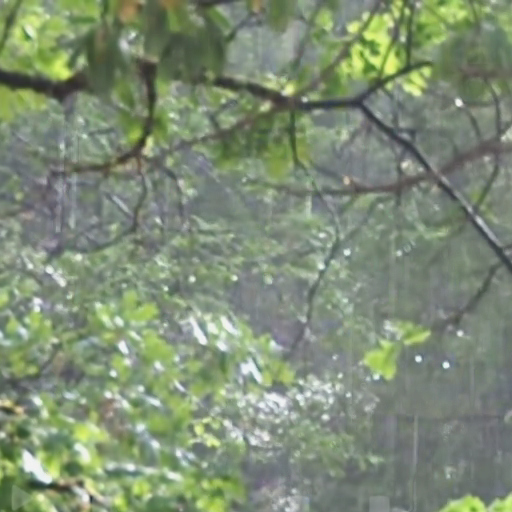}&
		\includegraphics[width=0.120\linewidth]{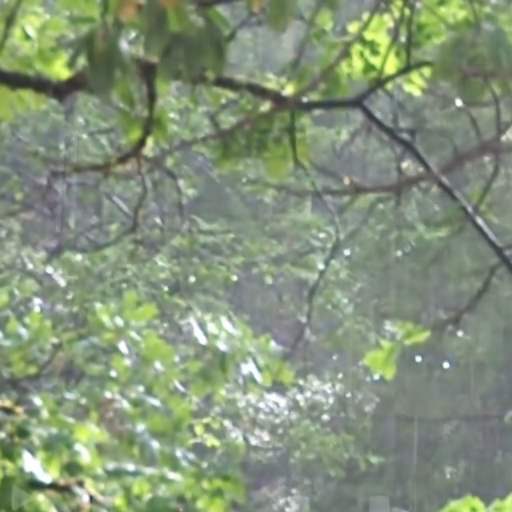}&
		\includegraphics[width=0.120\linewidth]{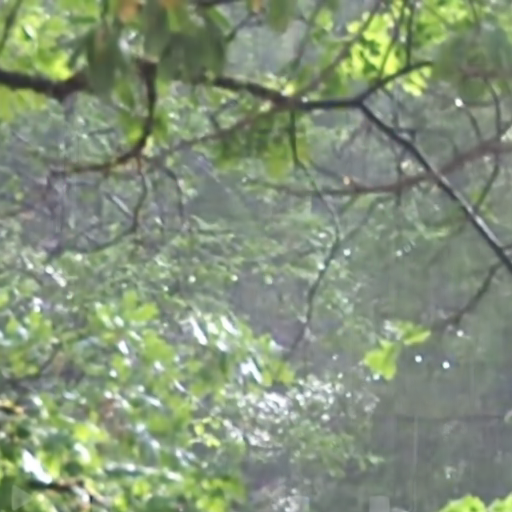}&
		\includegraphics[width=0.120\linewidth]{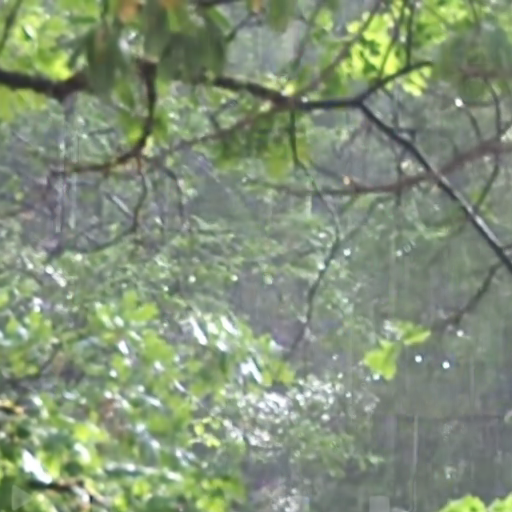}&
		\includegraphics[width=0.120\linewidth]{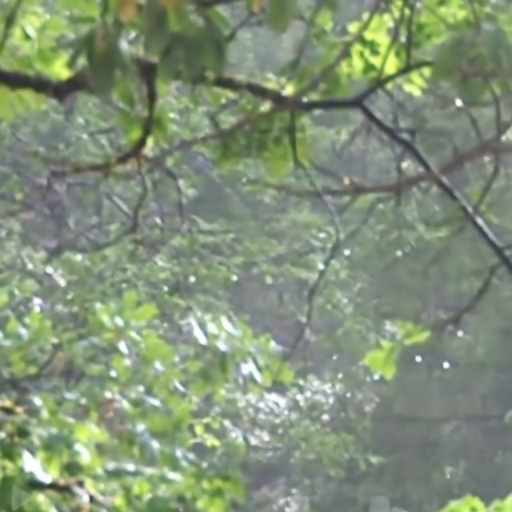}\\
		\includegraphics[width=0.120\linewidth]{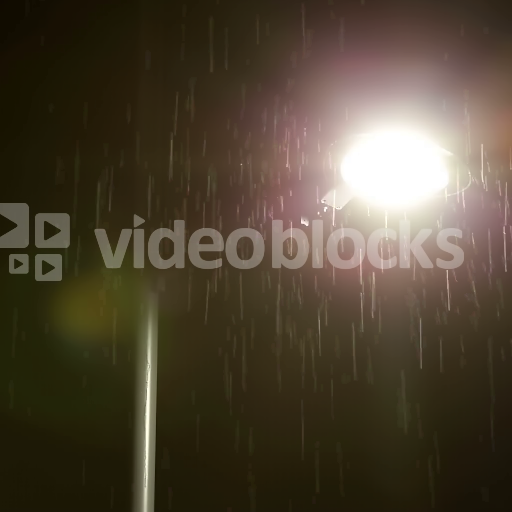}&
		\includegraphics[width=0.120\linewidth]{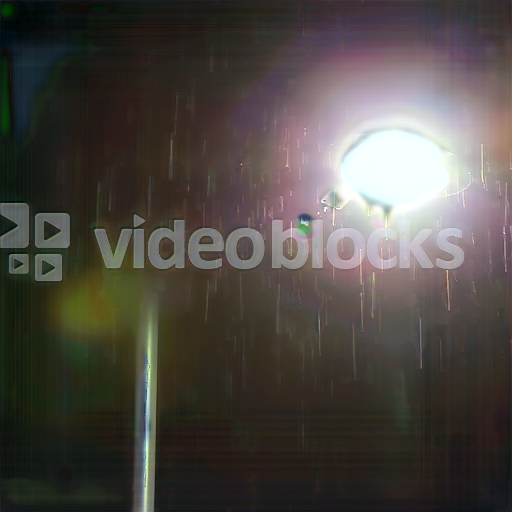}&
		\includegraphics[width=0.120\linewidth]{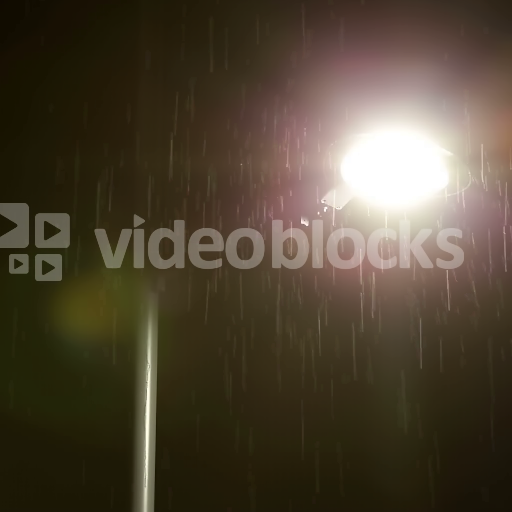}&
		\includegraphics[width=0.120\linewidth]{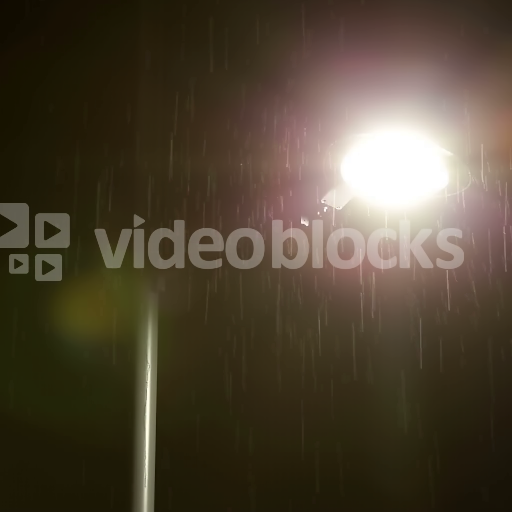}&
		\includegraphics[width=0.120\linewidth]{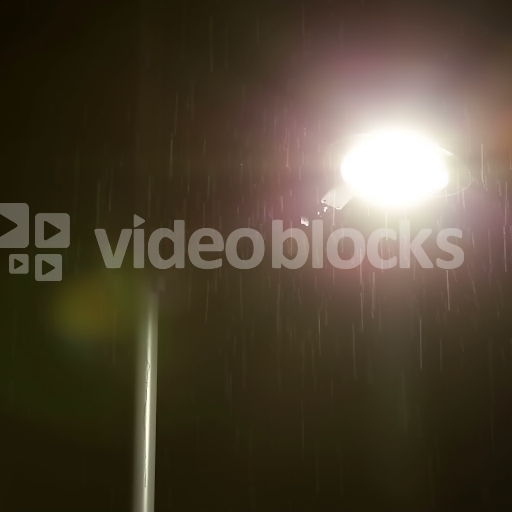}&
		\includegraphics[width=0.120\linewidth]{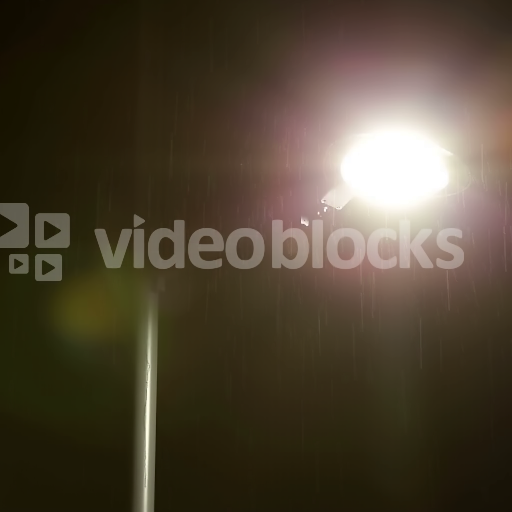}&
		\includegraphics[width=0.120\linewidth]{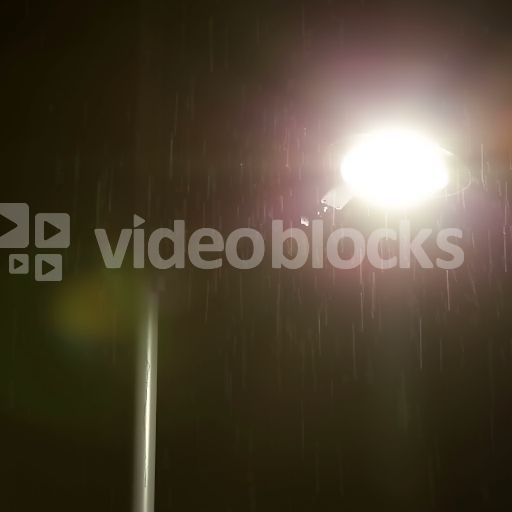}&
		\includegraphics[width=0.120\linewidth]{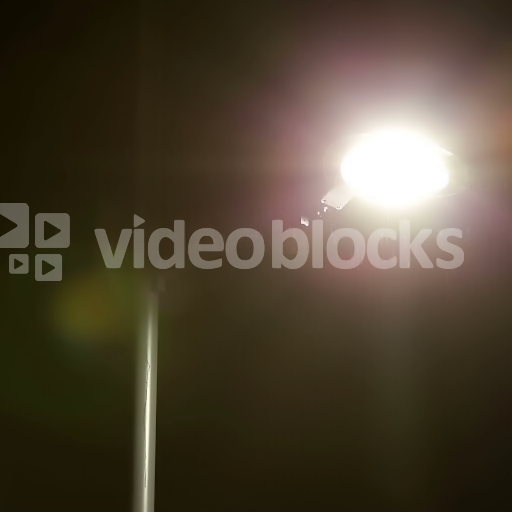}\\
		\includegraphics[width=0.120\linewidth]{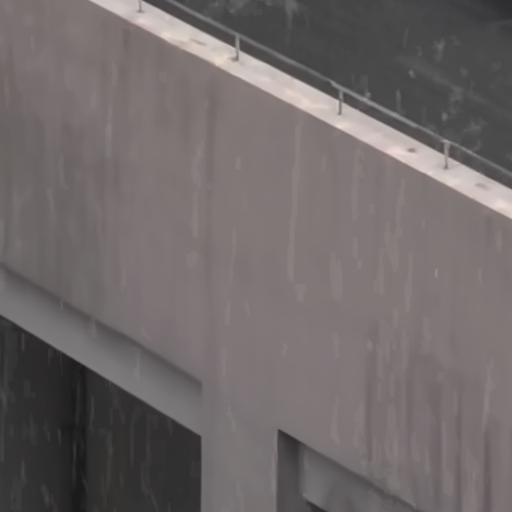}&
		\includegraphics[width=0.120\linewidth]{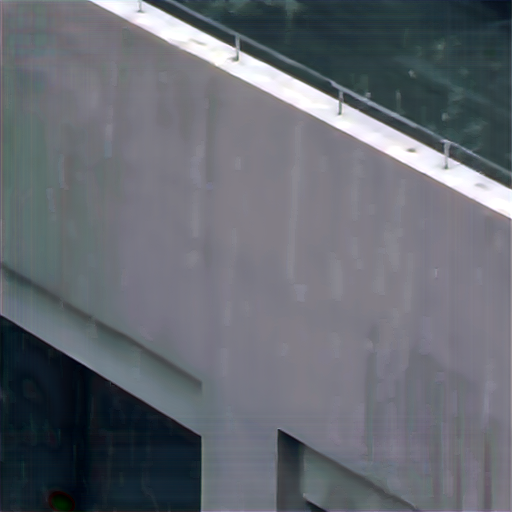}&
		\includegraphics[width=0.120\linewidth]{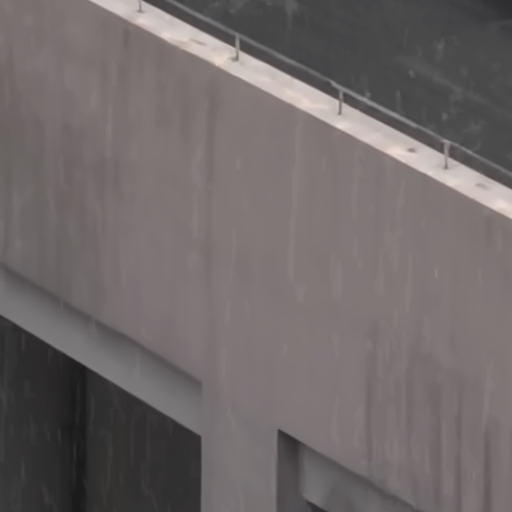}&
		\includegraphics[width=0.120\linewidth]{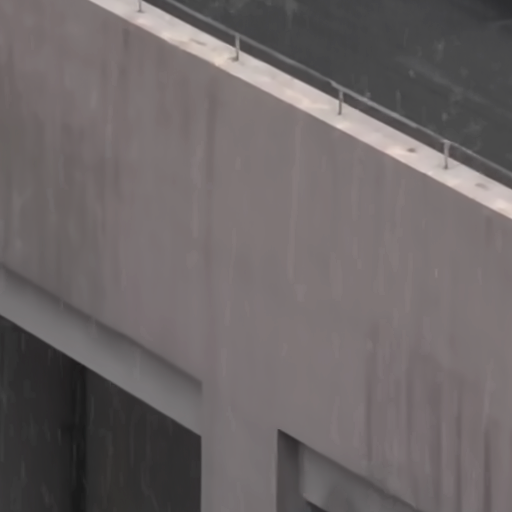}&
		\includegraphics[width=0.120\linewidth]{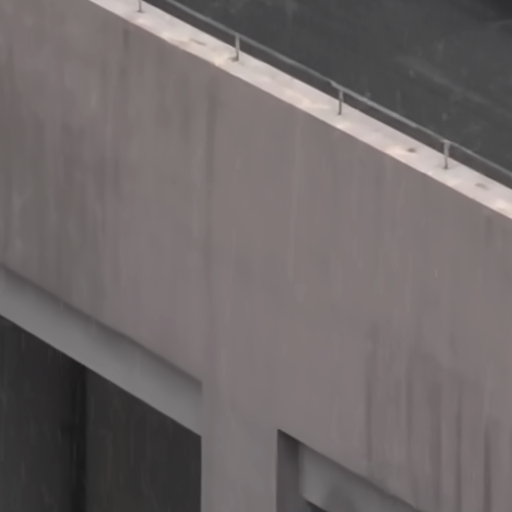}&
		\includegraphics[width=0.120\linewidth]{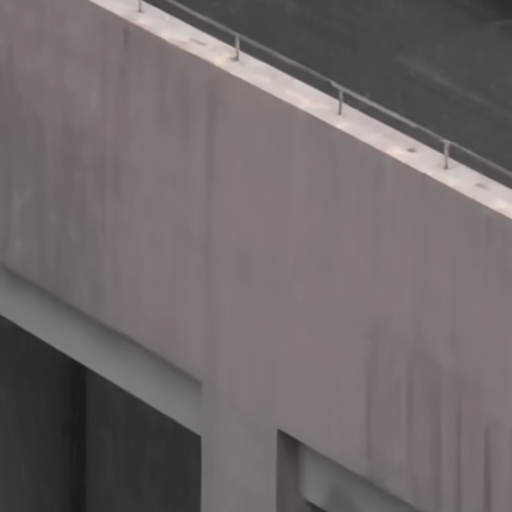}&
		\includegraphics[width=0.120\linewidth]{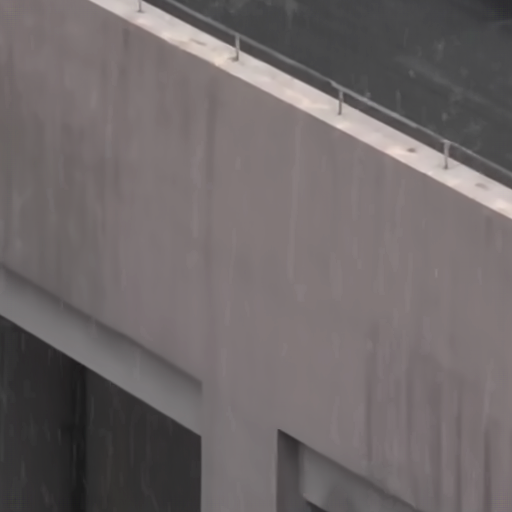}&
		\includegraphics[width=0.120\linewidth]{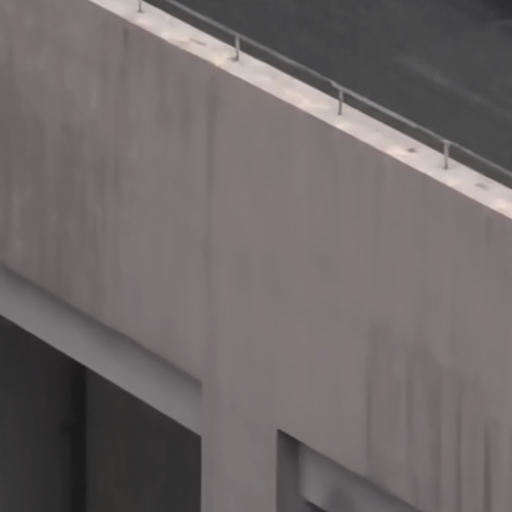}\\
		\includegraphics[width=0.120\linewidth]{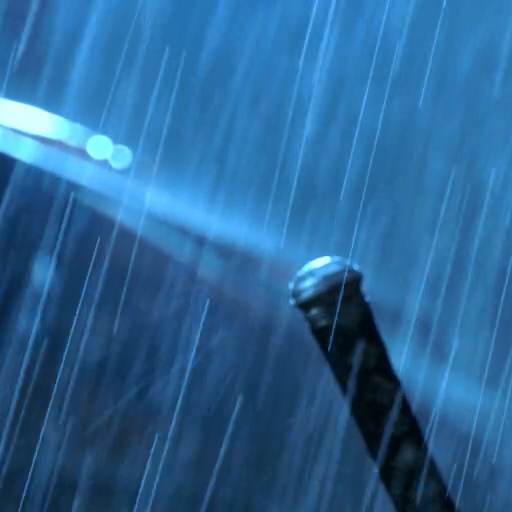}&
		\includegraphics[width=0.120\linewidth]{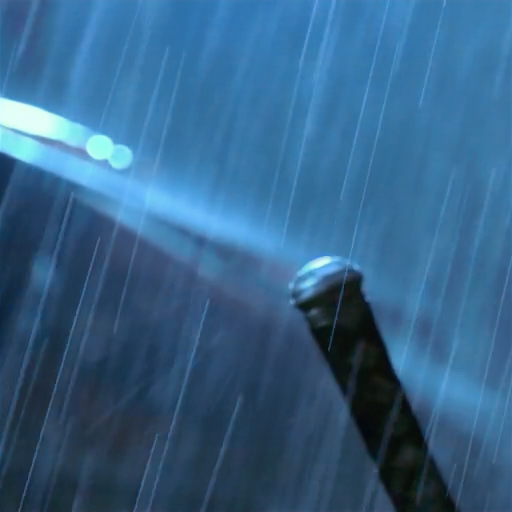}&
		\includegraphics[width=0.120\linewidth]{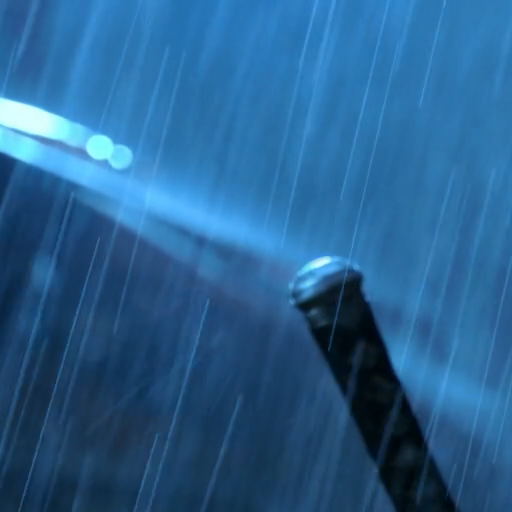}&
		\includegraphics[width=0.120\linewidth]{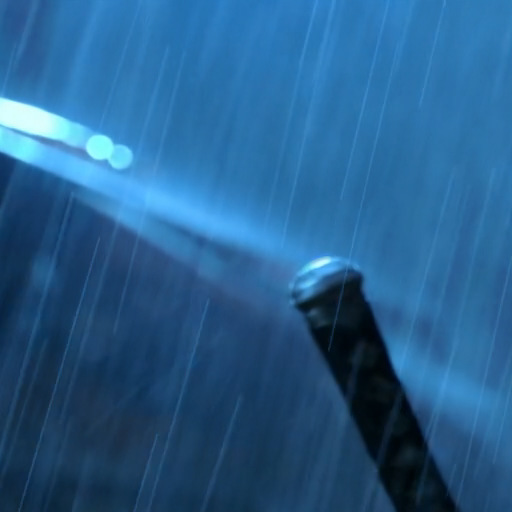}&
		\includegraphics[width=0.120\linewidth]{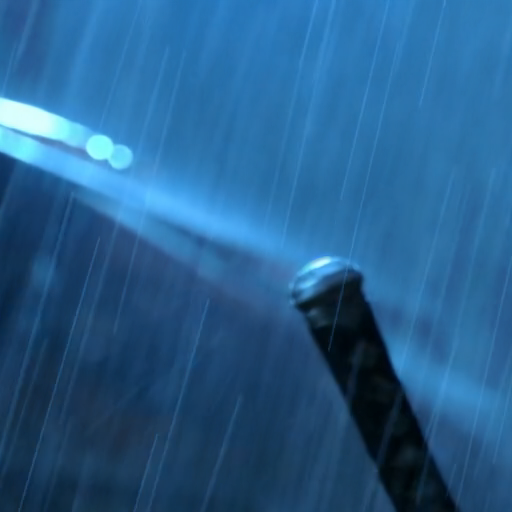}&
		\includegraphics[width=0.120\linewidth]{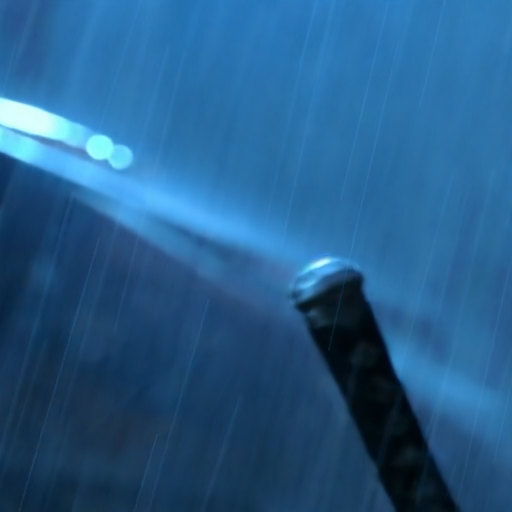}&
		\includegraphics[width=0.120\linewidth]{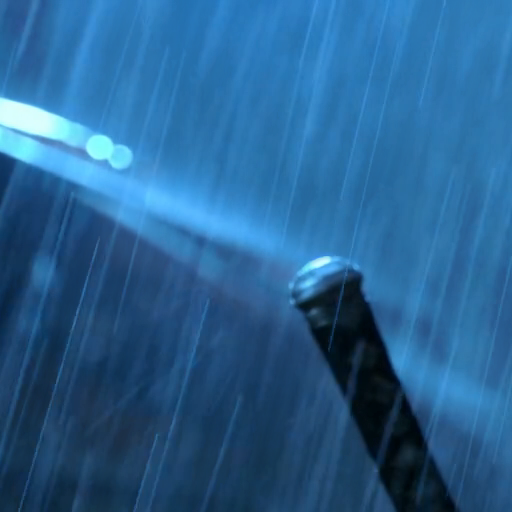}&
		\includegraphics[width=0.120\linewidth]{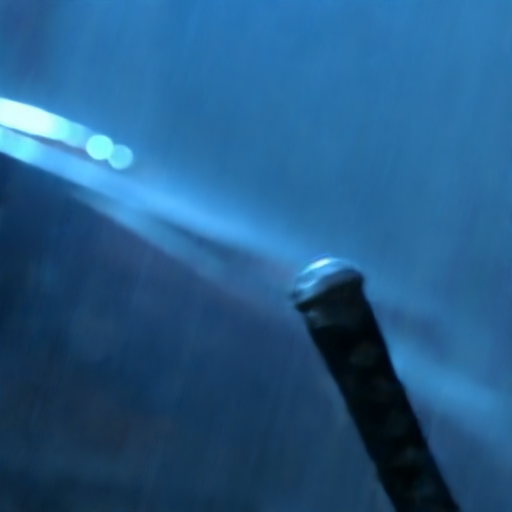}\\
		Rainy&NOT&OTUR&MPRNet&Restormer&IR-SDE&PromptIR&RCOT\\
	\end{tabular}
	\caption{Visual comparison of deraining on the real dataset SPANet \cite{Wang_2019_CVPR}. The RCOT reproduces rain-free images with better structural details.}
	\label{vrealrain}
 \vspace{-0.3cm}
\end{figure*} 
\subsection{Additional Visual Comparison on Real Hazy Dataset}
\begin{figure*}[!h]
	\setlength\tabcolsep{1pt}
	\renewcommand{\arraystretch}{0.5} 
	\centering
	\begin{tabular}{cccccccc}
		\includegraphics[width=0.120\linewidth]{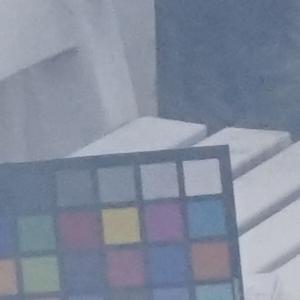}&
		\includegraphics[width=0.120\linewidth]{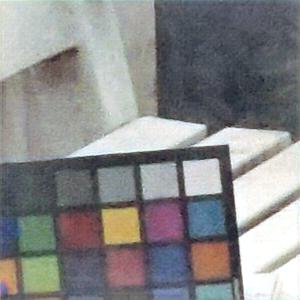}&
		\includegraphics[width=0.120\linewidth]{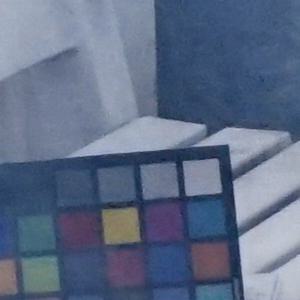}&
		\includegraphics[width=0.120\linewidth]{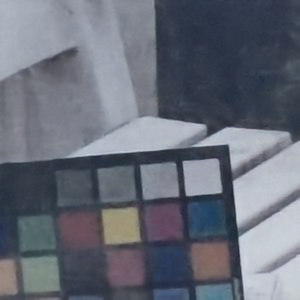}&
		\includegraphics[width=0.120\linewidth]{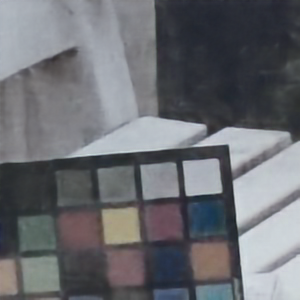}&
		\includegraphics[width=0.120\linewidth]{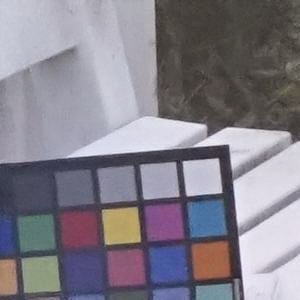}&
		\includegraphics[width=0.120\linewidth]{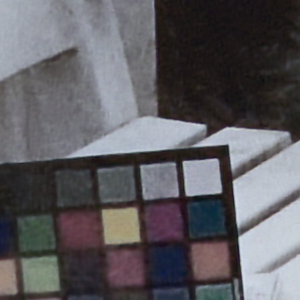}&
		\includegraphics[width=0.120\linewidth]{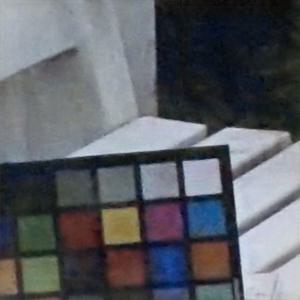}\\
		\includegraphics[width=0.120\linewidth]{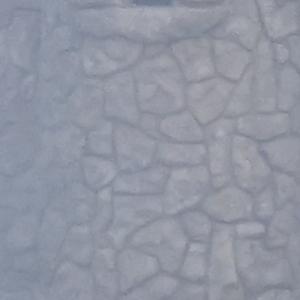}&
		\includegraphics[width=0.120\linewidth]{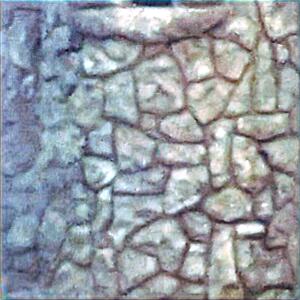}&
		\includegraphics[width=0.120\linewidth]{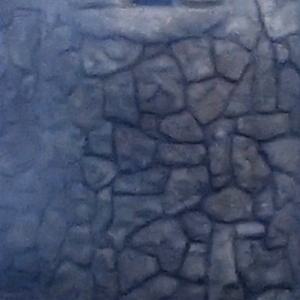}&
		\includegraphics[width=0.120\linewidth]{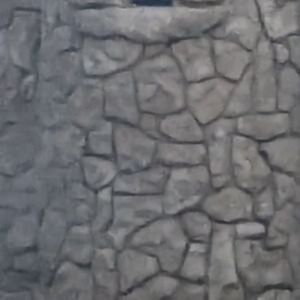}&
		\includegraphics[width=0.120\linewidth]{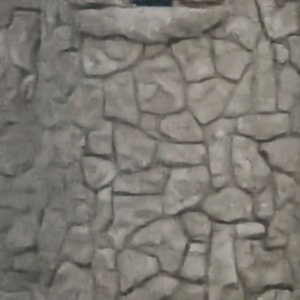}&
		\includegraphics[width=0.120\linewidth]{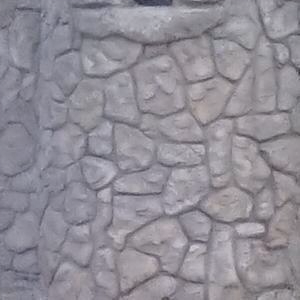}&
		\includegraphics[width=0.120\linewidth]{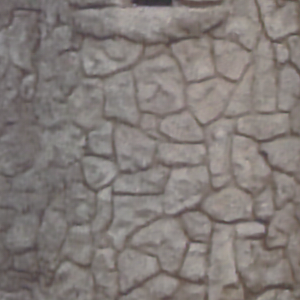}&
		\includegraphics[width=0.120\linewidth]{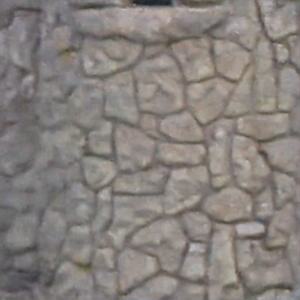}\\
		\includegraphics[width=0.120\linewidth]{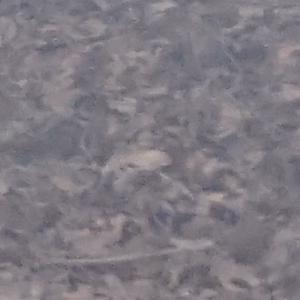}&
		\includegraphics[width=0.120\linewidth]{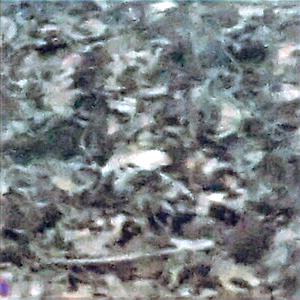}&
		\includegraphics[width=0.120\linewidth]{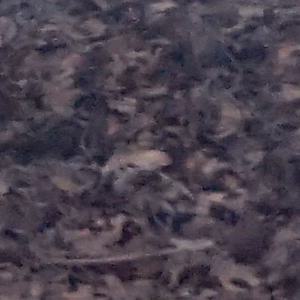}&
		\includegraphics[width=0.120\linewidth]{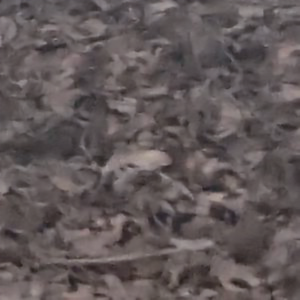}&
		\includegraphics[width=0.120\linewidth]{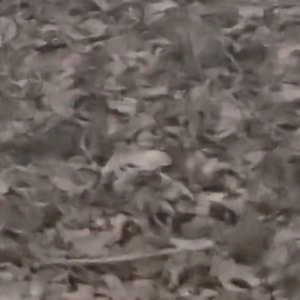}&
		\includegraphics[width=0.120\linewidth]{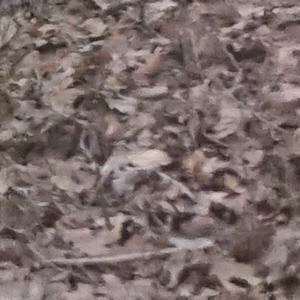}&
		\includegraphics[width=0.120\linewidth]{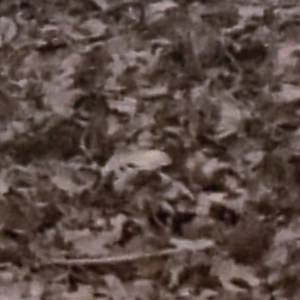}&
		\includegraphics[width=0.120\linewidth]{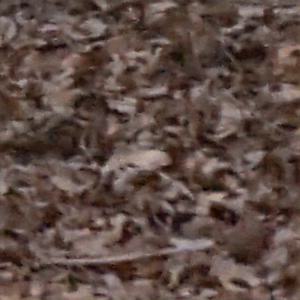}\\
		Hazy&NOT&OTUR&MPRNet&Restormer&IR-SDE&PromptIR&RCOT\\
	\end{tabular}
	\caption{Visual comparison of dehazing on the real dataset O-HAZE \cite{ancuti2018haze}. The RCOT reproduces haze-free images with more faithful color.}
	\label{vrealhaze}
\end{figure*} 

\label{comh}

\subsection{More visual examples}
\label{more}
Figures \ref{anoisy}, \ref{arain}, and \ref{asr} display more visual results of the compared methods. 
\begin{figure*}[!h]
	\setlength\tabcolsep{1pt}
	\renewcommand{\arraystretch}{0.5} 
	\centering
	\begin{tabular}{cccccccc}
		\includegraphics[width=0.120\linewidth]{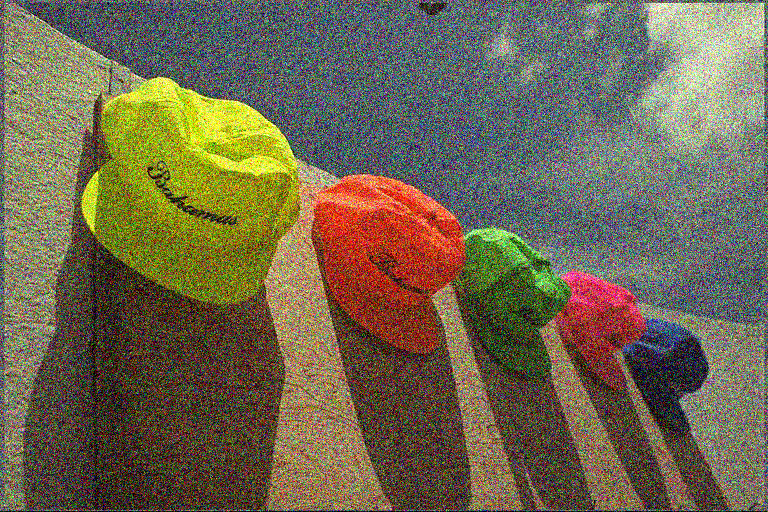}&
		\includegraphics[width=0.120\linewidth]{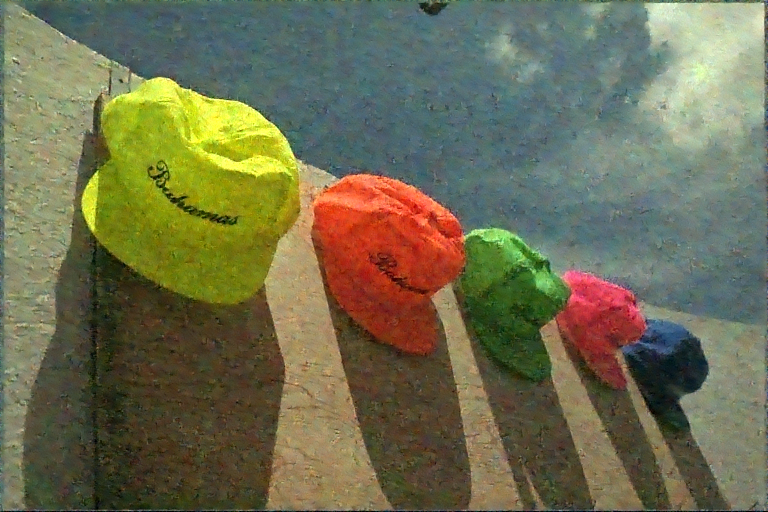}&
		\includegraphics[width=0.120\linewidth]{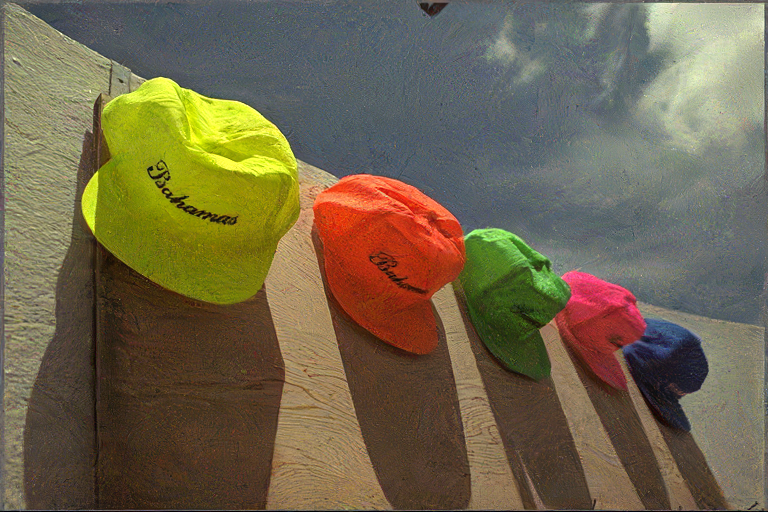}&
		\includegraphics[width=0.120\linewidth]{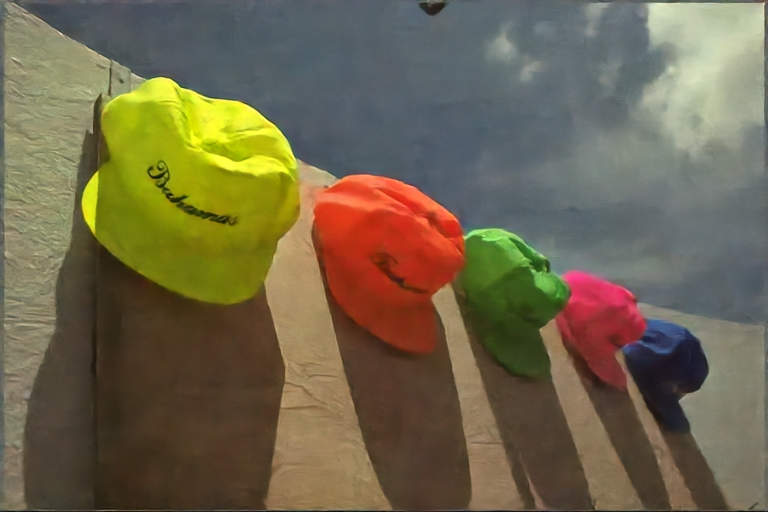}&
		\includegraphics[width=0.120\linewidth]{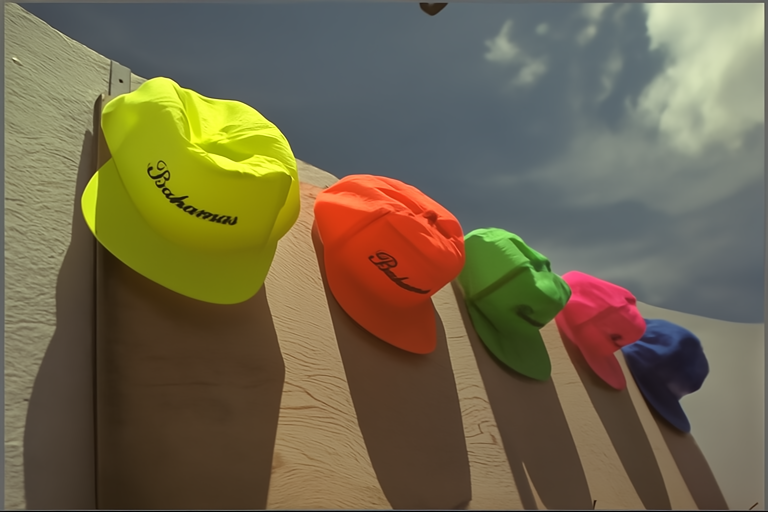}&
		\includegraphics[width=0.120\linewidth]{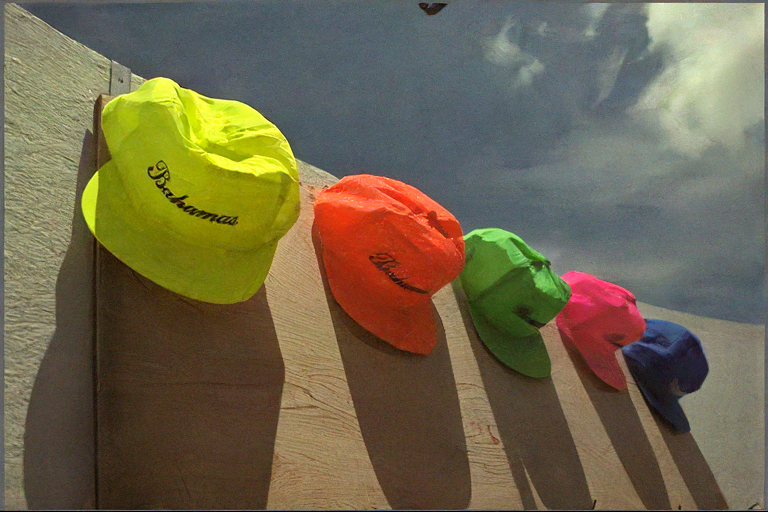}&
		\includegraphics[width=0.120\linewidth]{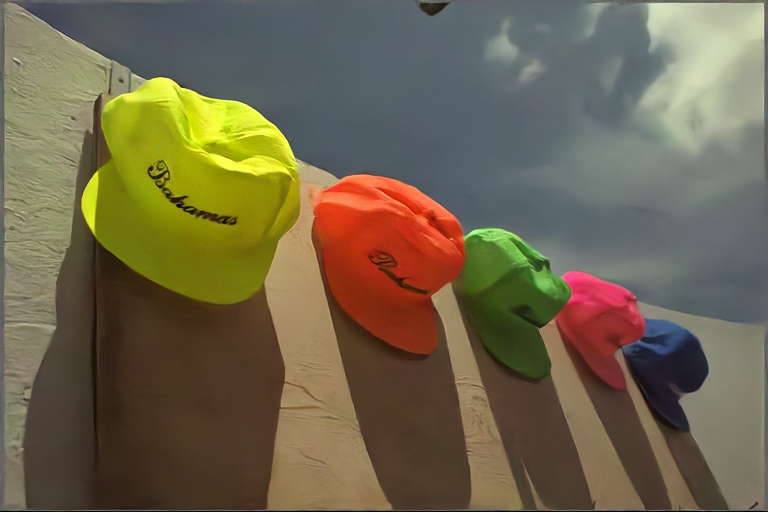}&
		\includegraphics[width=0.120\linewidth]{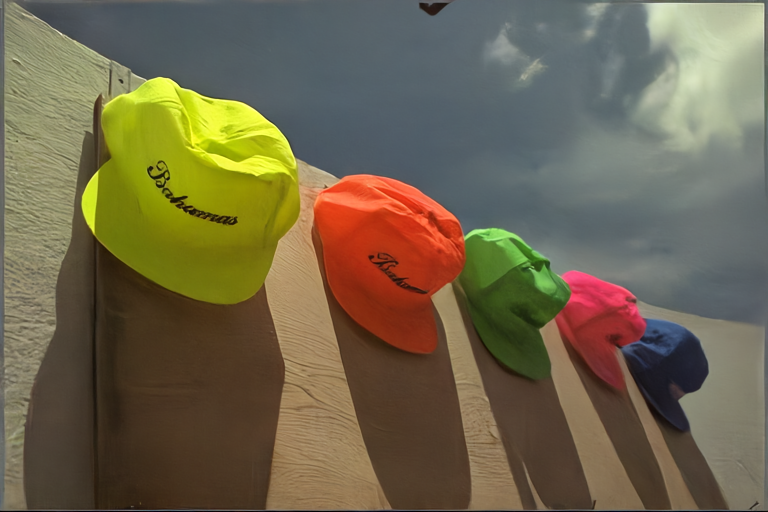}\\
14.75/0.271&26.82/0.836&28.88/0.846&30.70/0.946&32.04/0.960&30.18/0.903&31.89/0.959&32.41/0.963\\
		\includegraphics[width=0.120\linewidth]{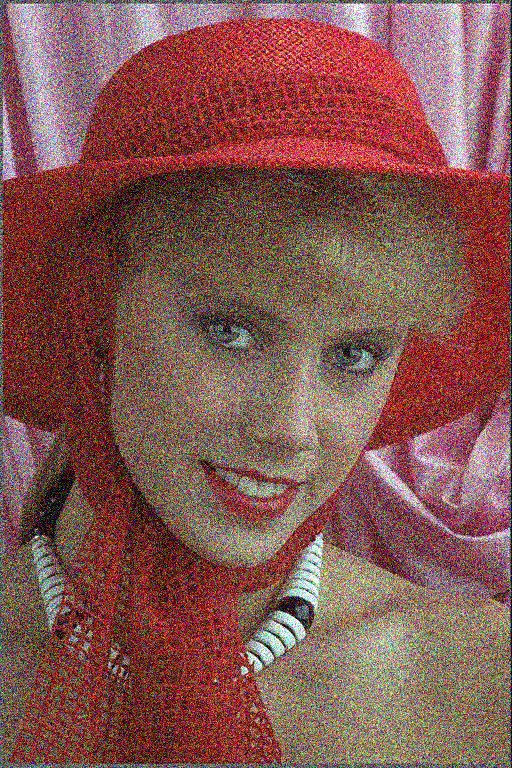}&
		\includegraphics[width=0.120\linewidth]{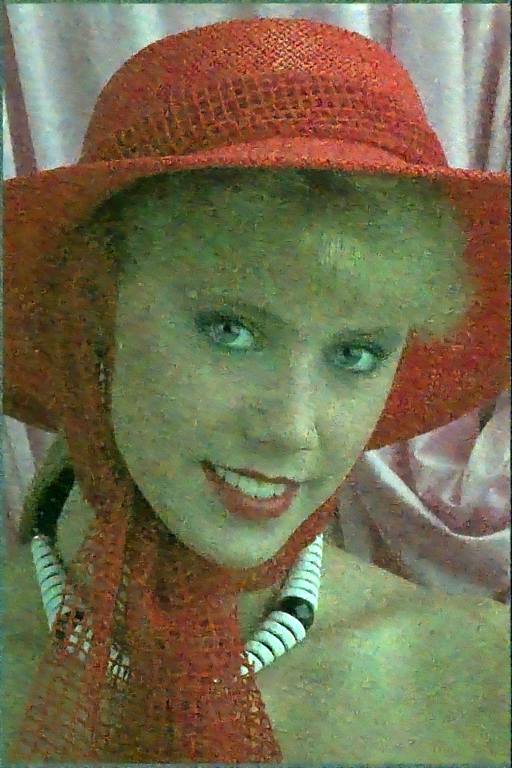}&
		\includegraphics[width=0.120\linewidth]{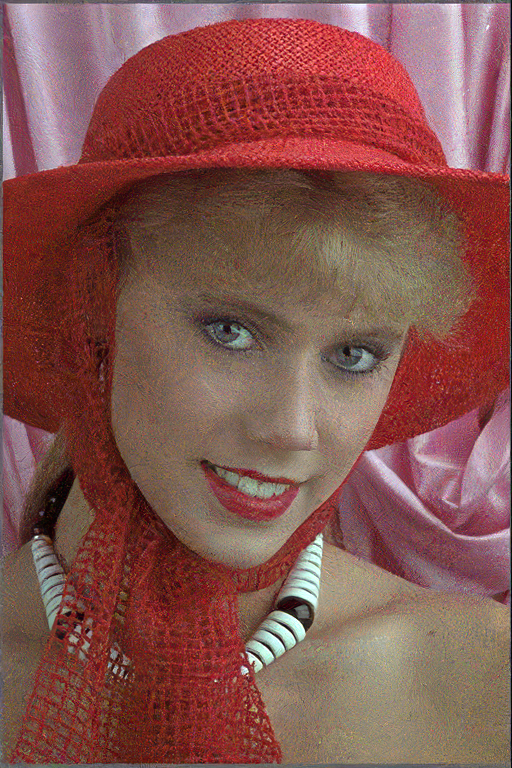}&
		\includegraphics[width=0.120\linewidth]{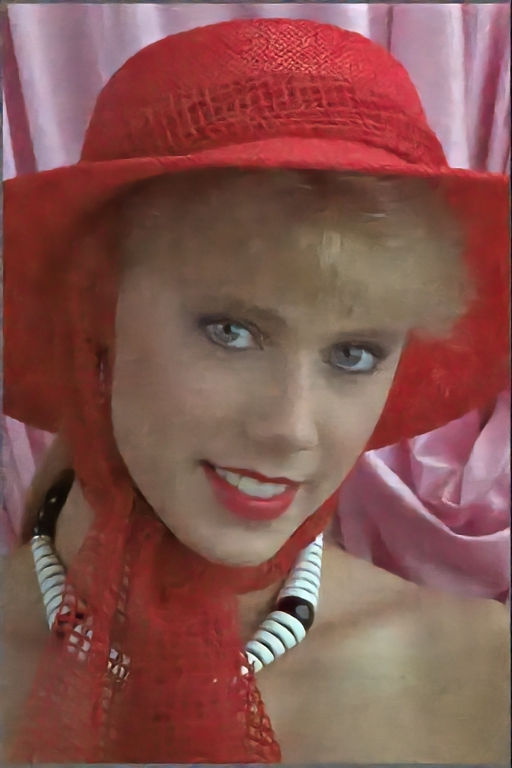}&
		\includegraphics[width=0.120\linewidth]{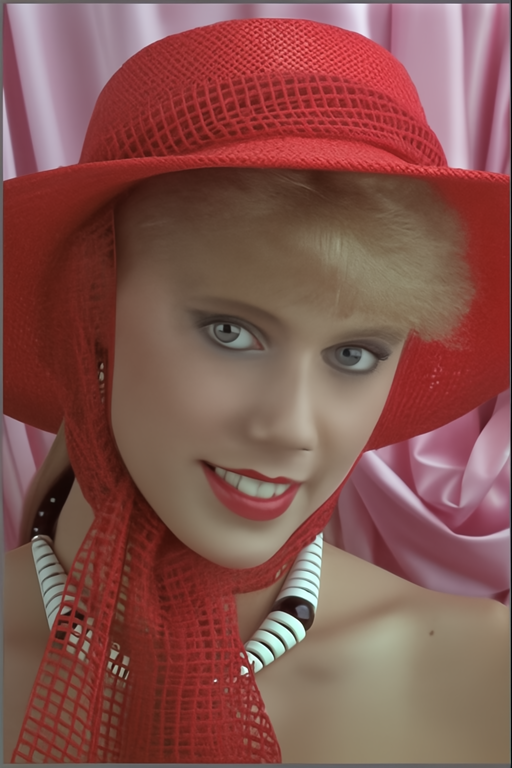}&
		\includegraphics[width=0.120\linewidth]{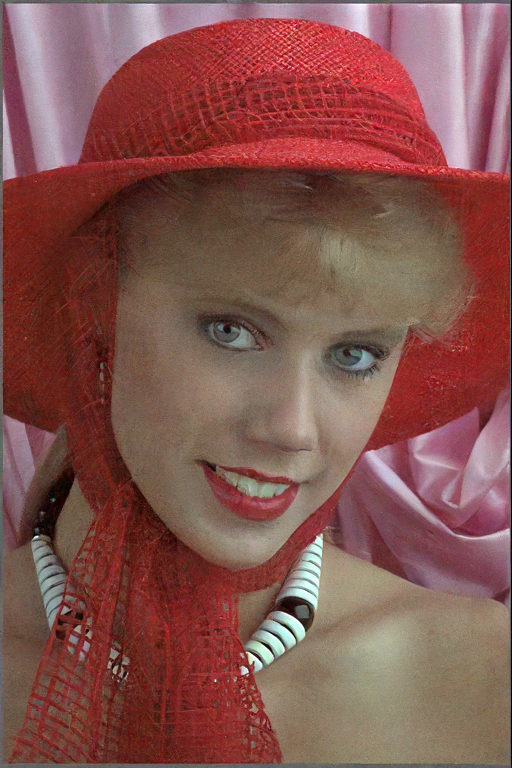}&
		\includegraphics[width=0.120\linewidth]{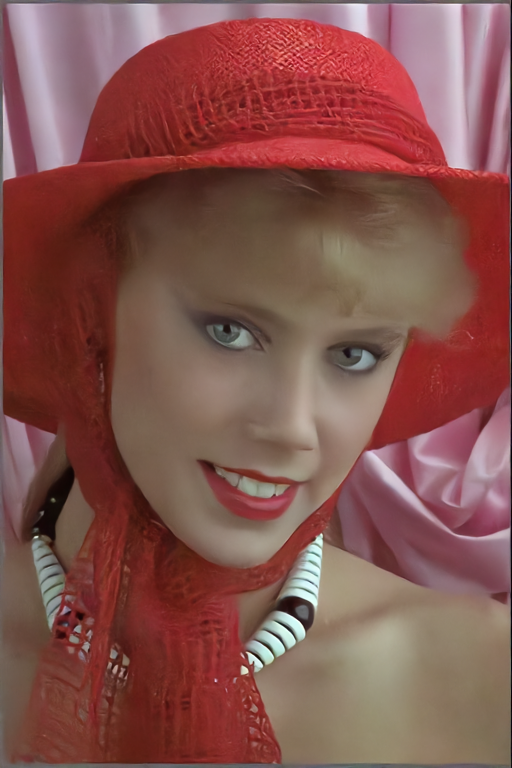}&
		\includegraphics[width=0.120\linewidth]{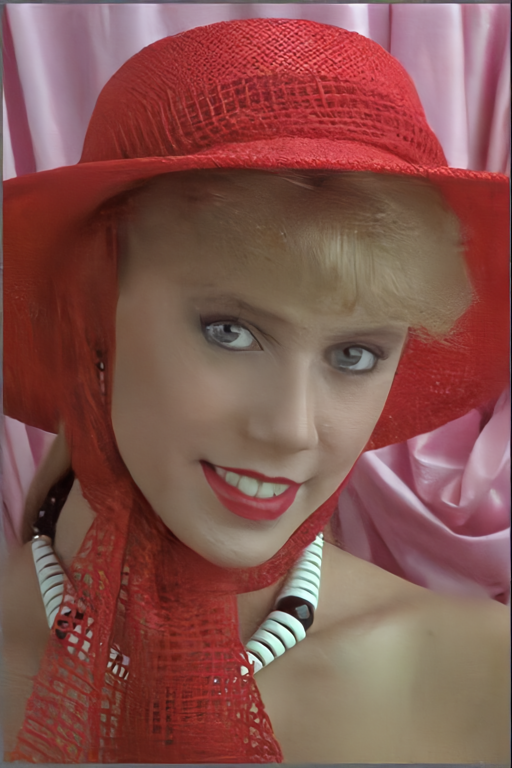}\\
14.77/0.364&21.97/0.608&25.29/0.840&29.26/0.955&30.22/0.964&28.73/0.938&30.18/0.965&30.61/0.967\\
		Noisy&NOT&OTUR&MPRNet&Restormer&IR-SDE&PromptIR&RCOT\\
	\end{tabular}
	\caption{Visual comparison of denoising on noise level of $\sigma=50$. Our RCOT produces results with better textural details.}
	\label{anoisy}
\end{figure*} 

\begin{figure*}[!h]
	\setlength\tabcolsep{1pt}
	\renewcommand{\arraystretch}{0.5} 
	\centering
	\begin{tabular}{cccccccc}
		\includegraphics[width=0.120\linewidth]{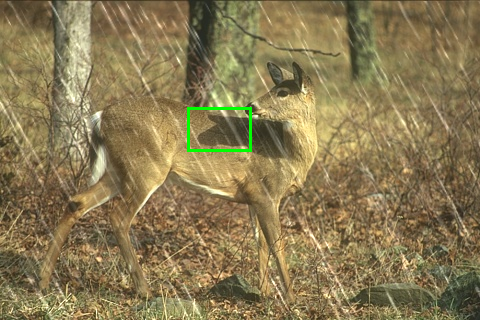}&
		\includegraphics[width=0.120\linewidth]{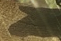}&
		\includegraphics[width=0.120\linewidth]{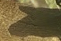}&
		\includegraphics[width=0.120\linewidth]{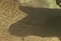}&
		\includegraphics[width=0.120\linewidth]{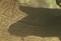}&
		\includegraphics[width=0.120\linewidth]{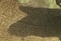}&
		\includegraphics[width=0.120\linewidth]{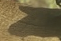}&
		\includegraphics[width=0.120\linewidth]{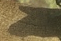}\\
            25.56/0.949&28.56/0.897&29.70/0.968&32.91/0.987&33.19/0.988&33.71/0.989&33.22/0.988&\textbf{34.20/0.990}\\
		\includegraphics[width=0.120\linewidth]{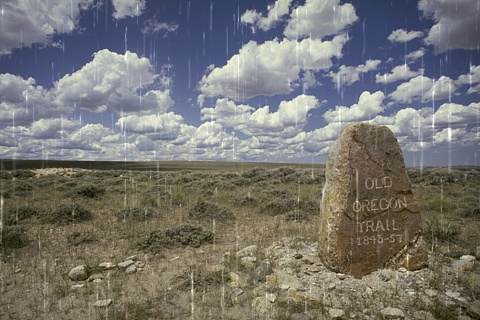}&
		\includegraphics[width=0.120\linewidth]{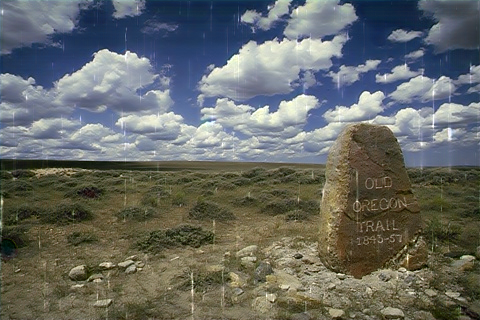}&
		\includegraphics[width=0.120\linewidth]{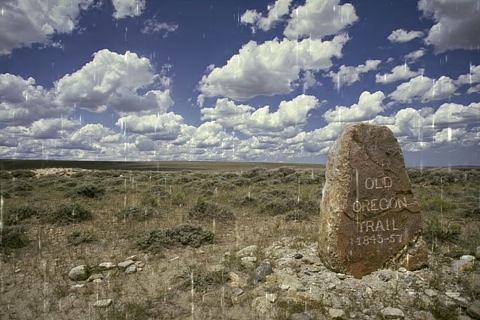}&
		\includegraphics[width=0.120\linewidth]{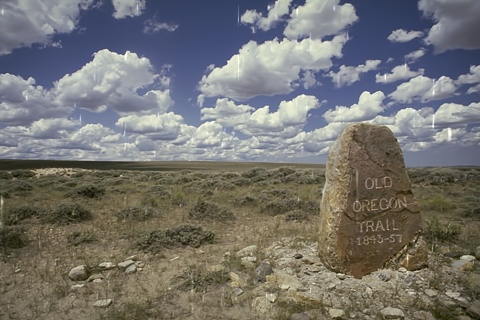}&
		\includegraphics[width=0.120\linewidth]{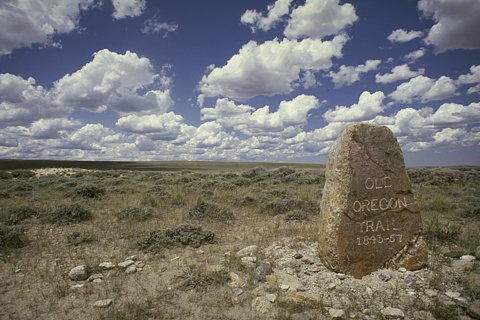}&
		\includegraphics[width=0.120\linewidth]{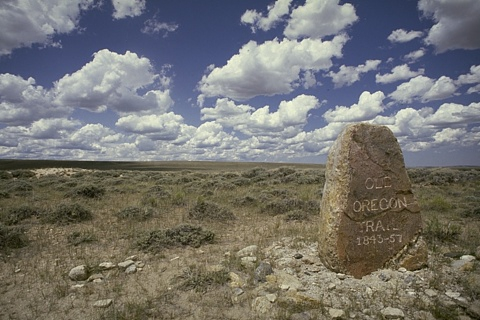}&
		\includegraphics[width=0.120\linewidth]{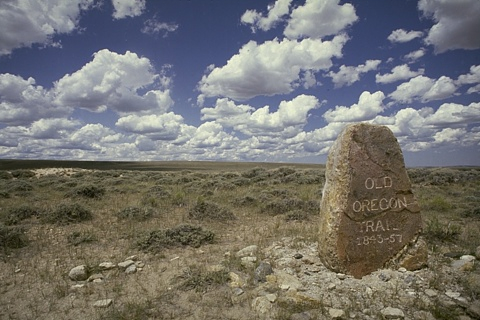}&
		\includegraphics[width=0.120\linewidth]{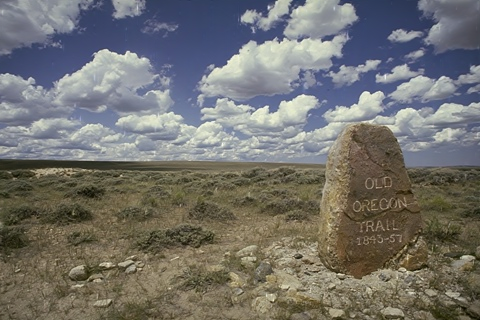}\\
		31.23/0.967&27.18/0.939&31.71/0.971&34.96/0.981&35.20/0.986&36.12/0.988&35.56/0.985&37.12/0.988\\
		Rainy&NOT&OTUR&MPRNet&Restormer&IR-SDE&PromptIR&RCOT\\
	\end{tabular}
	\caption{Visual comparison of deraining.  Our RCOT produces results with better structural details.}
	\label{arain}
\end{figure*} 
\begin{figure*}[!h]
	\setlength\tabcolsep{1pt}
	\renewcommand{\arraystretch}{0.5} 
	\centering
	\begin{tabular}{ccccccc}
            \includegraphics[width=0.135\linewidth]{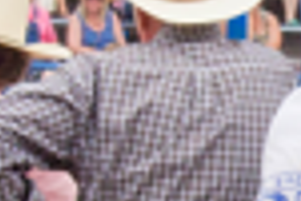}&
			\includegraphics[width=0.135\linewidth]{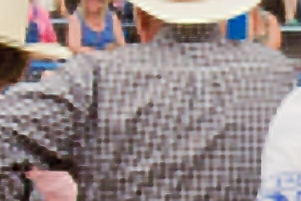}&
			\includegraphics[width=0.135\linewidth]{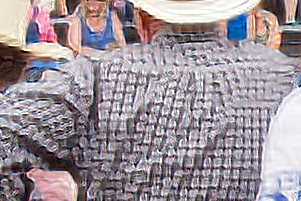}&			\includegraphics[width=0.135\linewidth]{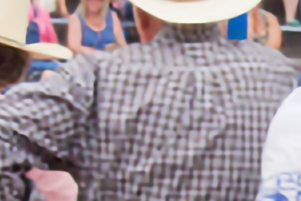}&
			\includegraphics[width=0.135\linewidth]{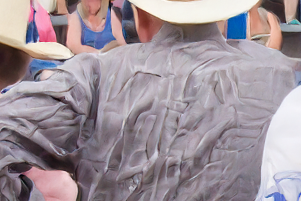}&
			\includegraphics[width=0.135\linewidth]{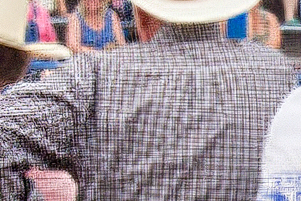}&
			\includegraphics[width=0.135\linewidth]{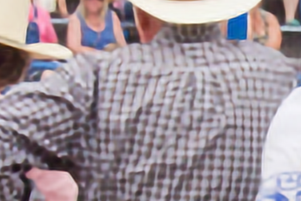}\\
			20.20/0.543&19.29/0.550&19.08/0.493&19.86/0.588&19.57/0.538&19.80/0.570&20.75/0.621\\
		\includegraphics[width=0.135\linewidth]{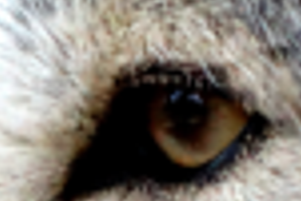}&
		\includegraphics[width=0.135\linewidth]{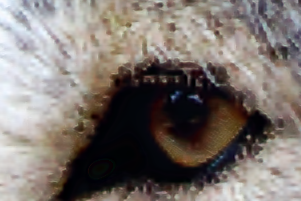}&
		\includegraphics[width=0.135\linewidth]{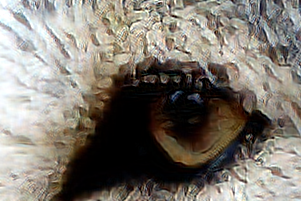}&
		\includegraphics[width=0.135\linewidth]{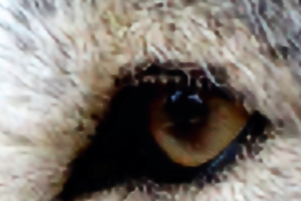}&
		\includegraphics[width=0.135\linewidth]{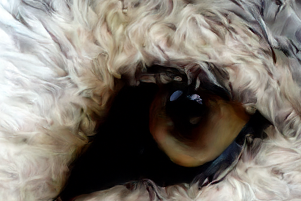}&
		\includegraphics[width=0.135\linewidth]{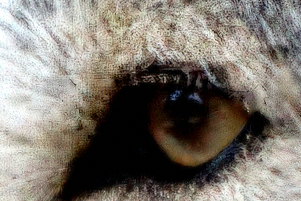}&
		\includegraphics[width=0.135\linewidth]{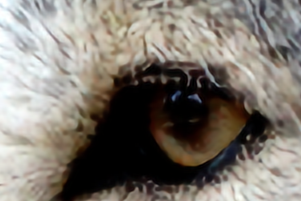}\\
		24.86/0.743&23.12/0.663&23.06/0.595&24.99/0.749&23.96/0.711&23.82/0.727&25.62/0.793\\
		LR& NOT&OTUR&Restormer&IDM&IR-SDE&RCOT\\		
	\end{tabular}
	\caption{Visual comparison of 4x SR on the DIV2K \cite{agustsson2017ntire} dataset. RCOT produces the sharpest HR images with more realistic structures.}
	\label{asr}
\end{figure*}
\newpage
\section{Visual examples of Transport Residuals}
\label{res}
This section exhibits some visual examples of the transport residual $\hat r(T_\theta)$. We can observe that different degradations have unique degradation-specific transport residuals, which coincides with our motivation to utilize the residual embedding to encode the degradation-specific knowledge (e.g., degradation type and level).
\begin{figure}[!h]
	\centering
	\includegraphics[width=0.8\linewidth]{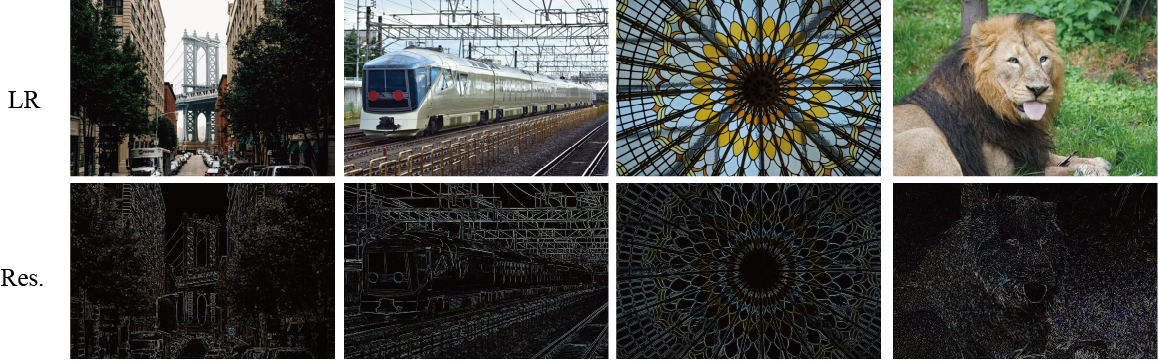}
	\caption{ Visual examples of the estimated transport residuals $\hat r(T_\theta)$ for \textbf{SR} task.  }
\end{figure}
\begin{figure}[!h]
	\centering
	\includegraphics[width=0.8\linewidth]{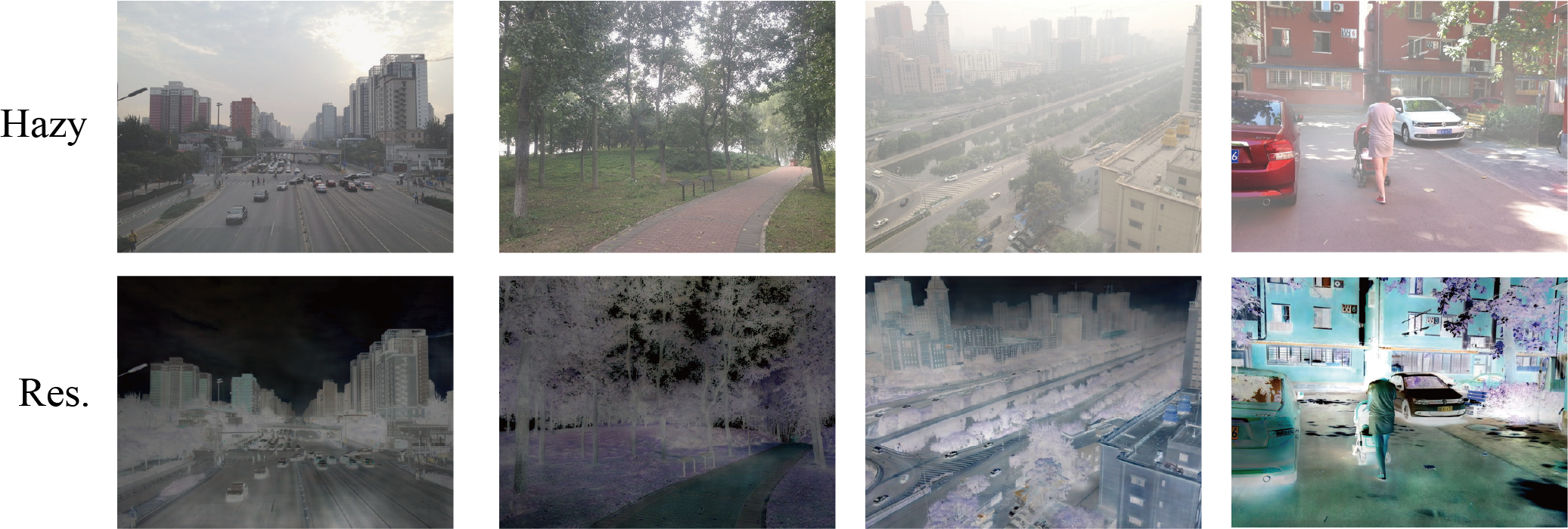}
	\caption{ Visual examples of the estimated transport residuals $\hat r(T_\theta)$ for \textbf{dehazing} task.}
\end{figure}

\begin{figure}[!h]
	\centering
	\includegraphics[width=0.8\linewidth]{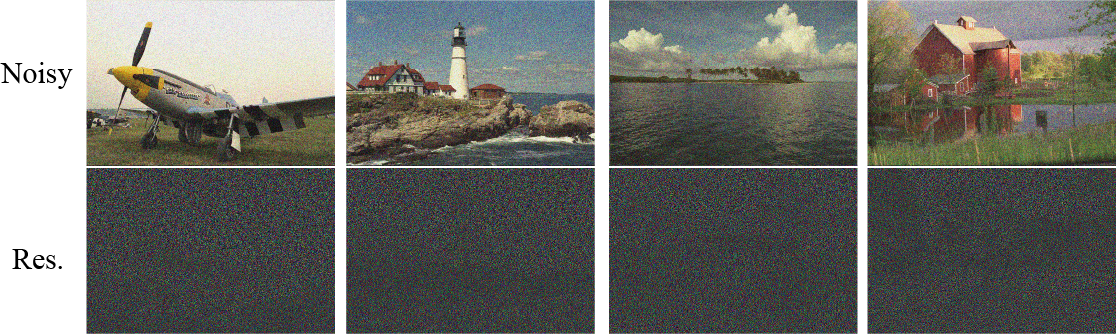}
	\caption{ Visual examples of the estimated transport residuals $\hat r(T_\theta)$ for \textbf{denoising} task.}
\end{figure}

\begin{figure}[!h]
	\centering
	\includegraphics[width=0.8\linewidth]{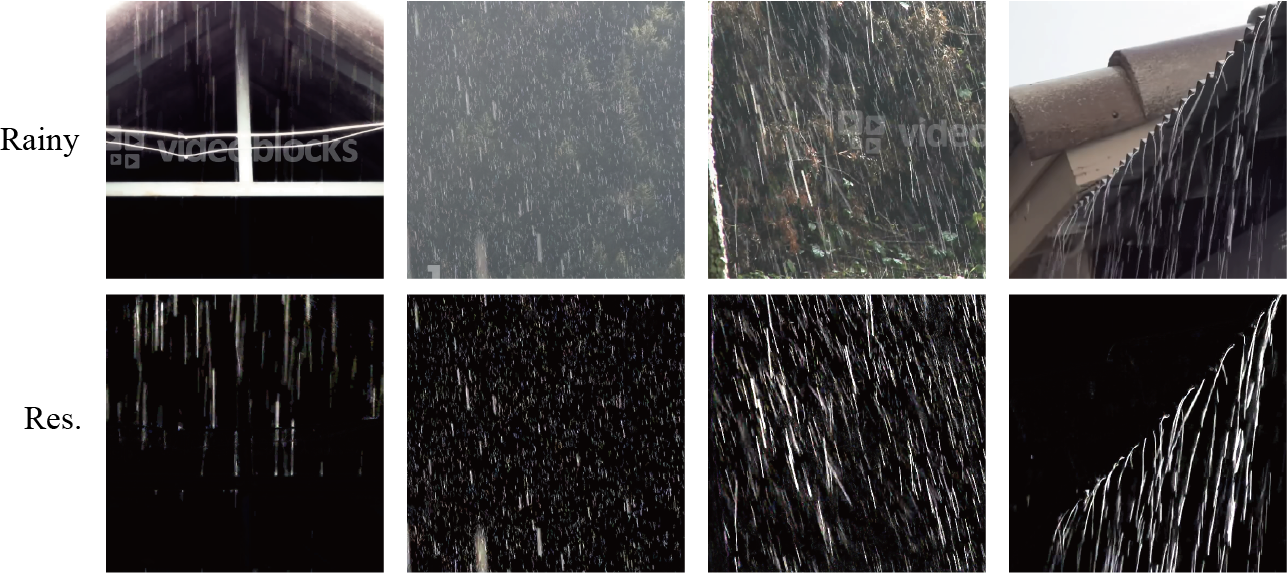}
	\caption{ Visual examples of the estimated transport residuals $\hat r(T_\theta)$  for \textbf{draining} task.}
\end{figure}

\end{document}